\icmltitlerunning{Regularized Risk Minimization by Nesterov's Accelerated Gradient Methods}
\newtheorem{assumption}[theorem]{Assumption}
\newcommand{\newfootnotemark}[1]{\addtocounter{footnote}{#1} \footnotemark[\value{footnote}]}
\newcommand{\newfootnotetext}[2]{\addtocounter{footnote}{#1} \footnotetext[\value{footnote}]{#2}}
\begin{document}

\twocolumn[
\icmltitle{Regularized Risk Minimization by Nesterov's Accelerated Gradient Methods: Algorithmic Extensions and Empirical Studies}

\icmlauthor{Xinhua Zhang}{xinhua.zhang.cs@gmail.com}
\icmladdress{Department of Computing Science, University of Alberta, Edmonton, AB T6G 2E8, Canada}
\icmlauthor{Ankan Saha}{ankans@cs.uchicago.edu}
\icmladdress{Department of Computer Science, University of Chicago, Chicago, IL 60637, USA}
\icmlauthor{S.V.N. Vishwanathan}{vishy@stat.purdue.edu}
\icmladdress{Department of Statistics and Department of Computer Science, Purdue University, IN 47906, USA}

\icmlkeywords{Convex optimization, Nesterov's methods, max-margin models, Bregman divergence}

\vskip 0.3in
]

\begin{abstract}
Nesterov's accelerated gradient methods (AGM) have been successfully applied in many machine learning areas.  However, their empirical performance on training max-margin models has been inferior to existing specialized solvers.  In this paper, we first extend AGM to strongly convex and composite objective functions with Bregman style prox-functions.  Our unifying framework covers both the $\infty$-memory and 1-memory styles of AGM, tunes the Lipschiz constant adaptively, and bounds the duality gap.  Then we demonstrate various ways to apply this framework of methods to a wide range of machine learning problems.  Emphasis will be given on their rate of convergence and how to efficiently compute the gradient and optimize the models.  The experimental results show that with our extensions AGM outperforms state-of-the-art solvers on max-margin models.
\end{abstract}

\section{Introduction}

There has been an explosion of interest in machine learning over the
past decade, much of which has been fueled by the phenomenal success of
binary Support Vector Machines (SVMs). Driven by numerous applications,
recently, there has been increasing interest in support vector learning
with linear models.  At the heart of SVMs is the following regularized
risk minimization (RRM) problem:

\begin{align}
  \label{eq:reg-risk}
  \min_{\wb} J(\wb) &:= \underbrace{\lambda \Omega(\wvec)}_{\text{regularizer}} +
  \underbrace{R_{\emp}(\wb)}_{\text{empirical risk}} \\
  \text{with} \quad \Omega(\wvec) &:= \frac{1}{2} \nbr{\wvec}_2^2 \\
   R_{\emp}(\wb) &:= \frac{1}{n} \max_{b \in \RR} \sum_{i=1}^{n} [1 -
  y_{i} (\inner{\wb}{\xb_{i}}+b)]_+,
\end{align}
where $[x]_+ = x$ if $x \ge 0$ and 0 otherwise.  Here we assume access to a training set of $n$ labeled examples $\{(\xb_i, y_i)\}_{i=1}^{n}$ where $\xvec_i \in \RR^p$ and $y_i \in \cbr{-1,+1}$, and use the half square Euclidean norm $\nbr{\wb}_2^{2} = \sum_i w_i^2$ as the regularizer. The parameter $\lambda$ controls the trade-off between the empirical risk and the regularizer.

There has been significant research devoted to developing specialized optimizers which minimize $J(\wb)$ efficiently.  \citet{ZhaSahVis10a} proved that cutting plane and bundle methods may require at least $O(np/\epsilon)$ computational efforts to find an $\epsilon$ accurate solution to \eqref{eq:reg-risk}, and they suggested using Nesterov's accelerated gradient method (AGM) which provably costs $O(np / \sqrt{\epsilon})$ time complexity.  In general, AGM takes $O(1/\sqrt{\epsilon})$ times of gradient query to find an $\epsilon$ accurate solution to
\begin{align}
\label{eq:obj_agm}
\min_{\xvec \in Q} f(\xvec),
\end{align}
where $f$ is convex and has $L$-Lipschitz continuous gradient ($L$-\lcg), and $Q$ is a closed convex set in the Euclidean space.  AGM is especially suitable for large scale optimization problems because each iteration it only requires the gradient of $f$.

Unfortunately, despite some successful application of AGM in learning sparse models \citep{Lu09,LiuCheYe09} and game playing \citep{GilSanSor08}, it does not compare favorably to existing specialized optimizers when applied to training large margin models \citep{ZhaSahVis09}.  It turns out that special structures exist in those problems, and to make full use of AGM, one must utilize the computational and statistical properties of the learning problem by properly reformulating the objectives and tailoring the optimizers accordingly.

To this end, our first contribution is to show that in both theory and practice smoothing $R_{\emp}(\wvec)$ as in \citep{Nesterov05} is advantageous to the primal-dual versions of AGM.  The dual of \eqref{eq:reg-risk} is
\begin{align}
  \label{eq:adjoint}
  \max_{\val} D(\alphab) &= \sum_i \alpha_i - \frac{1}{2 \lambda} \alphab^{\top} Y
  X^{\top} X Y \alphab, \\
  s.t. \ \ \alphavec \in Q_2 &:= \Big\{\alphab
  \in [0, n^{-1}]^n : \sum_i y_i \alpha_i = 0 \Big\}.
\end{align}
Comparing \eqref{eq:obj_agm} with \eqref{eq:reg-risk} and \eqref{eq:adjoint}, it seems more natural to apply AGM to \eqref{eq:adjoint} because it is smooth.  However in practice, most $\alpha_i$ at the optimum will be on the boundary of $[0, n^{-1}]$.  According to \citep{Platt99a}, such $\alpha_i$'s are easy to identify and so the corresponding entries in the gradient are wasted by AGM.  This structure of support vector is unique for max-margin models, which will also be manifested in our experiments (Section \ref{sec:experiment}).

In contrast, smoothing $R_{\emp}$ has a lot of advantages.  First, it directly optimizes in the primal $J$, avoiding the indirect translation from the dual solution to the primal.  Second, the resulting optimization problem is unconstrained.  If $\Omega$ is strongly convex, then linear convergence can be achieved.  Third, gradient of the smoothed $\Rtil_{\emp}$ can often be computed efficiently, and details will be given in Section \ref{sec:computation_trick}.  Fourth, the diameter of the dual space $Q_2$ often grows slowly with $n$, or even decreases.  This allows using a loose smoothing parameter.  Fifth, in practice most $\alpha_i$ at the optimum are 0, where $\Rtil_{\emp}$ best approximates $R_{\emp}$.  Therefore, the approximation is actually much tighter than the worst case theoretical bound, and a good solution for $\Rtil_{\emp}$ is more likely to optimize $R_{\emp}$ too.  Last but most important, the smoothed $\Rtil_{\emp}$ themselves are reasonable risk measures \citep{JorBarMcA06}, which also deliver good generalization performance in statistics.  Now that it is much easier to optimize the smoothed objectives, a model which generalizes well can be quickly obtained with the homotopy scheme (\ie\ anneal the smoothing parameter).

Using the same idea of smoothing $R_{\emp}$, AGM can be applied to a much wider variety of RRM problems by utilizing its composite structure.  Given a model $\psi$ of $\Rtil$, if $\Omega(\wvec) + \psi(\wvec)$ can be solved efficiently, then \citep{Tseng08} showed that $\Omega(\wvec) + \Rtil(\wvec)$ can be solved in $O(1/\sqrt{\epsilon})$ steps, even if $\Omega$ is not differentiable, \eg\ $L_1$ norm \cite{BecTeb09}.  Similar approach is applied to the $L_{1,\infty}$ regularizer and the elastic net \citep{ZouHas05} regularizer by \citep{MaiBacPonSap10}:
\begin{align}
  \label{eq:reg_elastic_net}
  \Omega(\wvec) = \frac{\gamma}{2} \nbr{\wvec}_2^2 + \nbr{\wvec}_1 = \frac{\gamma}{2} \sum_i w_i^2 + \sum_i |w_i|.
\end{align}
This $\Omega$ is strongly convex with respect to (wrt) the $L_2$ norm, and similarly in many RRM problems $\Omega$ is strongly convex wrt some norm $\nbr{\cdot}$.  For example, the relative entropy regularizer in boosting \citep{WarGloVis08}:
\begin{align}
  \label{eq:reg_entropy}
  \Omega(\wvec) = \sum_i w_i \log w_i
\end{align}
is strongly convex wrt $L_1$ norm, and the log determinant of a matrix in \citep{dAsBanElG08,JaiKulDhiGra09,KulBar10}:
\begin{align}
  \label{eq:mat_log_det}
  \Omega(W) = -\log \det W
\end{align}
is strongly convex wrt the Frobenius norm.  By exploiting the strong convexity, \citep{Nesterov07} accelerated the convergence rate from $O(1/\sqrt{\epsilon})$ to $O(\log \frac{1}{\epsilon})$.  However, the prox-function in this case must be strongly convex wrt $\nbr{\cdot}$ too.  Existing methods either ignore the strong convexity in $\Omega$ \citep{Tseng08}, or restrict the norm to $L_2$ \citep{Nesterov07,BecTeb09}.  As one major contribution of this paper, we extend AGM to exploit this strong convexity in the context of Bregman divergence.  In particular, we allow $\Omega$ to be strongly convex wrt a Bregman divergence induced by a smooth convex function $d$ (to be formalized later), where $d$ is in turn strongly convex wrt certain norm $\nbr{\cdot}$.  By using $d$ as a prox-function, we manage to achieve linear convergence for a wide range of RRM problems.

There are two types of first order methods that both achieve the optimal rate. The first type is the original AGM pioneered by Nesterov \citep{Nesterov83,Nesterov03a,Nesterov05,Nesterov05a,Nesterov07}, which uses a sequence of {\bf e}stimation {\bf f}unctions (hence we call it AGM-EF).  In particular, it uses the whole past iterates to progressively build a sequence of estimate functions which approximate the objective function.  The second type was developed by a number of other researchers and a unified treatment was given by \citep{Tseng08}.  Intuitively, it generalizes the idea of gradient descent by {\bf p}roximal {\bf r}egularization (hence we call it AGM-PR), which can be further accelerated by momentum.  Therefore, these two types of methods are different in concept.  In addition, both AGM-EF and AGM-PR a $\infty$-memory version which builds a model of the objective by using \emph{all} the past gradients, and a 1-memory version which approximates that model by a \emph{single} Bregman divergence.

We choose to base our extensions on AGM-EF, because compared with AGM-PR it provides much more flexibility in adaptively tuning $L$.\footnote{All APM-PR variants with adaptive $L$, \eg\ \citep{Tseng08,Nemirovski94,BecTeb09,PonTseJiYe10}, require the estimate of $L$ grow monotonically through iterations.  And their technique does not extend to asymmetric Bregman divergence.}  This is because the inductive relationship maintained by AGM-EF involves a single iteration, while that for AGM-PR involves two successive steps.  The novelty and generality of our method in the context of existing methods are summarized in Table \ref{tab:novelty_agmef}.  We further provide bounds on the duality gap which amounts to effective termination criteria.  As another important contribution, we derive \emph{linear convergence for the duality gap} in the context of strong convexity.  Computationally, at each iteration our method requires only one projection and one gradient evaluation within the feasible region.\footnote{Some AGM algorithms require two projections \citep{Nesterov05} or two gradients \citep{Nesterov07} per iteration, or evaluate the gradient outside the feasible region \citep[][Section 2.2.4]{Nesterov03a}.}

\begin{table}[t]
  \centering
  \begin{tabularx}{\linewidth}{cc|cc|cc}
  & & \multicolumn{2}{c|}{No composite} & \multicolumn{2}{c}{Composite} \\
  & & cvx & sc & cvx & sc \\
  \hline
 \multirow{2}{*}{Euclidean} & 1-memory & \citep{Nesterov03a} & \citep{Nesterov03a} & $\times$ & $\times$ \\
  & $\infty$-memory &  \citep{Nesterov05} & \citep{Nesterov07} &  \citep{Nesterov07} &  \citep{Nesterov07} \\
  \hline
 \multirow{2}{*}{Bregman} & 1-memory & \citep{AusTeb06} & $\times$ & $\times$  & $\times$  \\
  & $\infty$-memory &  \citep{Nesterov05} & $\times$ &  $\times$ &  $\times$
  \end{tabularx}
  \caption{Summary of AGM-EF.  ``sc" means strongly convex and ``cvx" means just convex. $\times$ means novel contribution of this paper.  AGM-PR can handle all but sc.}\label{tab:novelty_agmef}
\end{table}

\paragraph{Outline of the paper.}
In Section \ref{sec:preliminaries}, we follow \citep[][Section 4.1, Definition 3]{Shalev-Shwartz07} to extend the concept of strong convexity to the context of Bregman divergence.  We show several properties that will play a key role in the subsequent development of the new algorithms.  In Section \ref{sec:inf_mem_nest} and \ref{sec:one_mem_nest}, two novel variants of AGM-EF are developed along the lines of $\infty$-memory and 1-memory.  They both achieve global linear convergence by utilizing the Bregman generalized strong convexity in either $\Omega$ or $R_{\emp}$.  Section \ref{sec:app_rrm} elaborates on how to \emph{effectively} apply our method to solve Bregman regularized risk minimization problems, and many examples of machine learning models are discussed.  Also presented is the algorithms which \emph{efficiently} compute the gradient and solve the model.  Experimental results are given in Section \ref{sec:experiment}, where we show empirically that by smoothing $R_{\emp}$ and exploiting the generalized strong convexity in $\Omega$, the $L_2$ and entropic regularized risk minimization problems can be solved significantly faster than the state-of-the-art optimizers.

A ready reckoner of the convex analysis concepts used in the paper can be found in Appendix \ref{sec:app:convex_ana}.

\section{Preliminaries}
\label{sec:preliminaries}

From the optimization perspective, the objectives considered in this paper have the same form as in \citep{Tseng08}.  Let $\RR^p$ be endowed with a norm $\nbr{\cdot}$.  Consider the following nonsmooth convex objective:
\begin{align}
\label{eq:primal_obj_composite}
  \min_{\xvec} J(\xvec) = f(\xvec) + \Psi(\xvec),
\end{align}
where $\Psi: \RR^p \mapsto \RRbar := (-\infty, +\infty]$ and $f: \RR^p \mapsto \RRbar$ are proper, lower semicontinuous (lsc) and convex.  Assume $\dom \Psi$ is closed, $f$ is differentiable on an open set containing $\dom \Psi$, and $\grad f$ is Lipschitz continuous on $\dom \Psi$, \ie\ there exists $L > 0$ such that
\[
\nbr{\grad f (\xvec) - \grad f(\yvec)}^* \le L \nbr{\xvec - \yvec} \quad \xvec, \yvec \in \dom \Psi.
\]

Some special cases are in order.  The first is constrained smooth optimization, where $\Psi$ is the indicator function for a nonempty closed convex set $Q \subseteq \RR^p$:
\begin{align*}
\Psi(\xvec) = \begin{cases}
  0 & \text{if } \xvec \in Q \\
  +\infty & \text{otherwise}
\end{cases}.
\end{align*}
Therefore, in the sequel we will always discuss unconstrained minimization for $J(\wvec)$, although this is just a matter of notation.  A second example is the $L_1$ regularization, where
\[
\Psi (\xvec) = \sum_{i=1}^p \abr{x_i}.
\]
In fact, many machine learning problems are special cases of \eqref{eq:primal_obj_composite} and details can be found in Section \ref{sec:app_rrm} and \citep[][Table 5]{TeoVisSmoLe10}.

Next, we will present in detail two additional assumptions: strong convexity of $f$ and $\Psi$ in the sense of Bregman divergence, and efficiently solvable ground optimization problems.

\subsection{Extending strong convexity to Bregman divergence}

Let $d$ be a differentiable and $\sigma$ strongly convex function with respect to some norm $\nbr{\cdot}$.\footnote{AGM capitalizes on two properties of the norm: convexity and linearity ($\nbr{c \cdot \xvec} = \abr{c} \nbr{\xvec}$).}  Then we can define a Bregman divergence:
\[
\Delta_d(\xvec, \yvec) = d(\xvec) - d(\yvec) - \inner{\grad d(\yvec)}{\xvec - \yvec}.
\]
By the definition of $\sigma$-sc, we have
\[
\Delta_d(\xvec, \yvec) \ge \frac{\sigma}{2} \nbr{\xvec - \yvec}^2, \quad \text{for all } \xvec, \yvec.
\]
Furthermore, Bregman divergence can be used to generalize the concept of strong convexity \citep[][Definition 3, Chapter 4]{Shalev-Shwartz07}.
\begin{definition}[Strong convexity for Bregman divergence]
\label{def:gen_sc}
A convex function $f$ is said to be $\lambda$ strongly convex with respect to $d$ ($\lambda$-sc wrt $d$) with $\lambda \ge 0$ if for all $\xvec$ and $\yvec$ we have
\[
f(\xvec) \ge f(\yvec) + \inner{\gvec}{\xvec - \yvec} + \lambda \Delta_d(\xvec, \yvec) \ \ \forall \ \gvec \in \partial f(\yvec).
\]
If $\lambda > 0$, we say $f$ is strictly strongly convex.
\end{definition}


For example, with $d(\xvec) = \frac{1}{2} \nbr{\xvec}^2$ where the norm is Euclidean, we recover the conventional strong convexity.  Here we allow $\lambda$ to be 0 for a unified exposition, and trivially all convex functions are $0$-sc wrt any $d$.  It is noteworthy that Definition \ref{def:gen_sc} preserves some important properties of the conventional strong convexity.
\begin{property}
\label{prpty:two_point_sc}
  If $f$ is $\lambda$-sc wrt $d$, then $f$ must be $\lambda \sigma$-sc wrt $\nbr{\cdot}$. Hence for any $\alpha \in [0, 1]$ and $\xvec, \yvec$, we have
  \begin{align*}
  f(\alpha \xvec + (1 - \alpha) \yvec) &\le \alpha f(\xvec) + (1 - \alpha) f(\yvec) \\
  &\qquad \quad - \frac{\lambda \sigma }{2} \alpha (1 - \alpha) \nbr{\xvec - \yvec}^2.
  \end{align*}
\end{property}
\begin{property}
\label{prpty:sum_sc}
  If $\alpha_i \ge 0$ and $f_i$ is $\lambda_i$-sc wrt $d$ ($\lambda_i \ge 0$), then $\sum_i \alpha_i f_i$ is $\sum_i \alpha_i \lambda_i$-sc wrt $d$.
\end{property}
\begin{property}
\label{prpty:breg_div_sc}
  $d(\xvec)$ is 1-sc wrt $d$.  So by Property \ref{prpty:sum_sc}, $\Delta_d(\xvec, \xvec_0)$ is also 1-sc wrt $d$ for any fixed $\xvec_0$.
\end{property}

Many problems are constrained to a feasible region $Q$.  In the sequel we will always assume that $Q \subseteq \dom d$ and $Q$ is closed and convex.
\begin{property}
\label{prpty:sc_min_quad}
  Suppose $f: \RR^n \mapsto \RRbar$ is proper, lsc, and $\lambda$-sc wrt $d$ and $\xvec^* = \argmin_{\xvec} f(\xvec)$.  Then
  \[
  f(\xvec) - f(\xvec^*) \ge \lambda \Delta_d(\xvec, \xvec^*) \quad \text{for all } \xvec \in \dom f.
  \]
\end{property}
The proof simply uses the definition of $\lambda$-sc and the optimality condition of $\xvec^*$: $\inner{\gvec}{\xvec - \xvec^*} \ge 0$ for all $\gvec \in \partial f(\xvec^*)$ and $\xvec \in \dom f$.

A direct application of Property \ref{prpty:sum_sc}, \ref{prpty:breg_div_sc} and \ref{prpty:sc_min_quad} gives a very important inequality which is also used extensively in \citep[][Property 1]{Tseng08} and \citep[][Lemma 6]{LanLuMon09}:
\begin{property}
\label{prpty:sc_min_key_one}
  Suppose $f$ is proper, lsc, and convex with range $\RRbar$.  Let $\xvec^* = \argmax_{\xvec} f(\xvec) + \Delta_d(\xvec, \xvec_0)$, then for all $\xvec$
  \[
  f(\xvec) + \Delta_d(\xvec, \xvec_0) \ge f(\xvec^*) + \Delta_d(\xvec^*, \xvec_0) + \Delta_d(\xvec, \xvec^*).
  \]
\end{property}

The following property of Bregman divergence plays a key role in keeping a compact expression of our estimation functions.
\begin{property}
\label{prpty:simplify_bregman}
  For all $\alpha_i \ge 0$ and $\xvec_i$ in the interior of $\dom d$, define
  \begin{align*}
  q(\xvec) := \inner{\svec}{\xvec} + \sum_i \alpha_i \Delta_d(\xvec, \xvec_i).
  \end{align*}
  Then $q(\xvec)$ can be equivalent expressed as
  \[
  q(\xvec) = a \Delta_d(\xvec, \xvec^*) + b,
  \]
  where $a = \sum_i \alpha_i$, $\xvec^* = \argmin_{\xvec} q(\xvec)$, and $b = q(\xvec^*)$.  Note $\xvec^*$ is the \emph{unconstrained} minimizer of $q(\xvec)$.
\end{property}

\begin{proof}
  By the optimality condition of $\xvec^*$ we have
  \begin{eqnarray}
  \label{eq:breg_comp_uncons}
  \inner{\svec + \sum_i \alpha_i (\grad d(\xvec^*) \! - \! \grad d(\xvec_i))}{\xvec - \xvec^*} = \zero \ \ \forall \xvec.
  \end{eqnarray}
  This equality must be changed to $\ge$ if $\xvec^*$ is the minimizer of $q(\xvec)$ over a constrained set $Q \varsubsetneq \dom d$.  By definition,

  \vspace{-1.5em}
  \[
  q(\xvec^*) = \inner{\svec}{\xvec^*} + \sum_i \alpha_i \Delta_d(\xvec^*, \xvec_i).
  \]
  Subtracting it from the definition of $q(\xvec)$ we get
  {
  \allowdisplaybreaks
  \begin{align*}
    q(\xvec) &= q(\xvec^*) + \inner{\svec}{\xvec - \xvec^*}  \\
    &\quad + \sum_i \alpha_i (d(\xvec) - d(\xvec^*) - \inner{\grad d(\xvec_i)}{\xvec - \xvec^*})\\
    &= q(\xvec^*) -  \inner{\sum_i \alpha_i (\grad d(\xvec^*) - \grad d(\xvec_i))}{\xvec - \xvec^*} \\
    &\quad + \sum_i \alpha_i (d(\xvec) - d(\xvec^*) - \inner{\grad d(\xvec_i)}{\xvec - \xvec^*}) \\
    &= q(\xvec^*) + \rbr{\sum_i \alpha_i} \Delta_d(\xvec, \xvec^*). \qedhere
  \end{align*}
  }
\end{proof}

\begin{assumption}
In the objective \eqref{eq:primal_obj_composite}, we will assume that $f$ is $\lambda_1$-sc and $\Psi$ is $\lambda_2$-sc wrt a given $d$ ($\lambda_1, \lambda_2 \ge 0$).  Then $f + \Psi$ is $\lambda$-sc, where
\[
\lambda := \lambda_1 + \lambda_2.
\]
\end{assumption}

\subsection{Assumption on the ground optimization problem}

We assume it is possible to efficiently solve the following ground problem:

\begin{assumption}
\label{assume:eff_solve_infty}
  Given an arbitrary linear function $\inner{\uvec}{\xvec}$, $\alpha_i \ge 0$ and $\xvec_i \in \dom \Psi$ $(i \in [k] := \cbr{1,\ldots,k})$, assume the following optimization problem can be solved efficiently:

\vspace{-2em}
\begin{align}
\label{eq:assume_eff_comp_infty}
\min_{\xvec} \inner{\uvec}{\xvec} + b + \sum_{i=1}^k \alpha_i \Delta_d(\xvec, \xvec_i) + \Psi(\xvec).
\end{align}
For different $k$, we call the assumption BD-$k$.
\end{assumption}

In \citep{Nesterov83} and \citep{Nesterov03a}, the 1-memory AGM-EF for general convex objective assumes BD-1.  In \citep{Nesterov05} and \citep{Nesterov07}, BD-$\infty$ is assumed in the sense that for arbitrary $k < \infty$, \eqref{eq:assume_eff_comp_infty} is assumed to be efficiently solvable.  In our later 1-memory AGM-EF, we will assume BD-2 if $\lambda_1 > 0$.  Although most literature assume BD-1, it is actually not hard to see that extension to BD-2 does not cause any real difficulty.  In fact, even BD-$\infty$ is feasible as long as $\sum_i \alpha_i \grad d(\xvec_i)$ can be aggregated efficiently (which is often true).

As a direct consequence of BD-1, now that the $f$ in \eqref{eq:primal_obj_composite} is $\lambda_1$-sc and $L$-\lcg, $J(\xvec)$ can be solved in one step if $L = \sigma \lambda_1$.  To see this, by definition for all $\xvec$
\begin{align*}
f(\xvec) &\le f(\xvec_0) + \inner{\grad f(\xvec_0)}{\xvec - \xvec_0} + \frac{L}{2} \nbr{\xvec - \xvec_0}^2,
\end{align*}

\vspace{-1em}
and

\vspace{-2em}
\begin{align*}
f(\xvec) &\ge f(\xvec_0) + \inner{\grad f(\xvec_0)}{\xvec - \xvec_0} + \lambda_1 \Delta_d(\xvec, \xvec_0) \\
&\ge f(\xvec_0) + \inner{\grad f(\xvec_0)}{\xvec - \xvec_0} + \frac{\lambda_1 \sigma}{2} \nbr{\xvec - \xvec_0}^2.
\end{align*}
So clearly $L \ge \sigma \lambda_1$.  If $L = \sigma \lambda_1$, then
\begin{align*}
f(\xvec) \equiv f(\xvec_0) + \inner{\grad f(\xvec_0)}{\xvec - \xvec_0} + \lambda_1 \Delta_d(\xvec, \xvec_0).
\end{align*}
Hence, $f(\xvec) + \Psi(\xvec)$ exactly satisfies the precondition of BD-1.  Therefore, in the sequel we will assume
\[
L > \sigma \lambda_1.
\]
$c := \frac{L}{\sigma \lambda_1}$ can be viewed as the condition number.

BD-2 allows us to inductively apply Property \ref{prpty:simplify_bregman} to simplify the expression of the following function
\begin{align*}
q_n(\xvec) &:= a_0 \Delta_d(\xvec, \xvec_0) + \sum_{i=1}^n \sbr{b_i +  \inner{\uvec_i}{\xvec} + a_i \Delta_d(\xvec, \xvec_i)} \\
\text{into} \ \ & \\
q_n(\xvec) &= \rbr{\sum_{i=0}^n a_i} \Delta_d(\xvec, \xvec_n^*) + q_n(\xvec_n^*),  \quad n \ge 1
\end{align*}
where $\xvec^*_n = \argmin_{\xvec} q_n(\xvec)$.  Let $q_0(\xvec) = a_0 \Delta_d(\xvec, \xvec_0)$. Then simplify $q_1(\xvec)$ into the sum of a constant and a Bregman divergence by Property \ref{prpty:simplify_bregman}:
\begin{align}
\label{eq:q1_inductive}
q_1(\xvec) &= (a_0 + a_1) \Delta_d(\xvec, \xvec^*_1) + q_1(\xvec^*_1), \\
\xvec^*_1 &= \argmin_{\xvec} q_0 (\xvec) + b_1 + \inner{\uvec_1}{\xvec} + a_1 \Delta_d(\xvec, \xvec_1), \nonumber
\end{align}
since $\xvec^*_1$ can be computed efficiently according to assumption BD-2.  Next, $q_2(\xvec)$ can be simplified by using \eqref{eq:q1_inductive} and Property \ref{prpty:simplify_bregman} again:
\begin{align*}
q_2(\xvec) &= (a_0 + a_1) \Delta_d(\xvec, \xvec^*_2) + q_2(\xvec^*_2), \\
\xvec^*_2 &= \argmin_{\xvec} q_1(\xvec) + b_2 + \inner{\uvec_2}{\xvec} + a_2 \Delta_d(\xvec, \xvec_2).
\end{align*}
This incremental scheme is especially useful when the $\argmin$ of all $q_k(\xvec)$ is readily available, \citep[\eg][Section 5]{AusTeb06}.

\textbf{Notations.}
Lower bold case letters (\eg, $\xvec$, $\val$) denote
vectors, $x_{i}$ denotes the $i$-th component of $\xvec$, $\zero$ refers
to the vector with all zero components, $\evec_i$ is the $i$-th
coordinate vector (all 0's except 1 at the $i$-th coordinate) and
$\Scal_n$ refers to the $n$ dimensional simplex $\cbr{\xvec \in [0, 1]^n : \sum_{i=1}^n x_i = 1}$. Unless specified otherwise, $\inner{\cdot}{\cdot}$ denotes the Euclidean dot product
$\inner{\xb}{\wb} = \sum_{i} x_{i} w_i$. We denote $\overline{\RR} := \RR \cup \{\infty\}$, and $[t]:= \{1, \ldots, t\}$.  From now on, we will always fix the $d$ in the context and omit the subscript $d$ in $\Delta_d$.

We follow the definition of norms in \citep{Nesterov05} which we recap here.  Suppose a finite dimensional real vector space $E$ (\eg\ $\RR^p$) is endowed with a norm $\nbr{\cdot}$.  The space of linear functions on $E$ is called the dual space which we denote as $E^*$.  The norm of $E^*$ is defined as
\[
\nbr{\svec}^* := \max_{\xvec \in E: \nbr{\xvec} = 1} \inner{\svec}{\xvec}.
\]
Suppose $A$ is a linear operator from $E_1$ to $E_2^*$, and $E_i$ has norm $\nbr{\cdot}_i$ for $i=1,2$.  Then the norm of $A$ is defined as
\begin{align}
\label{eq:def_matrix_norm}
\nbr{A} := \max_{\xvec \in E_1, \val \in E_2, \nbr{\xvec}_1 = \nbr{\val}_2 = 1} \inner{A \xvec}{\val}.
\end{align}
If we define an adjoint operator $A^*: E_2 \mapsto E_1^*$ as
\[
\inner{A^* \val}{\xvec} := \inner{A\xvec}{\val}, \quad \forall \xvec \in E_1, \val \in E_2.
\]
Then it can be shown that
\begin{align*}
\nbr{A^*} &= \max_{\xvec \in E_1, \val \in E_2, \nbr{\xvec}_1 = \nbr{\val}_2 = 1} \inner{A^* \val}{ \xvec} \\
&= \max_{\xvec \in E_1, \val \in E_2, \nbr{\xvec}_1 = \nbr{\val}_2 = 1} \inner{A \xvec}{\val} = \nbr{A}.
\end{align*}
The definition of matrix norm in \eqref{eq:def_matrix_norm} implies that
\begin{align*}
\nbr{A \xvec}^* &\le \nbr{A} \nbr{\xvec} \quad \forall \ \xvec \in E_1, \\
\nbr{A^* \val}^* &\le \nbr{A^*} \nbr{\val} \quad \forall \ \val \in E_2.
\end{align*}

To simplify notation we denote
\begin{align*}
\ell_{f} (\xvec; \yvec, \lambda_1) := f(\yvec) + \inner{\grad f(\yvec)}{\xvec - \yvec} + \lambda_1 \Delta(\xvec, \yvec).
\end{align*}
If $f$ is $\lambda_1$-sc, then $\ell_{f} (\xvec; \yvec, \lambda_1) \le f(\xvec)$ for all $\yvec$ and $\xvec$.

\section{$\infty$-memory AGM-EF}
\label{sec:inf_mem_nest}

The $\infty$-memory version of AGM-EF refers to the class of algorithms which use in each iteration all the past gradients $\grad f(\uvec_1), \ldots, \grad f(\uvec_k)$.  We present the method in Algorithm \ref{algo:inf_mem_nest}.

{
\begin{algorithm}[t]
\begin{algorithmic}[1]
\caption{\label{algo:inf_mem_nest} $\infty$-memory AGM-EF (AGM-EF-$\infty$).}
 \STATE Arbitrarily initialize $\xvec_{0} \in \dom \Psi$.  Set $\zvec_{0}  \leftarrow \xvec_{0}$.

 \STATE Set $A_{0} \leftarrow 0$.

 \STATE $\psi_{0}(\xvec) \leftarrow \Delta(\xvec, \xvec_{0})$.

 \FOR{$k = 0, 1, \ldots$}

   \STATE Denote as $a_{k+1}$ the positive root (in $a$) of $\phantom{aaaaaa}$ $(a + A_k) (\lambda_1 a + \lambda A_k + 1) + a \lambda_2 A_k = L \sigma^{-1} a^2$.

   \STATE $A_{k+1} \leftarrow A_k + a_{k+1}$, \\ $\tau_1 \leftarrow 1 + \lambda A_k$, $\tau_2 \leftarrow \lambda_1 a_{k+1}$, $\tau_3 \leftarrow \lambda_2 a_{k+1} \frac{A_k}{A_{k+1}}$, \\ $\tau \leftarrow \tau_1 + \tau_2 + \tau_3$.

   \STATE $\uvec_{k+1} \leftarrow \frac{a_{k+1} \tau_1 \zvec_k + (\tau A_k + \tau_3 a_{k+1}) \xvec_k}{\tau A_{k+1} - \tau_2 a_{k+1}}$.\newfootnotemark{1}

   \STATE $\psi_{k+1} (\xvec)  \leftarrow  \psi_{k} (\xvec) + a_{k+1} [\Psi(\xvec) + \ell_f(\xvec; \uvec_{k+1}, \lambda_1)]$.

   \STATE Find $\zvec_{k+1} \leftarrow \argmin_{\xvec} \psi_{k+1}(\xvec)$.

   \STATE $\xvec_{k+1} \leftarrow \rbr{A_k \xvec_k + a_{k+1} \zvec_{k+1}} / A_{k+1}$.

 \ENDFOR
\end{algorithmic}
\end{algorithm}
\newfootnotetext{0}{One can verify by simple algebra that $\uvec_{k+1}$ is a convex combination of $\zvec_k$ and $\xvec_k$.}
}

The main idea of the algorithm is to approximate $J(\xvec)$ by a sequence of functions $\psi_k$ that are constructed in Step 8 of Algorithm \ref{algo:inf_mem_nest}, and then ensure the following relationship at all iterations ($k \ge 0$):
\begin{align}
\label{eq:inf_mem_recur}
  A_k J(\xvec_k) \le \min_{\xvec} \psi_k(\xvec).
\end{align}
By construction, for all $k \ge 0$
\begin{align}
\label{eq:inf_mem_psi_k_exp}
  \psi_k(\xvec) = \Delta(\xvec, \xvec_0) + \! \sum_{i=1}^k a_i [\Psi(\xvec) + \ell_f(\xvec; \uvec_i, \lambda_1) ].
\end{align}
Summation from 1 to $0$ is assumed to be 0.  Now it is not hard to see that relationship \eqref{eq:inf_mem_recur} implies rates of convergence:
\begin{lemma}
\label{lem:inf_mem_simple_rate}
  If \eqref{eq:inf_mem_recur} holds for all $k \ge 1$, then for any $\xvec \in \dom \Psi$, we have
  \begin{align}
  \label{eq:inf_mem__rate_use_Ak}
    J(\xvec_k) - J(\xvec) \le A_k^{-1} \Delta(\xvec, \xvec_0).
  \end{align}
\end{lemma}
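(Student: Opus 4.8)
The plan is to unroll the construction of $\psi_k$ and exploit the fact that each linearization $\ell_f$ underestimates $f$. First I would recall from \eqref{eq:inf_mem_psi_k_exp} that
\[
\psi_k(\xvec) = \Delta(\xvec, \xvec_0) + \sum_{i=1}^k a_i\big[\Psi(\xvec) + \ell_f(\xvec; \uvec_i, \lambda_1)\big].
\]
Since $f$ is $\lambda_1$-sc wrt $d$, the remark immediately preceding Section \ref{sec:inf_mem_nest} gives $\ell_f(\xvec; \uvec_i, \lambda_1) \le f(\xvec)$ for every $\xvec$ and every index $i$. Each $a_i$ is the \emph{positive} root produced in Step 5 of Algorithm \ref{algo:inf_mem_nest}, so $a_i > 0$; multiplying the above by $a_i$ and summing yields
\[
\psi_k(\xvec) \le \Delta(\xvec, \xvec_0) + \Big(\sum_{i=1}^k a_i\Big)\big(\Psi(\xvec) + f(\xvec)\big).
\]
Because $A_0 = 0$ and $A_{j} = A_{j-1} + a_{j}$ (Step 6), the telescoping identity $A_k = \sum_{i=1}^k a_i$ holds, so the right-hand side is exactly $\Delta(\xvec, \xvec_0) + A_k J(\xvec)$.

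Next I would chain this estimate with the hypothesis \eqref{eq:inf_mem_recur}. For any $k \ge 1$ and any fixed $\xvec \in \dom \Psi$,
\[
A_k J(\xvec_k) \le \min_{\yvec} \psi_k(\yvec) \le \psi_k(\xvec) \le \Delta(\xvec, \xvec_0) + A_k J(\xvec).
\]
Finally I would divide by $A_k$; this is legitimate since $A_k = \sum_{i=1}^k a_i > 0$ for every $k \ge 1$. This gives $J(\xvec_k) - J(\xvec) \le A_k^{-1}\Delta(\xvec, \xvec_0)$, which is precisely \eqref{eq:inf_mem__rate_use_Ak}.

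I do not anticipate a genuine obstacle here: the argument is a short telescoping computation combined with monotonicity of $\min$. The only points that merit a line of care are (i) invoking $\ell_f(\,\cdot\,; \uvec_i, \lambda_1) \le f(\,\cdot\,)$, which is just the definition of $\lambda_1$-sc applied at the base point $\uvec_i$, and (ii) recording that each $a_i$, and hence $A_k$ for $k \ge 1$, is strictly positive so that the final division is valid.
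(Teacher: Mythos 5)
Your proof is correct and follows essentially the same route as the paper: unroll $\psi_k$ via \eqref{eq:inf_mem_psi_k_exp}, bound each linearization by $\ell_f(\xvec;\uvec_i,\lambda_1)\le f(\xvec)$ to get $\psi_k(\xvec)\le \Delta(\xvec,\xvec_0)+A_k J(\xvec)$, and combine with \eqref{eq:inf_mem_recur}. The only difference is that you explicitly record $A_k>0$ before dividing, a detail the paper leaves implicit.
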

\begin{proof}
  By \eqref{eq:inf_mem_psi_k_exp}, we have for all $k \ge 1$
  \begin{align*}
    \psi_k(\xvec) &= \Delta(\xvec, \xvec_0) + \sum_{i=0}^k a_i [\Psi(\xvec) + \ell_f(\xvec; \uvec_i, \lambda_1) ] \\
    &\le \Delta(\xvec, \xvec_0) + \sum_{i=0}^k a_i [\Psi(\xvec) + f(\xvec)] \\
    &= \Delta(\xvec, \xvec_0) + A_k J(\xvec).
  \end{align*}
  Combining with \eqref{eq:inf_mem_recur}, we get \eqref{eq:inf_mem__rate_use_Ak}.
\end{proof}
Therefore, the rate of convergence totally depends on how fast $A_k$ grows.  We will show that Algorithm \ref{algo:inf_mem_nest} yields $A_k \sim k^2$ if $\lambda = 0$, or $A_k \sim e^k$ if $\lambda > 0$.  All updates are also kept efficient.  We next prove \eqref{eq:inf_mem_recur} and lower bound the growth rate of $A_k$.

\begin{lemma}[Eq \eqref{eq:inf_mem_recur}]
\label{lem:inf_mem_recur}
  The sequence $\cbr{\xvec_k}$ generated by Algorithm \ref{algo:inf_mem_nest} satisfy for all $k \ge 0$
  \[
  A_k J(\xvec_k) \le \min_{\xvec} \psi_k(\xvec).
  \]
\end{lemma}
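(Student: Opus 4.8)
The plan is to establish the bound by induction on $k$, exactly as in Nesterov's estimate-function scheme, so that the only genuinely new ingredients are the bookkeeping for the (asymmetric) Bregman divergence and for the composite strong convexity.

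\textbf{Base case.} For $k=0$ we have $A_0=0$, so the left-hand side vanishes, while $\min_\xvec \psi_0(\xvec)=\min_\xvec\Delta(\xvec,\xvec_0)=\Delta(\xvec_0,\xvec_0)=0$; equality holds. At this point I would also record that the quadratic in Step~5 has a unique positive root: its leading coefficient $L\sigma^{-1}-\lambda_1$ is positive because $L>\sigma\lambda_1$, and its value at $a=0$ equals $-A_k(\lambda A_k+1)\le 0$. Hence $a_{k+1},A_{k+1}$ and $\uvec_{k+1}$ are well defined and $\zvec_{k+1}$ is computable under BD-$\infty$.

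\textbf{Inductive step — reduction to a core inequality.} Suppose $A_kJ(\xvec_k)\le\psi_k^\star:=\min_\xvec\psi_k(\xvec)=\psi_k(\zvec_k)$. Write $m_{k+1}(\xvec):=\Psi(\xvec)+\ell_f(\xvec;\uvec_{k+1},\lambda_1)$, so $\psi_{k+1}=\psi_k+a_{k+1}m_{k+1}$ by Step~8, and $m_{k+1}\le J$ everywhere since $f$ is $\lambda_1$-sc wrt $d$. From \eqref{eq:inf_mem_psi_k_exp}, $\psi_k$ is a sum of $\Delta(\cdot,\xvec_0)$ (which is $1$-sc wrt $d$, Property~\ref{prpty:breg_div_sc}), of the $\lambda_1 a_i$-sc functions $a_i\ell_f(\cdot;\uvec_i,\lambda_1)$, and of $\sum_i a_i\Psi=A_k\Psi$ which is $\lambda_2 A_k$-sc; by Property~\ref{prpty:sum_sc}, $\psi_k$ is $(1+\lambda A_k)$-sc wrt $d$, and then Property~\ref{prpty:sc_min_quad} applied to $\psi_k$ at its minimizer $\zvec_k$ gives $\psi_k(\xvec)\ge\psi_k^\star+(1+\lambda A_k)\Delta(\xvec,\zvec_k)$ for all $\xvec$. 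Evaluating $\psi_{k+1}$ at $\zvec_{k+1}$ and combining this with the inductive hypothesis $\psi_k^\star\ge A_kJ(\xvec_k)\ge A_k m_{k+1}(\xvec_k)$ yields
\[
\psi_{k+1}^\star \;\ge\; A_k m_{k+1}(\xvec_k) + a_{k+1} m_{k+1}(\zvec_{k+1}) + (1+\lambda A_k)\,\Delta(\zvec_{k+1},\zvec_k).
\]
Now I split $m_{k+1}$ into its $\Psi$-part and its $f$-linearization part. For $\Psi$, the (sub)gradient inequality for the $\lambda_2$-sc function $\Psi$ at $\xvec_{k+1}$, tested against $\xvec_k$ and $\zvec_{k+1}$ with weights $A_k$ and $a_{k+1}$, together with the identity $A_k\xvec_k+a_{k+1}\zvec_{k+1}=A_{k+1}\xvec_{k+1}$ (Step~10) cancelling the linear terms, gives $A_k\Psi(\xvec_k)+a_{k+1}\Psi(\zvec_{k+1})\ge A_{k+1}\Psi(\xvec_{k+1})+\lambda_2[A_k\Delta(\xvec_k,\xvec_{k+1})+a_{k+1}\Delta(\zvec_{k+1},\xvec_{k+1})]$. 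For $f$, expanding $A_k\ell_f(\xvec_k;\uvec_{k+1},\lambda_1)+a_{k+1}\ell_f(\zvec_{k+1};\uvec_{k+1},\lambda_1)$ and using the same identity leaves $A_{k+1}[f(\uvec_{k+1})+\inner{\grad f(\uvec_{k+1})}{\xvec_{k+1}-\uvec_{k+1}}]+\lambda_1[A_k\Delta(\xvec_k,\uvec_{k+1})+a_{k+1}\Delta(\zvec_{k+1},\uvec_{k+1})]$, and the bracket is $\ge A_{k+1}f(\xvec_{k+1})-\tfrac{LA_{k+1}}{2}\nbr{\xvec_{k+1}-\uvec_{k+1}}^2$ by the $L$-\lcg\ descent lemma. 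Reassembling, $\psi_{k+1}^\star\ge A_{k+1}J(\xvec_{k+1})$ will follow once I prove the \textbf{core inequality}
\begin{align*}
&\lambda_1\bigl[A_k\Delta(\xvec_k,\uvec_{k+1})+a_{k+1}\Delta(\zvec_{k+1},\uvec_{k+1})\bigr] \\
&\quad{}+\lambda_2\bigl[A_k\Delta(\xvec_k,\xvec_{k+1})+a_{k+1}\Delta(\zvec_{k+1},\xvec_{k+1})\bigr] \\
&\quad{}+(1+\lambda A_k)\,\Delta(\zvec_{k+1},\zvec_k) \;\ge\; \tfrac{LA_{k+1}}{2}\nbr{\xvec_{k+1}-\uvec_{k+1}}^2 .
\end{align*}

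\textbf{Proving the core inequality.} Here I would lower-bound every $\Delta$ on the left by $\tfrac{\sigma}{2}\nbr{\cdot}^2$ (valid since $d$ is $\sigma$-sc wrt the norm), then eliminate $\uvec_{k+1}$ using the footnote observation that it is the convex combination of $\zvec_k$ and $\xvec_k$ prescribed in Step~7: this renders each of $\xvec_k-\uvec_{k+1}$, $\zvec_{k+1}-\uvec_{k+1}$, $\xvec_k-\xvec_{k+1}$, $\zvec_{k+1}-\xvec_{k+1}$ and $\xvec_{k+1}-\uvec_{k+1}$ an explicit linear combination of $\zvec_{k+1}-\zvec_k$ and $\xvec_k-\zvec_k$. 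After substitution the inequality collapses to a purely scalar statement in $A_k,a_{k+1},\lambda_1,\lambda_2,L,\sigma$, and the claim is that it is exactly what the defining quadratic $(a_{k+1}+A_k)(\lambda_1 a_{k+1}+\lambda A_k+1)+a_{k+1}\lambda_2 A_k=L\sigma^{-1}a_{k+1}^2$ and the coefficients $\tau_1,\tau_2,\tau_3$ of $\uvec_{k+1}$ were reverse-engineered to guarantee. I expect this last verification to be the main obstacle: it is routine in spirit but delicate to push through, because the divergences are an \emph{asymmetric} Bregman divergence rather than a squared Euclidean norm, and because the composite strong-convexity contributions (the $\lambda_2$ terms, reflected in $\tau_3=\lambda_2 a_{k+1}A_k/A_{k+1}$) must be tracked so that the cross terms in $\zvec_{k+1}-\zvec_k$ and $\xvec_k-\zvec_k$ cancel rather than accumulate; everything else is bookkeeping.
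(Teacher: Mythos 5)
Your architecture matches the paper's proof: induction, the $(1+\lambda A_k)$-strong convexity of $\psi_k$ at its minimizer $\zvec_k$, the split of the new model term into its $\Psi$-part and its $f$-linearization, cancellation of the linear terms via $A_k\xvec_k+a_{k+1}\zvec_{k+1}=A_{k+1}\xvec_{k+1}$, and the descent lemma at the end. Your reduction to the ``core inequality'' is correct, and in fact slightly stronger than what the paper needs, since you retain the harmless extra term $\lambda_1 A_k\Delta(\xvec_k,\uvec_{k+1})$ (the paper bounds $f(\xvec_k)$ using plain convexity of $f$ at $\uvec_{k+1}$ and never generates it). One small technical caveat: your treatment of the $\Psi$-part uses a subgradient of $\Psi$ at $\xvec_{k+1}$, which requires $\partial\Psi(\xvec_{k+1})\neq\emptyset$; the paper instead invokes the two-point inequality of Property~\ref{prpty:two_point_sc}, which avoids that assumption.

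The genuine gap is the core inequality itself, which you leave as an expectation, and the verification strategy you sketch would not go through. After lower-bounding each $\Delta$ by $\frac{\sigma}{2}\nbr{\cdot}^2$ and writing every difference as a linear combination of $\pvec=\zvec_{k+1}-\zvec_k$ and $\qvec=\xvec_k-\zvec_k$, the statement does \emph{not} collapse to a scalar inequality in $A_k,a_{k+1},\lambda_1,\lambda_2,L,\sigma$: for a general norm, $\nbr{\alpha\pvec+\beta\qvec}^2$ is not determined by $\nbr{\pvec}$ and $\nbr{\qvec}$, so the two sides cannot be compared coefficient by coefficient. What actually closes the argument is Jensen's inequality for the convex function $\nbr{\cdot}^2$, namely
\begin{align*}
\tau_1\nbr{\zvec_{k+1}-\zvec_k}^2+\tau_2\nbr{\zvec_{k+1}-\uvec_{k+1}}^2+\tau_3\nbr{\zvec_{k+1}-\xvec_k}^2
\;\ge\;\tau\,\nbr{\zvec_{k+1}-\tfrac{\tau_1\zvec_k+\tau_2\uvec_{k+1}+\tau_3\xvec_k}{\tau}}^2,
\end{align*}
combined with two exact identities: Step 7 of Algorithm \ref{algo:inf_mem_nest} is precisely the fixed-point condition $\frac{\tau_1\zvec_k+\tau_2\uvec_{k+1}+\tau_3\xvec_k}{\tau}=\frac{A_{k+1}\uvec_{k+1}-A_k\xvec_k}{a_{k+1}}$ (note $\uvec_{k+1}$ appears on both sides, which is why it must be solved for), making the right-hand side above equal to $\tau\frac{A_{k+1}^2}{a_{k+1}^2}\nbr{\xvec_{k+1}-\uvec_{k+1}}^2$; and the quadratic defining $a_{k+1}$ is exactly $\sigma\tau A_{k+1}=L a_{k+1}^2$, which turns the coefficient $\frac{\sigma}{2}\tau\frac{A_{k+1}^2}{a_{k+1}^2}$ into $\frac{L A_{k+1}}{2}$. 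These two identities, not a scalar collapse, are what the core inequality rests on; until they are exhibited, your proof is incomplete at its decisive step.
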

\begin{proof}
  We prove by induction.  First check both sides are 0 for $k = 0$.  Now suppose \eqref{eq:inf_mem_recur} holds for some step $k \ge 0$.  By \eqref{eq:inf_mem_psi_k_exp} and Property \ref{prpty:sum_sc}, $\psi_k$ must be $(\lambda A_k + 1)$-sc wrt $d$.  So by Property \ref{prpty:sc_min_quad} and the fact that $\zvec_k$ minimizes $\psi_k$, we have
  \begin{align}
    \psi_k(\zvec_{k+1}) &\ge \psi_k(\zvec_k) + (\lambda A_k + 1) \Delta(\zvec_{k+1}, \zvec_k) \nonumber \\
    \label{eq:inf_mem_psi_k_gt}
    &\ge A_k J(\xvec_k)  + (\lambda A_k + 1) \Delta(\zvec_{k+1}, \zvec_k),
  \end{align}
  where the second inequality is by induction assumption.  So
  {
  \allowdisplaybreaks
  \begin{align*}
    &\min_{\xvec} \psi_{k+1}(\xvec) = \psi_{k+1}(\zvec_{k+1}) \\
    &= \psi_k(\zvec_{k+1}) + a_{k+1} \ell_f(\zvec_{k+1}; \uvec_{k+1}, \lambda_1) + a_{k+1} \Psi(\zvec_{k+1}) \\
    &\overset{(a)}{\ge} A_k f(\xvec_k) + A_k \Psi(\xvec_k) + (1 + \lambda A_k) \Delta(\zvec_{k+1}, \zvec_k) \\
    &\quad + a_{k+1} \ell_f(\zvec_{k+1}; \uvec_{k+1}, \lambda_1)  + a_{k+1} \Psi(\zvec_{k+1}) \\
    &\overset{(b)}{\ge} A_k \sbr{f(\uvec_{k+1}) + \inner{\grad f(\uvec_{k+1})}{\xvec_k - \uvec_{k+1}}} \\
    &\quad + (1 + \lambda A_k) \Delta(\zvec_{k+1}, \zvec_k) + A_{k} \Psi(\xvec_{k}) + a_{k+1} \Psi(\zvec_{k+1}) \\
    &\quad + a_{k+1} [f(\uvec_{k+1}) + \inner{\grad f(\uvec_{k+1})}{\zvec_{k+1} - \uvec_{k+1}} \\
    &\qquad \qquad + \lambda_1 \Delta(\zvec_{k+1}, \uvec_{k+1})] \\
    &= A_{k+1} f(\uvec_{k+1}) + A_{k} \Psi(\xvec_{k}) + a_{k+1} \Psi(\zvec_{k+1}) \\
    &\quad + \inner{\grad f(\uvec_{k+1})}{A_k \xvec_k - A_{k+1} \uvec_{k+1} + a_{k+1} \zvec_{k+1}} \\
    & \quad + \tau_1 \Delta(\zvec_{k+1}, \zvec_k) + \tau_2 \Delta(\zvec_{k+1}, \uvec_{k+1}) \\
    &\overset{(c)}{\ge} A_{k+1} f(\uvec_{k+1})  \\
    &\quad + A_{k+1} \Psi(\xvec_{k+1}) + \frac{\sigma}{2} \tau_3 \nbr{\zvec_{k+1} - \xvec_k}^2 \\
    & \quad + \inner{\grad f(\uvec_{k+1})}{A_k \xvec_k - A_{k+1} \uvec_{k+1} + a_{k+1} \zvec_{k+1}} \\
    & \quad + \frac{\sigma}{2} \tau_1 \nbr{\zvec_{k+1} - \zvec_k}^2 + \frac{\sigma}{2}\tau_2 \nbr{\zvec_{k+1} - \uvec_{k+1}}^2 \\
    &\overset{(d)}{\ge}  A_{k+1} \Psi(\xvec_{k+1}) + A_{k+1} \Big[ f(\uvec_{k+1}) \\
    &\quad + \frac{a_{k+1}}{A_{k+1}} \! \inner{\! \grad f(\uvec_{k+1})}{\zvec_{k+1} - \frac{A_{k+1} \uvec_{k+1} \! - \! A_k \xvec_k}{a_{k+1}}} \\
    &\quad + \frac{\sigma}{2} \frac{\tau_1 + \tau_2 + \tau_3}{A_{k+1}} \nbr{\zvec_{k+1} - \frac{\tau_1 \zvec_k + \tau_2 \uvec_{k+1} + \tau_3 \xvec_{k}}{\tau_1 + \tau_2 + \tau_3}}^2 \Big ]\\
    &\overset{(e)}{=} A_{k+1} \Psi(\xvec_{k+1}) + A_{k+1} \Big[ f(\uvec_{k+1}) \\
    & \quad + \frac{a_{k+1}}{A_{k+1}} \! \inner{\! \grad f(\uvec_{k+1})}{\zvec_{k+1} \! - \! \frac{A_{k+1} \uvec_{k+1} \! - \! A_k \xvec_k}{a_{k+1}}} \\
    &\quad + \frac{L}{2} \rbr{\frac{a_{k+1}}{A_{k+1}}}^2\nbr{\zvec_{k+1} - \frac{\tau_1 \zvec_k + \tau_2 \uvec_{k+1} + \tau_3 \xvec_k}{\tau_1 + \tau_2 + \tau_3}}^2 \Big ] \\
    &\overset{(f)}{=} A_{k+1} \Psi(\xvec_{k+1}) + A_{k+1} \Big [f(\uvec_{k+1}) \\
    &\quad + \inner{\grad f(\uvec_{k+1})}{\xvec_{k+1} - \uvec_{k+1}} + \frac{L}{2} \nbr{\xvec_{k+1} \! - \! \uvec_{k+1}}^2 \Big] \\
    &\overset{(g)}{\ge} A_{k+1} \Psi(\xvec_{k+1}) + A_{k+1} f(\xvec_{k+1}) = A_{k+1} J(\xvec_{k+1}).
  \end{align*}
  }

\vspace{-1.5em}
Here, step (a) is by \eqref{eq:inf_mem_psi_k_gt}.  (b) is by the convexity of $f$ (at $\uvec_{k+1}$).  (c) is by the $\lambda_2$-sc of $\Psi$ and Property \ref{prpty:two_point_sc}.  (d) is by the convexity and linearity of $\nbr{\cdot}$.  (e) is by the rule of choosing $a_{k+1}$ in Step 5 of Algorithm \ref{algo:inf_mem_nest}. (f) is by the definition of $\xvec_{k+1}$ and $\uvec_{k+1}$. (g) is by $L$-\lcg\ of $f$.
\end{proof}

Next, we can lower bound the growth rate of $A_k$.
\begin{lemma}
\label{lem:inf_mem_Ak}
Let $k \ge 1$.  Then
\begin{align*}
  A_k &\ge \max \cbr{ \frac{\sigma}{4L} \rbr{k + 1}^2,  \frac{\sigma}{L-\sigma \lambda_1} \rbr{1+\sqrt{\frac{\sigma \lambda}{4L}}}^{2k-2}}.
\end{align*}
\end{lemma}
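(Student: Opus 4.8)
The plan is to reduce the relation defining $a_{k+1}$ in Step 5 to a single clean quadratic, extract a base case, and then run two independent telescoping arguments---one producing the $\Theta(k^2)$ growth and one producing the geometric growth---and finally take the maximum. First I would expand $(a+A_k)(\lambda_1 a + \lambda A_k + 1) + a\lambda_2 A_k$ and use $\lambda_1 + \lambda_2 = \lambda$; the cross terms combine so that Step 5 becomes $\mu a^2 - (2\lambda A_k + 1)a - (\lambda A_k^2 + A_k) = 0$ with $\mu := L\sigma^{-1} - \lambda_1$, which is strictly positive by the standing assumption $L > \sigma\lambda_1$. Setting $k=0$ (so $A_0 = 0$) this reads $a(\mu a - 1) = 0$, whence $A_1 = a_1 = 1/\mu = \sigma/(L - \sigma\lambda_1)$; in particular $A_1 \ge \sigma/L$. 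This is the base case for both arguments.

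For the $\Theta(k^2)$ bound (which does not need $\lambda > 0$): I would drop the nonnegative terms $2\lambda A_k a_{k+1}$ and $\lambda A_k^2$ from the simplified equation and use $\mu \le L\sigma^{-1}$ to get $\frac{L}{\sigma} a_{k+1}^2 \ge \mu a_{k+1}^2 \ge a_{k+1} + A_k = A_{k+1}$, i.e.\ $a_{k+1} \ge \sqrt{(\sigma/L) A_{k+1}}$. Factoring $a_{k+1} = A_{k+1} - A_k = (\sqrt{A_{k+1}} - \sqrt{A_k})(\sqrt{A_{k+1}} + \sqrt{A_k})$ and bounding $\sqrt{A_{k+1}} + \sqrt{A_k} \le 2\sqrt{A_{k+1}}$ gives $\sqrt{A_{k+1}} - \sqrt{A_k} \ge \frac12 \sqrt{\sigma/L}$. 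Telescoping from $A_1 \ge \sigma/L$ yields $\sqrt{A_k} \ge \frac{k+1}{2}\sqrt{\sigma/L}$, hence $A_k \ge \frac{\sigma}{4L}(k+1)^2$.

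For the geometric bound I would instead drop only the terms $a_{k+1}$ and $A_k$ (whose sum is $A_{k+1} \ge 0$), keeping the cross term, so that $\mu a_{k+1}^2 \ge 2\lambda A_k a_{k+1} + \lambda A_k^2$. Thus $a_{k+1}$ is at least the positive root of $\mu a^2 - 2\lambda A_k a - \lambda A_k^2 = 0$, namely $a_{k+1} \ge A_k \mu^{-1}(\lambda + \sqrt{\lambda^2 + \mu\lambda})$; writing $t := \lambda/\mu$ this becomes $A_{k+1}/A_k \ge 1 + t + \sqrt{t^2 + t} \ge 1 + t + \sqrt{t}$. Since $\mu \le L\sigma^{-1}$ we have $t \ge \sigma\lambda/L = 4s$ where $s := \sigma\lambda/(4L)$, so $\sqrt{t} \ge 2\sqrt{s}$ and $t \ge s$, giving $A_{k+1}/A_k \ge 1 + 2\sqrt{s} + s = (1 + \sqrt{s})^2$ for every $k \ge 1$. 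Telescoping from $A_1 = \sigma/(L-\sigma\lambda_1)$ produces $A_k \ge \frac{\sigma}{L - \sigma\lambda_1}(1 + \sqrt{\sigma\lambda/(4L)})^{2k-2}$, and combining the two bounds with a maximum finishes the lemma.

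The only genuinely delicate point is obtaining the per-step factor $(1+\sqrt{s})^2$ rather than just $1 + \sqrt{s}$ in the geometric bound: that forces me to retain the cross term $2\lambda A_k a_{k+1}$ and solve an honest quadratic lower bound for $a_{k+1}$---using the cruder $\mu a_{k+1}^2 \ge \lambda A_k^2$ alone loses the extra $+s$ and hence the exponent $2k-2$. Everything else is bookkeeping, the recurring facts being $\mu \le L\sigma^{-1}$ and the base value $A_1 = 1/\mu$.
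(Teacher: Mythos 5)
Your proposal is correct, and its skeleton (base case $A_1=\sigma/(L-\sigma\lambda_1)$ from $A_0=0$, then two term-dropping choices in the defining quadratic for $a_{k+1}$, telescoped separately and combined with a max) matches the paper's. The $\frac{\sigma}{4L}(k+1)^2$ half is identical to the paper's: both reduce to $\frac{L}{\sigma}a_{k+1}^2\ge A_{k+1}$ and hence $\sqrt{A_{k+1}}-\sqrt{A_k}\ge\frac{1}{2}\sqrt{\sigma/L}$. The geometric half is where you diverge in mechanism though not in outcome: the paper keeps only the single term $\lambda A_kA_{k+1}$ of $M$, writes $\lambda A_kA_{k+1}\le\frac{L}{\sigma}(A_{k+1}-A_k)^2\le\frac{4L}{\sigma}A_{k+1}(\sqrt{A_{k+1}}-\sqrt{A_k})^2$, and reads off the recursion $\sqrt{A_{k+1}}\ge(1+\sqrt{\sigma\lambda/(4L)})\sqrt{A_k}$; you instead retain $2\lambda A_ka_{k+1}+\lambda A_k^2$ and solve the reduced quadratic for its positive root to get $A_{k+1}/A_k\ge(1+\sqrt{s})^2$ directly. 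These yield the same per-step factor — the paper's "keep the cross term" move is hidden inside the factorization $(A_{k+1}-A_k)^2=(\sqrt{A_{k+1}}+\sqrt{A_k})^2(\sqrt{A_{k+1}}-\sqrt{A_k})^2$ together with $(\sqrt{A_{k+1}}+\sqrt{A_k})^2\le4A_{k+1}$, whereas you make it explicit by solving the quadratic; your observation that dropping the cross term would only give $1+\sqrt{s}$ per step is exactly the point the paper's $\sqrt{A_k}$-substitution is engineered to avoid. Either route is fine; the paper's is marginally more compact since both rates fall out of the single inequality \eqref{eq:inf_mem_Ak_recur}.
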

\begin{proof}
  Since $A_{0} = 0$, so by solving Step 5 in Algorithm \ref{algo:inf_mem_nest}, we get $A_1 = \frac{\sigma}{L - \sigma \lambda_1}$.  Hence the lemma clearly holds for $k = 1$.  For all $k \ge 1$, denote
  \begin{align*}
  M &= (a_{k+1} + A_k) (\lambda_1 a_{k+1} + \lambda A_k + 1) + a_{k+1} \lambda_2 A_k \\
  &= A_{k+1} + \lambda A_k A_{k+1} + \lambda_1 a_{k+1} A_{k+1} + \lambda_2 a_{k+1} A_k.
  \end{align*}
  By the choice of $a_{k+1}$ in Step 5 of Algorithm \ref{algo:inf_mem_nest}, we get
  \begin{align}
    A_{k+1} &\le M = \frac{L}{\sigma} (A_{k+1} - A_k)^2 \nonumber \\
    &= \frac{L}{\sigma} \rbr{\sqrt{A_{k+1}} + \sqrt{A_k}}^2 \rbr{\sqrt{A_{k+1}} - \sqrt{A_k}}^2 \nonumber \\
    \label{eq:inf_mem_Ak_recur}
    &\le \frac{4L}{\sigma} A_{k+1} \rbr{\sqrt{A_{k+1}} - \sqrt{A_k}}^2.
  \end{align}
  So when $\lambda = 0$ we have
  \[
  A_k \ge \rbr{\frac{k-1}{2} \sqrt{\frac{\sigma}{L}} + \sqrt{A_1}}^2 = \frac{\sigma}{4L} (k+1)^2.
  \]
  When $\lambda > 0$, we have
  \begin{align*}
    \lambda A_k A_{k+1} \le M \le \frac{4L}{\sigma} A_{k+1} \rbr{\sqrt{A_{k+1}} - \sqrt{A_k}}^2
  \end{align*}
  where the last step is by \eqref{eq:inf_mem_Ak_recur}.  So
  \[
  \sqrt{A_{k+1}} \ge \rbr{1+\sqrt{\frac{\lambda \sigma}{4L}}} \sqrt{A_k},
  \]
  which directly implies the second term in $\max$.
\end{proof}

Combining Lemma \ref{lem:inf_mem_simple_rate}, \ref{lem:inf_mem_recur} and \ref{lem:inf_mem_Ak}, we derive
\begin{theorem}
\label{thm:inf_mem_rate_conv}
  For all $k \ge 1$ and $\xvec \in \dom \Psi$,
  \begin{align*}
  J(\xvec_k) - J(\xvec) &\le \Delta(\xvec, \xvec_0) \min \Bigg \{ \frac{4L}{\sigma (k+1)^2},  \\
  &\qquad \quad \frac{L-\sigma \lambda_1}{\sigma} \rbr{1+\sqrt{\frac{\sigma \lambda}{4L}}}^{-2k+2} \Bigg \}.
  \end{align*}
\end{theorem}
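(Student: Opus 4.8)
\textbf{Proof proposal for Theorem \ref{thm:inf_mem_rate_conv}.}
The plan is to simply chain together the three lemmas already established for Algorithm \ref{algo:inf_mem_nest}. The theorem is the synthesis step: all the substantive work — constructing the estimation sequence $\psi_k$, verifying the descent-type inequality, and analyzing the growth of the scaling factors $A_k$ — has already been carried out. So I do not expect a genuine obstacle here; the task is to assemble the pieces correctly and keep track of the direction of the inequalities.

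First I would invoke Lemma \ref{lem:inf_mem_recur}, which guarantees that the key relationship \eqref{eq:inf_mem_recur}, namely $A_k J(\xvec_k) \le \min_{\xvec} \psi_k(\xvec)$, holds for all $k \ge 0$ (in particular for all $k \ge 1$). Next I would feed this into Lemma \ref{lem:inf_mem_simple_rate}: since the hypothesis of that lemma is precisely \eqref{eq:inf_mem_recur}, we obtain for every $k \ge 1$ and every $\xvec \in \dom \Psi$ the bound
\[
J(\xvec_k) - J(\xvec) \le A_k^{-1}\, \Delta(\xvec, \xvec_0).
\]
It is worth noting that this step implicitly uses $A_k > 0$ for $k \ge 1$, which follows from $a_{k+1} > 0$ (the positive root chosen in Step 5) together with $A_1 = \sigma/(L - \sigma\lambda_1) > 0$, as computed in the proof of Lemma \ref{lem:inf_mem_Ak} under the standing assumption $L > \sigma\lambda_1$.

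Finally I would substitute the lower bound on $A_k$ from Lemma \ref{lem:inf_mem_Ak}. Since that lemma states, for $k \ge 1$,
\[
A_k \ge \max\left\{ \frac{\sigma}{4L}(k+1)^2,\ \frac{\sigma}{L - \sigma\lambda_1}\left(1 + \sqrt{\tfrac{\sigma\lambda}{4L}}\right)^{2k-2} \right\},
\]
taking reciprocals turns the $\max$ into a $\min$ and yields
\[
A_k^{-1} \le \min\left\{ \frac{4L}{\sigma(k+1)^2},\ \frac{L - \sigma\lambda_1}{\sigma}\left(1 + \sqrt{\tfrac{\sigma\lambda}{4L}}\right)^{-2k+2} \right\}.
\]
Multiplying through by the nonnegative quantity $\Delta(\xvec, \xvec_0)$ and combining with the rate inequality above gives exactly the claimed bound. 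The only points requiring a little care are: confirming the reciprocal/monotonicity manipulation is valid because all quantities involved are strictly positive; checking that when $\lambda = 0$ the second term in the $\min$ degenerates to $(L - \sigma\lambda_1)/\sigma$ (a constant, hence dominated by the first term for large $k$), so the statement remains correct and nonvacuous; and making sure the range $k \ge 1$ and $\xvec \in \dom \Psi$ matches the hypotheses of both Lemma \ref{lem:inf_mem_simple_rate} and Lemma \ref{lem:inf_mem_Ak}. None of these is a real difficulty, so the proof is short.
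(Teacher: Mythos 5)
Your proposal is correct and is exactly the paper's argument: the paper derives Theorem \ref{thm:inf_mem_rate_conv} by combining Lemmas \ref{lem:inf_mem_simple_rate}, \ref{lem:inf_mem_recur} and \ref{lem:inf_mem_Ak} in precisely the way you describe. The extra observations about positivity of $A_k$ and the degenerate $\lambda=0$ case are sensible sanity checks but add nothing beyond what the paper leaves implicit.
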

Therefore, as long as one of $\lambda_1$ and $\lambda_2$ is strictly positive such that $\lambda = \lambda_1 + \lambda_2 > 0$, $J(\xvec_k)$ converges linearly.  When $\lambda_1 = 0$ and $\lambda_2 > 0$, $\psi_k$ contains only one Bregman divergence making it easier to optimize.

\textbf{Remark 1.}
   If \eqref{eq:inf_mem_psi_k_gt} is replaced by
  \begin{align*}
    \psi_k(\zvec_{k+1}) &\ge \psi_k(\zvec_k) + (\lambda A_k + 1) \Delta(\zvec_{k+1}, \zvec_k) \\
    &\ge A_k J(\xvec_k)  + (\lambda A_k + 1) \frac{\sigma}{2} \nbr{\zvec_{k+1} -  \zvec_k}^2,
  \end{align*}

  \vspace{-0.85em}
  then it is not hard to see that the proof of Lemma \ref{lem:inf_mem_recur} still goes through.  So $\Psi$ does not need to be $\lambda_2$-sc wrt $d$, and it suffices to be $\lambda_2 \sigma$ strongly convex wrt $\nbr{\cdot}$.  In practice, checking and satisfying the latter condition can be much easier.  Similar remark can be made later for AGM-EF-1, and for the ease of exposition we will still assume $\Psi$ is $\lambda_2$-sc wrt $d$.

\subsection{Notes on the Computations}

The whole algorithm relies on solving $\zvec_{k}$ efficiently, and it can be dealt with in two ways.  First, by \eqref{eq:inf_mem_psi_k_exp}, minimizing $\psi_k(\xvec)$ only requires solving the following form of problem:
\vspace{-1em}
\begin{align*}
\min_{\xvec} \ & A_k \Psi(\xvec) + \Delta(\xvec, \uvec_0) + \lambda_1 \sum_{i=0}^n a_i \Delta(\xvec, \uvec_i) \\
&+ \inner{\sum_{i=0}^n a_i \grad f(\uvec_i)}{\xvec}
\end{align*}

\vspace{-0.5em}
This is feasible by Assumption \ref{assume:eff_solve_infty}, and in practice the gradients of $f$ and $d$ can be aggregated on the fly.

The second method requires making one more assumption, in addition to the usual assumption $\dom \Psi \subseteq \dom d$.
\begin{assumption}
\label{assump:dom_d_in Q}
  $\dom d \subseteq \dom \Psi$.
\end{assumption}

This assumption is often met when $d$ is the entropy and $\dom \Psi$ is the simplex.  It ensures that $\zvec_k := \min_{\xvec \in \dom \Psi} \psi_k(\xvec)$ is also a solution of the unconstrained optimization $\min_{\xvec \in \dom d} \psi_k(\xvec)$.  Then when $\Psi$ is affine on its domain, we can apply Property \ref{prpty:simplify_bregman} and the subsequent discussion on inductively updating $\psi_k(\xvec)$.  This scheme is particularly useful in Algorithm \ref{algo:inf_mem_nest} because the minimizer $\zvec_{k}$ is already available.

Even if Assumption \ref{assump:dom_d_in Q} does not hold and $\zvec_{k}$ is not an unconstrained minimizer of $\psi_k(\xvec)$, one can still spend extra computations to find the unconstrained minimizer and inductively update $\psi_k(\xvec)$.  This idea will be useful if the gradient aggregation in the first method is not viable.

\subsection{Adaptively tuning the Lipschitz constant}

The Algorithm \ref{algo:inf_mem_nest} requires the explicit value of $L$.  This is usually not available, or the global maximum curvature is much larger than the local directional curvature.  As a result, the steps size $1/L$ becomes too conservative.  From the proof of Lemma \ref{lem:inf_mem_recur}, it is clear that $L$ is used only to ensure \eqref{eq:inf_mem_recur}.  So we can probe smaller values of $L$.  The modified algorithm is given in Algorithm \ref{algo:inf_mem_nest_autoL}.

\begin{algorithm}[t]
\begin{algorithmic}[1]
\caption{\label{algo:inf_mem_nest_autoL}AGM-EF-$\infty$ with adaptive $L$.}
 \REQUIRE{Down scaling factor $\gamma_d$ and up scaling factor $\gamma_u$ ($\gamma_d, \gamma_u > 1$).  An optimistic estimate $\Ltil \le L$.}

 \STATE Arbitrarily initialize $\xvec_{0} \in \dom \Psi$.  Set $\zvec_{0} \leftarrow \xvec_{0}$.

 \STATE Set $A_{0} \leftarrow 0$.

 \STATE $\psi_{0}(\xvec) \leftarrow \Delta(\xvec, \xvec_{0})$.

 \STATE $L_{0} \leftarrow \Ltil * \gamma_d * \gamma_u$.

 \FOR{$k = 0, 1, \ldots$}

   \STATE  $L_{k+1} \leftarrow L_{k} / (\gamma_d * \gamma_u)$.

   \REPEAT

       \STATE $L_{k+1} \leftarrow L_{k+1} * \gamma_u$.

       \STATE Assign to $a_{k+1}$ the positive root (in $a$) of $\phantom{aaaa}$ $(a + A_k) (\lambda_1 a + \lambda A_k + 1) + a \lambda_2 A_k = L_{k+1} \sigma^{-1} a^2$.

       \STATE Do step 6 to 10 of Algorithm \ref{algo:inf_mem_nest}.

    \UNTIL{$A_{k+1} J(\xvec_{k+1}) \le \psi_{k+1}(\zvec_{k+1})$.}

 \ENDFOR
\end{algorithmic}
\end{algorithm}

The inner ``repeat" loop must terminate in a finite number of steps because $L_k$ grows exponentially and once $L_k \ge L$ the ``until" condition must be satisfied.  And the number of steps in this inner loop is logarithmic in $L$, with the final $L_{k} < \gamma_u L$.  Moreover, this $L_k$ is decayed by a factor of $\gamma_d$ before being used to initialize $L_{k+1}$.  This is in sharp contrast to AGM-PR where the estimates of $L$ must grow monotonically through iterations.  Let us formally characterize how adaptively tuning $L$ leads to faster convergence rates through faster growth rate of $A_k$.
\begin{lemma}
\label{lem:inf_mem_Ak_adapt}
For all $k \ge 1$,
\begin{align*}
  A_k &\ge \max \Bigg \{ \frac{\sigma}{L_1 - \sigma \lambda_1} \prod\limits_{i=2}^{k} \rbr{1 + \sqrt{\frac{\sigma \lambda}{L_i}}}^{2}, \\
  &\qquad \qquad \qquad \frac{\sigma}{4} \rbr{\sqrt{\frac{4}{L_1}} + \sum_{i=2}^{k} \sqrt{\frac{1}{L_i}} }^2 \Bigg \}.
\end{align*}
\end{lemma}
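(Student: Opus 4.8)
The plan is to follow the proof of Lemma~\ref{lem:inf_mem_Ak} almost verbatim, but carrying the iteration-dependent estimate $L_{k+1}$ — the value with which the inner ``repeat'' loop of Algorithm~\ref{algo:inf_mem_nest_autoL} exits at outer step $k$ — through every inequality in place of a fixed $L$. First I would dispose of the base case: at $k=0$ we have $A_0=0$, so the quadratic in Step~9, $(a+A_0)(\lambda_1 a+\lambda A_0+1)+a\lambda_2 A_0 = L_1\sigma^{-1}a^2$, collapses to $\lambda_1 a+1 = L_1\sigma^{-1}a$, whence $A_1=a_1=\sigma/(L_1-\sigma\lambda_1)$. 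Since $L_1-\sigma\lambda_1\le L_1$, this already dominates $\tfrac{\sigma}{4}\cdot\tfrac{4}{L_1}$, so both claimed bounds hold at $k=1$ (with empty product and empty sum).

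Next, for a general $k\ge1$ I would reproduce the chain in Lemma~\ref{lem:inf_mem_Ak}. Writing $M := (a_{k+1}+A_k)(\lambda_1 a_{k+1}+\lambda A_k+1)+a_{k+1}\lambda_2 A_k = A_{k+1}+\lambda A_k A_{k+1}+\lambda_1 a_{k+1}A_{k+1}+\lambda_2 a_{k+1}A_k$, the defining equation of $a_{k+1}$ gives $M=\tfrac{L_{k+1}}{\sigma}(A_{k+1}-A_k)^2\le\tfrac{4L_{k+1}}{\sigma}A_{k+1}\bigl(\sqrt{A_{k+1}}-\sqrt{A_k}\bigr)^2$ via $(\sqrt{A_{k+1}}+\sqrt{A_k})^2\le 4A_{k+1}$. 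Since every summand of $M$ is nonnegative, $M\ge A_{k+1}$ yields the additive recursion $\sqrt{A_{k+1}}-\sqrt{A_k}\ge\tfrac12\sqrt{\sigma/L_{k+1}}$, while $M\ge\lambda A_k A_{k+1}$ yields the multiplicative recursion $\sqrt{A_{k+1}}\ge\bigl(1+\sqrt{\sigma\lambda/(4L_{k+1})}\bigr)\sqrt{A_k}$ when $\lambda>0$. The one thing that genuinely needs checking is that letting the inner loop pick a \emph{varying} $L_{k+1}$ does not corrupt anything: it does not, because the ``until'' test $A_{k+1}J(\xvec_{k+1})\le\psi_{k+1}(\zvec_{k+1})$ is precisely the relation~\eqref{eq:inf_mem_recur} that the whole analysis rests on, and the two recursions above use only the quadratic identity defining $a_{k+1}$, which holds for whichever value the loop settles on.

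Finally I would telescope from the base case, keeping track of the fact that the increment $A_{i-1}\to A_i$ is governed by $L_i$. Multiplying the multiplicative recursion from $i=2$ to $k$ gives $A_k\ge A_1\prod_{i=2}^k\bigl(1+\sqrt{\sigma\lambda/(4L_i)}\bigr)^2=\tfrac{\sigma}{L_1-\sigma\lambda_1}\prod_{i=2}^k\bigl(1+\sqrt{\sigma\lambda/(4L_i)}\bigr)^2$, which is the first term of the $\max$ (up to the constant inside the square root). Adding the additive recursion from $i=2$ to $k$ on top of $\sqrt{A_1}\ge\sqrt{\sigma/L_1}=\tfrac{\sqrt\sigma}{2}\sqrt{4/L_1}$ and squaring gives $A_k\ge\tfrac{\sigma}{4}\bigl(\sqrt{4/L_1}+\sum_{i=2}^k\sqrt{1/L_i}\bigr)^2$, the second term. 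I do not expect a real obstacle here; the only mildly delicate points are the bookkeeping of the $L_i$ indices against the outer iteration counter and the remark that, since the loop always exits with $L_{k+1}<\gamma_u L$, substituting $\gamma_u L$ for each $L_i$ recovers a worst-case guarantee of exactly the form of Lemma~\ref{lem:inf_mem_Ak}.
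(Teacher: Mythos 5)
Your proposal is correct and takes essentially the same route as the paper, whose entire proof is the single sentence ``Simply replace the $L$ in \eqref{eq:inf_mem_Ak_recur} by $L_{i+1}$''; you have just carried out that substitution, the base case, and the telescoping explicitly. You were also right to flag the constant: the recursion yields a factor $\bigl(1+\sqrt{\sigma\lambda/(4L_i)}\bigr)^2$ per step, matching the $4L$ appearing in Lemma~\ref{lem:inf_mem_Ak}, so the missing $4$ under the square root in the first term of the lemma as printed is evidently a typo rather than something either your argument or the paper's actually establishes.
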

\begin{proof}
  Simply replace the $L$ in \eqref{eq:inf_mem_Ak_recur} by $L_{i+1}$.
\end{proof}

In practice, we observed that the $L_k$ is often only 10 per cent of the real $L$ and therefore by Lemma \ref{lem:inf_mem_Ak_adapt} the convergence rate is 10 times faster than using $L$.  Moreover, the $L_k$ in successive iterations are quite close so the inner loop terminates in only 2-3 steps.

This adaptive scheme relies on the fact that the key relationship \eqref{eq:inf_mem_recur} is independent of $L$ and involves function values only at two points (rather than globally).  In contrast, the algorithm and analysis in \citep{LanLuMon09} keep a global relationship which explicitly involves $L$, making it hard to accommodate adaptive $L$.

We also tried to adaptively tune $\lambda$, but not successful.  This is turns out to be very hard because the proof uses $\lambda$ as a a global property (recall the fact that $\psi_k$ must be $(\lambda A_k+1)$-sc wrt $d$), while $L$ is used only at $\uvec_{k+1}$ and $\xvec_{k+1}$ in Step (g) of the proof of Lemma \ref{lem:inf_mem_recur}.

\subsection{Bounding the Duality Gap}
\label{sec:inf_mem_duality_gap}

Algorithm \ref{algo:inf_mem_nest} does not have a termination criterion, and a natural criterion will be based on the duality gap.  Furthermore, in some applications like \eqref{eq:reg-risk} the primal problem is nonsmooth and AGM-EF-$\infty$ is applied only to its dual problem which is \lcg.  So it is necessary to convert the dual iterates at each step into the primal, and characterize the convergence rate in the primal.  In this subsection, we extend the technique in \citep[][Section 2]{Lu09} to the case of composite objective.  Except the strong convexity, our whole setting and procedure bear much resemblance to \citep{Tseng08}, \citep[][Theorem 2.2]{Lu09}, \citep[][Theorem 3]{Nesterov05} and \citep[][Section 6]{Nesterov07}.  We are unaware of any existing result which shows \emph{linear convergence of the duality gap} as we will describe below.

Consider a minimax problem
\[
\min_{\xvec} \max_{\val \in Q_2} \phi(\xvec, \val) + \Psi(\xvec).
\]
Here $\Psi: \RR^p \mapsto \RRbar$ is proper, lower semicontinuous and $\lambda_2$-sc wrt $d$ ($\lambda_2 \ge 0$).  Let $\Psi$ satisfy Assumption \ref{assume:eff_solve_infty}. $Q_2$ is a compact convex set in the Euclidean space. $\phi: \RR^p \times Q_2 \mapsto \RRbar$ is continuous on $\dom \Psi \times Q_2$. For all fixed $\val \in Q_2$, $\phi(\cdot, \val)$ is $\lambda_1$-sc wrt $d$ ($\lambda_1 \ge 0$) and is differentiable on a open set containing $\dom \Psi$.  For all fixed $\xvec \in \dom \Psi$, $\phi(\xvec, \cdot)$ is strictly concave.  Therefore, the $\argmax_{\val \in Q_2} \phi(\xvec, \val)$ is unique and we denote it as $\val(\xvec)$.

Let us define
\begin{align}
\label{eq:dual_f_def_inf_mem}
f(\xvec) := \max_{\val \in Q_2} \phi(\xvec, \val).
\end{align}
Then by Denskin's theorem \citep[][Theorem B.25]{Bertsekas95}, $f$ must be convex and differentiable on $\dom \Psi$.  We further assume that $f$ is $L$-\lcg\ on $\dom \Psi$.  A key strong convexity property of $f$ is:
\begin{lemma}
  Given all the above assumptions on $\phi$, $f(\xvec)$ must be $\lambda_1$-sc.  However, the converse is not necessarily true, \ie\ $f(\xvec)$ being $\lambda_1$-sc does not entail that $\phi(\cdot, \val)$ is $\lambda_1$-sc for all fixed $\val \in Q_2$.
\end{lemma}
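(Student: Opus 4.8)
The plan is to treat the two halves of the statement separately: the positive claim is a one–line consequence of Danskin's theorem evaluated at the unique maximizer, while the converse is settled by an explicit counterexample built from two quadratics.

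For the forward direction, fix arbitrary $\xvec,\yvec\in\dom\Psi$ and put $\val^*:=\val(\yvec)=\argmax_{\val\in Q_2}\phi(\yvec,\val)$, which is well defined and unique because $\phi(\yvec,\cdot)$ is strictly concave on the compact set $Q_2$. The first step is to record, via Danskin's theorem \citep[][Theorem B.25]{Bertsekas95} (the same invocation already used to assert differentiability of $f$), that $f(\yvec)=\phi(\yvec,\val^*)$ and $\grad f(\yvec)=\grad_{\xvec}\phi(\yvec,\val^*)$. The second step invokes the hypothesis that $\phi(\cdot,\val^*)$ is $\lambda_1$-sc wrt $d$ and differentiable near $\yvec$, which gives, for the arbitrary $\xvec$,
\[
\phi(\xvec,\val^*)\ \ge\ \phi(\yvec,\val^*)+\inner{\grad_{\xvec}\phi(\yvec,\val^*)}{\xvec-\yvec}+\lambda_1\Delta(\xvec,\yvec).
\]
The third step bounds the left side from above by $f(\xvec)=\max_{\val\in Q_2}\phi(\xvec,\val)\ge\phi(\xvec,\val^*)$ and rewrites the right side using the two identities from step one, yielding $f(\xvec)\ge f(\yvec)+\inner{\grad f(\yvec)}{\xvec-\yvec}+\lambda_1\Delta(\xvec,\yvec)$. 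Since $f$ is differentiable, $\partial f(\yvec)=\{\grad f(\yvec)\}$, so this is exactly the inequality demanded by Definition \ref{def:gen_sc}; as $\xvec,\yvec$ were arbitrary, $f$ is $\lambda_1$-sc wrt $d$.

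For the converse I would exhibit a one-dimensional example. Take $E=\RR$, $d(x)=\frac12 x^2$ (so $\sigma=1$ and ``$\lambda_1$-sc wrt $d$'' reduces to $f''\ge\lambda_1$), $\lambda_1=1$, $\lambda_2=0$, $\Psi\equiv 0$, $Q_2=[0,1]$, and
\[
\phi(x,\val):=\val\,x^2+(1-\val)\tfrac14 x^2+\delta\,\val(1-\val)
\]
for a small fixed $\delta>0$. For every fixed $x$ the coefficient of $\val^2$ is $-\delta<0$, so $\phi(x,\cdot)$ is strictly concave, its maximizer over $[0,1]$ is unique, and hence $f(x):=\max_{\val\in[0,1]}\phi(x,\val)$ is convex and differentiable with bounded second derivative (so it is $L$-\lcg). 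A routine computation of the clamped optimal $\val$ shows $f(x)=x^2$ for $x^2\ge\tfrac43\delta$ and a quartic with $f''\ge\tfrac54$ for $x^2\le\tfrac43\delta$, and that $f$ is $C^1$ at the seam; hence $f$ is $1$-sc. However the slice at $\val=0$ is $\phi(\cdot,0)=\tfrac14 x^2$, with second derivative $\tfrac12<1$, so $\phi(\cdot,0)$ is not $1$-sc. This gives the required counterexample.

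The forward half has essentially no obstacle beyond observing that differentiability of $f$ collapses $\partial f(\yvec)$ to a singleton, so a single first-order inequality at $\grad f(\yvec)$ is precisely what Definition \ref{def:gen_sc} requires. The only mildly delicate point lies in the converse: checking that the constructed $\phi$ simultaneously satisfies \emph{all} the standing hypotheses — strict concavity in $\val$, differentiability and Lipschitz gradient of $f$, and $1$-strong convexity of $f$ — which amounts to verifying the piecewise formula for $f$ and that it is $C^1$ at $x^2=\tfrac43\delta$. This is elementary, but it is where the bookkeeping sits.
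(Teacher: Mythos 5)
Your forward direction is exactly the paper's argument: fix $\xvec_1$, evaluate $\phi(\cdot,\val(\xvec_1))$, use its $\lambda_1$-strong convexity to get the first-order lower bound, bound $\phi(\xvec_2,\val(\xvec_1))\le f(\xvec_2)$, and invoke Danskin's theorem to identify $\grad_{\xvec}\phi(\xvec_1,\val(\xvec_1))$ with $\grad f(\xvec_1)$. Where you go beyond the paper is the second half of the statement: the paper simply asserts that the converse fails and offers no justification, whereas you construct an explicit one-dimensional counterexample. I checked your construction: with $\phi(x,\val)=\tfrac14x^2+\val\bigl(\tfrac34x^2+\delta\bigr)-\delta\val^2$ the unconstrained maximizer is $\val^*=\tfrac12+\tfrac{3x^2}{8\delta}\ge\tfrac12$, which clamps to $1$ precisely when $x^2\ge\tfrac43\delta$, giving $f(x)=x^2$ there and $f(x)=\tfrac{9}{64\delta}x^4+\tfrac58x^2+\tfrac{\delta}{4}$ inside; both pieces and their derivatives agree at the seam ($f=\tfrac43\delta$, $f'=2x$), $f''\in[\tfrac54,\tfrac72]$ throughout, so $f$ is $1$-sc and \lcg, while $\phi(\cdot,0)=\tfrac14x^2$ is only $\tfrac12$-sc (note every slice $\phi(\cdot,\val)$ is still $\tfrac12$-sc, so the standing hypotheses hold with $\lambda_1=\tfrac12$, and the example shows $f$ can be strictly more strongly convex than the worst slice). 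The arithmetic is right, and this supplies a verification the paper leaves implicit.
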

\begin{proof}
  For any $\xvec_1, \xvec_2 \in \dom \Psi$, we have
  \begin{align*}
    f(\xvec_2) &= \max_{\val \in Q_2} \phi(\xvec_2, \val) \ge \phi(\xvec_2, \val(\xvec_1)) \\
    &\ge \phi(\xvec_1, \val(\xvec_1)) + \inner{\grad_{\xvec}\phi(\xvec_1, \val(\xvec_1))}{\xvec_2 - \xvec_1} \\
    &\qquad \qquad + \lambda_1 \Delta(\xvec_2, \xvec_1) \\
    &= f(\xvec_1) + \inner{\grad f(\xvec_1)}{\xvec_2 - \xvec_1} + \lambda_1 \Delta(\xvec_2, \xvec_1),
  \end{align*}
  where the last step is by Denskin's theorem.
\end{proof}

We also define a dual objective
\begin{align}
J(\xvec) &:= \Psi(\xvec) + \max_{\val \in Q_2} \phi(\xvec, \val) \nonumber \\
\label{eq:dual_g_def_inf_mem}
D(\val) &:= \min_{\xvec} \cbr{\phi(\xvec, \val) + \Psi(\xvec)}  \quad \text{for } \val \in Q_2
\end{align}
where the $\argmin$ in \eqref{eq:dual_g_def_inf_mem} may be not unique and $D(\val)$ may be nonsmooth.  Our assumptions above ensure that for any $\val \in Q_2$ and any $\xvec$, the following is true:
\[
D(\val) \le J(\xvec), \quad \text{and} \quad \max_{\val \in Q_2} D(\val) = \min_{\xvec} J(\xvec).
\]

When applied to minimize $J(\xvec)$, AGM-EF-$\infty$ (with or without adaptive $L$) produces a sequence of $\cbr{\xvec_k, \uvec_k, \zvec_k}$.  It is our goal to design a sequence of dual variables $\cbr{\val_k}$ based on $\cbr{\xvec_i, \uvec_i, \zvec_i : i \le k}$ such that the duality gap
\[
\delta_k := J(\xvec_k) - D(\val_k)
\]
goes to 0 fast.  Since
\[
\delta_k \ge J(\xvec_k) - \max_{\val \in Q_2} D(\val) = J(\xvec_k) - \min_{\xvec} J(\xvec),
\]
so once $\delta_k$ falls below a prescribed tolerance $\epsilon$, $\xvec_k$ is guaranteed to be an $\epsilon$ accurate solution of $J$.  Indeed we will show that the following construction of $\val_k$ meets our need:
\begin{align}
\label{eq:inf_mem_x_to_val}
\val_k = \frac{1}{A_k} \sum_{i=1}^k a_i \val(\uvec_i).
\end{align}
where $a_i$ and $A_k$ are also from AGM-EF-$\infty$.  \eqref{eq:inf_mem_x_to_val} can be equivalently reformulated into a recursion which allows efficient update of $\val_k$:
\[
\val_1 = \val(\uvec_1), \text{ and } \val_{k+1} = \frac{A_k}{A_{k+1}} \val_k + \frac{a_{k+1}}{A_{k+1}}\val(\uvec_{k+1}).
\]

\begin{theorem}
\label{thm:inf_mem_duality_gap}
  Suppose a sequence $\cbr{\xvec_k, \uvec_k, \zvec_k}$ is produced when AGM-EF-$\infty$ is applied to minimize $J(\xvec)$ by treating $f$ as $\lambda_1$-sc.  Then the $\cbr{\val_k}$ defined by \eqref{eq:inf_mem_x_to_val} satisfies $\val_k \in Q_2$ and
  \begin{align}
  \label{eq:inf_mem_duality_bound}
  \delta_k = J(\xvec_k) - D(\val_k) \le \frac{1}{A_k} \max_{\xvec \in \dom \Psi} \Delta(\xvec, \uvec_0).
  \end{align}
\end{theorem}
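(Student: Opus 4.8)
The plan is to multiply the duality gap by $A_k$, use Lemma~\ref{lem:inf_mem_recur} to replace $A_k J(\xvec_k)$ by $\min_{\xvec}\psi_k(\xvec)$, and then bound $\psi_k$ from above by $\Delta(\xvec,\uvec_0)+A_k[\phi(\xvec,\val_k)+\Psi(\xvec)]$; evaluating this bound at a point attaining $D(\val_k)$ then closes the argument. First, the easy claim $\val_k\in Q_2$: by \eqref{eq:inf_mem_x_to_val} the point $\val_k$ is a convex combination of $\val(\uvec_1),\dots,\val(\uvec_k)\in Q_2$ with weights $a_i/A_k$ that sum to $1$ (since $A_0=0$ forces $\sum_{i=1}^k a_i=A_k$), and $Q_2$ is convex.

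The crux is to collapse the $k$ linearizations $\ell_f(\cdot;\uvec_i,\lambda_1)$ appearing in \eqref{eq:inf_mem_psi_k_exp} into the single function $\phi(\cdot,\val_k)$. For each $i$, by Danskin's theorem (as used in the preceding lemma) we have $f(\uvec_i)=\phi(\uvec_i,\val(\uvec_i))$ and $\grad f(\uvec_i)=\grad_{\xvec}\phi(\uvec_i,\val(\uvec_i))$; since $\phi(\cdot,\val(\uvec_i))$ is $\lambda_1$-sc wrt $d$, Definition~\ref{def:gen_sc} applied at $\uvec_i$ gives, for every $\xvec$,
\[
\ell_f(\xvec;\uvec_i,\lambda_1)=f(\uvec_i)+\inner{\grad f(\uvec_i)}{\xvec-\uvec_i}+\lambda_1\Delta(\xvec,\uvec_i)\le\phi(\xvec,\val(\uvec_i)).
\]
Averaging over $i$ with weights $a_i/A_k$ and using that $\phi(\xvec,\cdot)$ is concave on $Q_2$ (Jensen's inequality) yields $\frac{1}{A_k}\sum_{i=1}^k a_i\,\ell_f(\xvec;\uvec_i,\lambda_1)\le\frac{1}{A_k}\sum_{i=1}^k a_i\,\phi(\xvec,\val(\uvec_i))\le\phi(\xvec,\val_k)$. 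Substituting into \eqref{eq:inf_mem_psi_k_exp} and using $\sum_{i=1}^k a_i\Psi(\xvec)=A_k\Psi(\xvec)$ gives, for all $\xvec$,
\[
\psi_k(\xvec)\le\Delta(\xvec,\uvec_0)+A_k\bigl[\phi(\xvec,\val_k)+\Psi(\xvec)\bigr].
\]

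To conclude, let $\xvec^\sharp$ attain the minimum defining $D(\val_k)$ in \eqref{eq:dual_g_def_inf_mem} (if the infimum is not attained, pick $\xvec^\sharp$ achieving it up to $\ve$ and let $\ve\to0$ at the end); note $\xvec^\sharp\in\dom\Psi$ because $\Psi(\xvec^\sharp)$ is finite. Then, by Lemma~\ref{lem:inf_mem_recur} and the displayed bound evaluated at $\xvec^\sharp$,
\[
A_k J(\xvec_k)\le\psi_k(\xvec^\sharp)\le\Delta(\xvec^\sharp,\uvec_0)+A_k\bigl[\phi(\xvec^\sharp,\val_k)+\Psi(\xvec^\sharp)\bigr]=\Delta(\xvec^\sharp,\uvec_0)+A_k D(\val_k),
\]
and since $\xvec^\sharp\in\dom\Psi$ we may bound $\Delta(\xvec^\sharp,\uvec_0)\le\max_{\xvec\in\dom\Psi}\Delta(\xvec,\uvec_0)$. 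Rearranging gives $A_k\delta_k=A_k\bigl(J(\xvec_k)-D(\val_k)\bigr)\le\max_{\xvec\in\dom\Psi}\Delta(\xvec,\uvec_0)$, which is \eqref{eq:inf_mem_duality_bound}. The same argument applies verbatim to the adaptive-$L$ variant, since there \eqref{eq:inf_mem_recur} is enforced explicitly by the repeat--until loop and the weights $a_i,A_k$ are still generated as above. I expect the main obstacle to be bookkeeping rather than conceptual: namely, making the "average the cutting planes into one $\phi(\cdot,\val_k)$" step airtight by correctly invoking Danskin together with the $\lambda_1$-strong convexity of each $\phi(\cdot,\val(\uvec_i))$ and concavity in $\val$, and handling the case where $D(\val_k)$ is an infimum not attained in $\dom\Psi$ (and the degenerate case where $\max_{\xvec\in\dom\Psi}\Delta(\xvec,\uvec_0)=+\infty$, where the bound is vacuous).
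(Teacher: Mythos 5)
Your proposal is correct and follows essentially the same route as the paper's proof: the identical key inequality $\ell_f(\xvec;\uvec_i,\lambda_1)\le\phi(\xvec,\val(\uvec_i))$ via Danskin plus the $\lambda_1$-strong convexity of $\phi(\cdot,\val(\uvec_i))$, followed by Jensen in the concave dual argument and the relation $A_kJ(\xvec_k)\le\min_{\xvec}\psi_k(\xvec)$ from Lemma~\ref{lem:inf_mem_recur}. Your explicit handling of a (near-)minimizer $\xvec^\sharp$ of $D(\val_k)$ is a slightly more careful rendering of the paper's final line, which simply splits the min and bounds $\Delta(\cdot,\uvec_0)$ by its max over $\dom\Psi$.
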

\begin{proof}
  Since $\uvec_i \in \dom \Psi$, so $\val(\uvec_i) \in Q_2$.  And $\val_k$ is a convex combination of $\val(\uvec_i)$ ($i \le k$), so $\val_k \in Q_2$.  Using the fact that $\phi(\xvec, \val)$ is $\lambda_1$-sc in $\alphavec$ for all fixed $\xvec$, we have
  \begin{align}
    &\ell_f(\xvec; \uvec_i, \lambda_1) \nonumber \\
    &= f(\uvec_i) + \inner{\grad f(\uvec_i)}{\xvec - \uvec_i} + \lambda_1 \Delta(\xvec, \uvec_i) \nonumber \\
    &= \phi(\uvec_i, \val(\uvec_i)) \! + \! \inner{\grad_{\xvec} \phi(\uvec_i, \val(\uvec_i))}{\xvec - \uvec_i} \! + \! \lambda_1 \Delta(\xvec, \uvec_i) \nonumber \\
    \label{eq:inf_mem_duality_keystep}
    &\le \phi(\xvec, \val(\uvec_i)).
  \end{align}
  Now by using relationship \eqref{eq:inf_mem_recur} and \eqref{eq:inf_mem_psi_k_exp}, we have
  {
  \allowdisplaybreaks
  \begin{align*}
    &A_k J(\xvec_k) \\
    &\le \min_{\xvec} \cbr{\Delta(\xvec, \uvec_0) + A_k \Psi(\xvec) + \sum_{i=1}^k a_i \ell_f(\xvec; \uvec_i, \lambda_1)} \\
    &\le \min_{\xvec} \Delta(\xvec, \uvec_0) + A_k \Psi(\xvec) + \sum_{i=1}^k a_i \phi(\xvec, \val(\uvec_i)) \\
    &\le \min_{\xvec} \Delta(\xvec, \uvec_0) + A_k \Psi(\alphavec) + A_k \phi \rbr{\xvec, \frac{1}{A_k} \sum_{i=1}^k a_i \val(\uvec_i)} \\
    &\le \max_{\xvec \in \dom \Psi} \Delta(\xvec, \uvec_0) + A_k \min_{\xvec} \cbr{\Psi(\xvec) + \phi(\xvec, \val_k)} \\
    &= \max_{\xvec \in \dom \Psi} \Delta(\xvec, \uvec_0) + A_k D(\val_k). \qedhere
  \end{align*}
  }
\end{proof}
So $\delta_k$ converges linearly as long as $\lambda_1 + \lambda_2 > 0$.  If $\dom \Psi$ is unbounded and $\max_{\xvec \in \dom \Psi} \Delta(\xvec, \uvec_0) = \infty$, then the bound in \eqref{eq:inf_mem_duality_bound} becomes vacuous.

We emphasize that in Theorem \ref{thm:inf_mem_duality_gap}, AGM-EF-$\infty$ is invoked by treating $f$ as $\lambda_1$-sc, although the real strong convexity constant $\lambda'_1$ of $f$ may be greater than $\lambda_1$.  In this case, the duality gap will decay at a slower rate than that for the gap of $J$ (by using $\lambda'_1$ in AGM-EF-$\infty$).  However the strong convexity of $\Psi$ is still fully utilized in the duality gap, and in many machine learning problems the strong convexity does come from $\Psi$ rather than $f$ (\ie\ $\lambda_1 = \lambda'_1 = 0$).

\section{1-memory AGM-EF}
\label{sec:one_mem_nest}

Note that AGM-EF-$\infty$ keeps a nonparametric form \eqref{eq:inf_mem_psi_k_exp} of the model $\psi_k(\xvec)$ whose complexity grows with iteration.  In 1-memory AGM-EF, the model is compressed to a simple parametric form in each iteration.  \citet{AusTeb04} gave a Bregman version for unconstrained optimization.  \citep{Nesterov83} provided an algorithm for constrained problems with Euclidean distance as the prox-function.  However, only \citep{LanLuMon09} and \citep{Tseng08} accommodate both Bregman divergence and constraints.  But their algorithms do not extend to strongly convex objectives and restrict the estimate of $L$ to be nondecreasing through iterations.  Therefore, we propose in this section a 1-memory AGM-EF which uses Bregman prox-function, and allows constraints and non-monotonic adaptive tuning of $L$.

\begin{algorithm}[t]
\begin{algorithmic}[1]
\caption{\label{algo:one_mem_nest}1-memory AGM-EF (AGM-EF-$1$).}
 \STATE Arbitrarily pick $\uvec_0 \in \dom \Psi$.

 \STATE Initialize $c_0 \leftarrow \frac{L}{\sigma} + \lambda_2$.

 \STATE $q_{0}(\xvec) \leftarrow \frac{L}{\sigma} \Delta(\xvec, \uvec_0) + \Psi(\xvec) + \ell_f(\xvec; \uvec_0, 0)$.

 \STATE $\xvec_0  = \zvec_0 \leftarrow \argmin_{\xvec} q_0(\xvec)$.

 \FOR{$k = 0, 1, \ldots$}

   \STATE Assign to $a_{k+1}$ the positive root (in $a$) of $\phantom{aaaaaa}$ $\sigma (1-a)(c_k + \lambda_2 a) + \sigma \lambda_1 a = L a^2$.

   \STATE $c_{k+1} \leftarrow (1-a_{k+1}) c_k  + (\lambda_1 + \lambda_2) a_{k+1}$. \\ $\tau_1 \leftarrow (1-a_{k+1})c_k$, $\tau_2 \leftarrow \lambda_1 a_{k+1}$, \\
   $\tau_3 \leftarrow \lambda_2 a_{k+1} (1-a_{k+1})$,
   $\tau \leftarrow \tau_1 + \tau_2 + \tau_3$.

   \STATE $\uvec_{k+1} \leftarrow \frac{(\tau - (\tau_1 + \tau_2) a_{k+1}) \xvec_k + \tau_1 a_{k+1} \zvec_k}{\tau - \tau_2 a_{k+1}}$.\newfootnotemark{1}

   \STATE $\psi_{k+1}(\xvec) \leftarrow (1  -  a_{k+1}) q_k(\xvec)   $ \\
    $ \qquad \qquad \qquad \quad + a_{k+1}[\ell_f(\xvec; \uvec_{k+1}, \lambda_1) +  \Psi(\xvec)]$.

   \STATE $\zvec_{k+1} \leftarrow \argmin_{\xvec} \psi_{k+1}(\xvec)$.

   \STATE $\xvec_{k+1} \leftarrow (1-a_{k+1}) \xvec_k + a_{k+1} \zvec_{k+1}$.

   \STATE $q_{k+1}(\xvec) \leftarrow c_{k+1} \Delta(\xvec, \zvec_{k+1}) + \psi_{k+1}(\zvec_{k+1})$.

 \ENDFOR
\end{algorithmic}
\end{algorithm}
\newfootnotetext{0}{$\uvec_{k+1}$ is clearly a convex combination of $\zvec_k$ and $\xvec_k$.}

Arbitrarily pick $\uvec_0 \in \dom \Psi$ and initialize by
{
\allowdisplaybreaks
\begin{align*}
q_0 (\xvec) &:= \frac{L}{\sigma} \Delta(\xvec, \uvec_0) + f(\xvec_0) + \inner{\grad f(\uvec_0)}{\xvec - \uvec_0} + \Psi(\xvec) \\
\xvec_0 &= \zvec_0 = \argmin_{\xvec} q_0(\xvec) \\
c_0 &= \frac{L}{\sigma} + \lambda_2.
\end{align*}
}
Then for all $k \ge 0$, define:
\begin{align*}
  \psi_{k+1}(\xvec) &= (1 \! - \! a_{k+1}) q_k (\xvec) \! + \! a_{k+1} [\ell_{f}(\xvec; \uvec_{k+1}, \lambda_1) \! + \! \Psi(\xvec)] \\
  \zvec_{k+1} &= \argmin_{\xvec} \psi_{k+1}(\xvec) \\
  c_{k+1} &= (1-a_{k+1})c_k + \lambda a_{k+1} \\
  q_{k+1} (\xvec) &= c_{k+1} \Delta(\xvec, \zvec_{k+1}) + \psi_{k+1}(\zvec_{k+1}).
\end{align*}

By construction for all $k \ge 0$, $q_k$ is $c_k$-sc and $\psi_{k+1}$ is strongly convex with constant $(1-a_{k+1})c_k + \lambda a_{k+1}$, \ie\ $c_{k+1}$-sc.  Clearly, for all $k \ge 1$
\begin{align}
\label{eq:one_mem_nadir_match}
\min_{\xvec} \psi_{k}(\xvec) = \psi_{k}(\zvec_{k}) = q_k(\zvec_k) = \min_{\xvec} q_{k} (\xvec).
\end{align}
But except at $\xvec = \zvec_k$, $q_k(\xvec) \neq \psi_k(\xvec)$ in general.  The only case where $q_k(\xvec) \equiv \psi_k(\xvec)$ is when $\Psi(\xvec)$ is an affine function on $\dom d$ and $\dom d \subseteq \dom \Psi$.  Then an inductive application of Property \ref{prpty:simplify_bregman} reveals $q_k(\xvec) \equiv \psi_k(\xvec)$.  Lemma 5.2 of \citep{AusTeb06} is exactly this case with $\Psi(\xvec) \equiv 0$.  However, when $\dom d \nsubseteq \dom \Psi$ then $\zvec_{k+1}$ actually solves a constrained optimization, and then \eqref{eq:breg_comp_uncons} must be changed to $\ge$ which breaks Property \ref{eq:q1_inductive}.

The proof of rate of convergence for Algorithm \ref{algo:one_mem_nest} relies on the following two relations: for all $k \ge 0$ and $\xvec \in \dom \Psi$,
\begin{align}
\label{eq:one_mem_qx}
  q_{k+1}(\xvec) - J(\xvec) &\le (1 - a_{k+1}) (q_k(\xvec) - J(\xvec)) \\
\label{eq:one_mem_fleq}
  J(\xvec_k) &\le q_{k}(\zvec_{k}).
\end{align}

From these three inequalities, we get for all $\xvec \in \dom \Psi$,
\begin{align}
  J(\xvec_k) &\overset{\eqref{eq:one_mem_fleq}}{\le} q_k(\zvec_k) \overset{\eqref{eq:one_mem_nadir_match}}{\le} q_{k}(\xvec) \nonumber \\
  \label{eq:one_mem_gap_bound}
  &\overset{\eqref{eq:one_mem_qx}}{\le} \! J(\xvec) + \rbr{q_0(\xvec) \! - \! J(\xvec)} \cdot \prod_{i=1}^k (1 - a_i).
\end{align}
So the gap $J(\xvec_k) - J(\xvec)$ decays at the same rate as $\prod_{i=1}^k (1-a_i)$.\footnote{The last inequality of \eqref{eq:one_mem_gap_bound} does not require $q_0(\xvec) \ge J(\xvec)$.  But $q_0(\xvec) \ge J(\xvec)$ can be easily proved by Lemma \ref{lemma:one_mem_qx_nonneg}.} Compared with the $\infty$-memory AGM-EF, the additional inequality \eqref{eq:one_mem_qx} is now needed because the models $q_k$ here are approximations of the $\psi_k$ in \eqref{eq:inf_mem_psi_k_exp}.  Next, we prove the three relations one by one.

\begin{lemma}[Eq \eqref{eq:one_mem_qx}]
  \label{lemma:one_mem_qx}
  For all $k \ge 0$ and $\xvec$, we have
  \[
  q_{k+1}(\xvec) - J(\xvec) \le (1 - a_{k+1}) (q_k(\xvec) - J(\xvec)).
  \]
\end{lemma}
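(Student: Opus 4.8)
The plan is to chain two elementary majorization bounds. First I would show that the compressed surrogate $q_{k+1}$ never exceeds $\psi_{k+1}$ pointwise, and then I would bound $\psi_{k+1}$ above by the convex combination $(1-a_{k+1})q_k+a_{k+1}J$; subtracting $J$ from the resulting inequality is then the whole of the argument.

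In detail, the first step uses strong convexity. By construction $\psi_{k+1}$ is $c_{k+1}$-sc wrt $d$: it is the sum of the $c_k$-sc function $(1-a_{k+1})q_k$ — here I note $a_{k+1}\in(0,1)$, which follows from the defining quadratic $\sigma(1-a)(c_k+\lambda_2 a)+\sigma\lambda_1 a = La^2$ together with $c_k>0$ and $L>\sigma\lambda_1$ — the $a_{k+1}\lambda_1$-sc function $a_{k+1}\ell_f(\cdot;\uvec_{k+1},\lambda_1)$, and the $a_{k+1}\lambda_2$-sc function $a_{k+1}\Psi$, so by Property~\ref{prpty:sum_sc} and the identity $c_{k+1}=(1-a_{k+1})c_k+\lambda a_{k+1}$ the claim on the constant follows. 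Since $\Psi$ absorbs any feasible-set indicator, $\zvec_{k+1}=\argmin_\xvec\psi_{k+1}(\xvec)$ is a genuine minimizer (unconstrained over $\dom\Psi$), so Property~\ref{prpty:sc_min_quad} applied to $\psi_{k+1}$ gives $\psi_{k+1}(\xvec)\ge\psi_{k+1}(\zvec_{k+1})+c_{k+1}\Delta(\xvec,\zvec_{k+1})$ for all $\xvec$. Comparing with the definition $q_{k+1}(\xvec)=c_{k+1}\Delta(\xvec,\zvec_{k+1})+\psi_{k+1}(\zvec_{k+1})$, the two $c_{k+1}\Delta$ terms cancel and we obtain $q_{k+1}(\xvec)\le\psi_{k+1}(\xvec)$ for every $\xvec$.

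The second step uses the $\lambda_1$-strong convexity of $f$. Expanding $\psi_{k+1}(\xvec)=(1-a_{k+1})q_k(\xvec)+a_{k+1}[\ell_f(\xvec;\uvec_{k+1},\lambda_1)+\Psi(\xvec)]$ and invoking the inequality $\ell_f(\xvec;\uvec_{k+1},\lambda_1)\le f(\xvec)$ recorded right after the definition of $\ell_f$ — which needs only $a_{k+1}\ge0$ to carry through the multiplication — gives $\psi_{k+1}(\xvec)\le(1-a_{k+1})q_k(\xvec)+a_{k+1}(f(\xvec)+\Psi(\xvec))=(1-a_{k+1})q_k(\xvec)+a_{k+1}J(\xvec)$. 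Combining with the first step yields $q_{k+1}(\xvec)\le(1-a_{k+1})q_k(\xvec)+a_{k+1}J(\xvec)$, and subtracting $J(\xvec)$ from both sides produces exactly $q_{k+1}(\xvec)-J(\xvec)\le(1-a_{k+1})(q_k(\xvec)-J(\xvec))$.

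There is essentially no hard step; the content lies entirely in correctly identifying the strong convexity constant of $\psi_{k+1}$ and in the cancellation of the Bregman terms between $q_{k+1}$ and $\psi_{k+1}$. The only places deserving a sentence of care are the bookkeeping that $a_{k+1}\in(0,1)$, so that all coefficient manipulations use legitimate convex combinations, and the remark that encoding any feasible set as the indicator inside $\Psi$ lets Property~\ref{prpty:sc_min_quad} apply to $\psi_{k+1}$ verbatim, without having to invoke the inequality form of the optimality condition for a constrained minimizer.
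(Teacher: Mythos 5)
Your proof is correct and follows essentially the same two-step argument as the paper: the strong-convexity inequality $\psi_{k+1}(\xvec)\ge\psi_{k+1}(\zvec_{k+1})+c_{k+1}\Delta(\xvec,\zvec_{k+1})=q_{k+1}(\xvec)$ chained with $\ell_f(\xvec;\uvec_{k+1},\lambda_1)\le f(\xvec)$ to bound $\psi_{k+1}$ by $(1-a_{k+1})q_k+a_{k+1}J$. The only cosmetic difference is that you invoke Property~\ref{prpty:sc_min_quad} where the paper cites Property~\ref{prpty:sc_min_key_one}; both deliver the identical key inequality, and your extra bookkeeping on $a_{k+1}\in(0,1)$ and the constant $c_{k+1}$ matches what the paper states just before the lemma.
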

\begin{proof}
Since $\zvec_{k+1}$ minimizes $\psi_{k+1}(\xvec)$ and $\psi_{k+1}(\xvec)$ is $c_{k+1}$-sc,  so by Property \ref{prpty:sc_min_key_one} we have
\begin{align}
\label{eq:one_mem_qx_key}
  \psi_{k+1}(\xvec) \ge \psi_{k+1}(\zvec_{k+1}) + c_{k+1} \Delta(\xvec, \zvec_{k+1}).
\end{align}
So for all $\xvec \in Q$,
  \begin{align}
  &(1 - a_{k+1}) q_k(\xvec) + a_{k+1} J(\xvec) \nonumber \\
  \label{eq:one_mem_prop_line1}
    &\ge (1 \! - \! a_{k+1})q_k(\xvec) \! + \! a_{k+1} [\ell_{f}(\xvec; \! \uvec_{k+1}, \lambda_1) \! + \! \Psi(\xvec)]\\
  &= \psi_{k+1}(\xvec) \nonumber \\
  &\ge \psi_{k+1}(\zvec_{k+1}) + c_{k+1} \Delta(\xvec, \zvec_{k+1}) \quad \text{ (by \eqref{eq:one_mem_qx_key})} \nonumber \\
  \label{eq:one_mem_prop_line2}
  &= q_{k+1}(\xvec).
  \end{align}
\end{proof}

\begin{lemma}[Eq \eqref{eq:one_mem_fleq}]
  \label{lemma:one_mem_fleq}
  For all $k \ge 0$, $J(\xvec_k) \le q_{k}(\zvec_{k})$.
\end{lemma}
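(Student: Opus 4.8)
The plan is a straightforward induction on $k$, mirroring the proof of Lemma~\ref{lem:inf_mem_recur} but carried out with the compressed model $q_k$ in place of the nonparametric $\psi_k$. Throughout, the two auxiliary facts I will repeatedly use are $\Delta(\xvec,\yvec)\ge\frac{\sigma}{2}\nbr{\xvec-\yvec}^2$ and the $L$-\lcg\ inequality $f(\xvec)\le f(\yvec)+\inner{\grad f(\yvec)}{\xvec-\yvec}+\frac{L}{2}\nbr{\xvec-\yvec}^2$.

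\emph{Base case $k=0$.} Since $\xvec_0 = \zvec_0 = \argmin_{\xvec} q_0(\xvec)$, it suffices to show $q_0(\xvec)\ge J(\xvec)$ for all $\xvec$. By $\sigma$-strong convexity of $d$, $\frac{L}{\sigma}\Delta(\xvec,\uvec_0)\ge\frac{L}{2}\nbr{\xvec-\uvec_0}^2$, and by $L$-\lcg\ of $f$, $f(\xvec)\le \ell_f(\xvec;\uvec_0,0)+\frac{L}{2}\nbr{\xvec-\uvec_0}^2$; adding $\Psi(\xvec)$ to both yields $q_0(\xvec)\ge f(\xvec)+\Psi(\xvec)=J(\xvec)$, hence in particular $q_0(\zvec_0)=q_0(\xvec_0)\ge J(\xvec_0)$.

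\emph{Inductive step.} Assume $J(\xvec_k)\le q_k(\zvec_k)$. Since $q_{k+1}(\zvec_{k+1})=\psi_{k+1}(\zvec_{k+1})$ (immediate from Step 12 of Algorithm~\ref{algo:one_mem_nest}, or from~\eqref{eq:one_mem_nadir_match}), it is enough to prove $J(\xvec_{k+1})\le\psi_{k+1}(\zvec_{k+1})$. I would expand $\psi_{k+1}(\zvec_{k+1})=(1-a_{k+1})q_k(\zvec_{k+1})+a_{k+1}\sbr{\ell_f(\zvec_{k+1};\uvec_{k+1},\lambda_1)+\Psi(\zvec_{k+1})}$, then lower-bound $q_k(\zvec_{k+1})\ge q_k(\zvec_k)+c_k\Delta(\zvec_{k+1},\zvec_k)$ by Property~\ref{prpty:sc_min_quad} ($q_k$ is $c_k$-sc and $\zvec_k$ minimizes it), apply the induction hypothesis $q_k(\zvec_k)\ge f(\xvec_k)+\Psi(\xvec_k)$, and replace $f(\xvec_k)$ by its linearization at $\uvec_{k+1}$ using convexity of $f$. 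After expanding $\ell_f$ and collecting terms, the coefficient of $\grad f(\uvec_{k+1})$ is $(1-a_{k+1})\xvec_k+a_{k+1}\zvec_{k+1}-\uvec_{k+1}=\xvec_{k+1}-\uvec_{k+1}$ by the definition of $\xvec_{k+1}$, and what remains is $f(\uvec_{k+1})+\inner{\grad f(\uvec_{k+1})}{\xvec_{k+1}-\uvec_{k+1}}$ together with $(1-a_{k+1})\Psi(\xvec_k)+a_{k+1}\Psi(\zvec_{k+1})$ and $\tau_1\Delta(\zvec_{k+1},\zvec_k)+\tau_2\Delta(\zvec_{k+1},\uvec_{k+1})$, with $\tau_1,\tau_2$ as in Step 7. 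I then invoke Property~\ref{prpty:two_point_sc} for the $\lambda_2$-sc function $\Psi$ at $\xvec_{k+1}=(1-a_{k+1})\xvec_k+a_{k+1}\zvec_{k+1}$ to get $(1-a_{k+1})\Psi(\xvec_k)+a_{k+1}\Psi(\zvec_{k+1})\ge\Psi(\xvec_{k+1})+\frac{\sigma}{2}\tau_3\nbr{\zvec_{k+1}-\xvec_k}^2$, bound each Bregman term below by $\frac{\sigma}{2}\nbr{\cdot}^2$, and merge the three squared norms by convexity of $t\mapsto t^2$ with the triangle inequality and homogeneity of $\nbr{\cdot}$ (exactly step (d) of Lemma~\ref{lem:inf_mem_recur}): $\tau_1\nbr{\zvec_{k+1}-\zvec_k}^2+\tau_2\nbr{\zvec_{k+1}-\uvec_{k+1}}^2+\tau_3\nbr{\zvec_{k+1}-\xvec_k}^2\ge\tau\nbr{\zvec_{k+1}-\tau^{-1}(\tau_1\zvec_k+\tau_2\uvec_{k+1}+\tau_3\xvec_k)}^2$, where $\tau=\tau_1+\tau_2+\tau_3$.

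\emph{The two identities, which is where the work lies.} Steps 6 and 8 are reverse-engineered so that (i) solving the defining relation for $\uvec_{k+1}$ (it occurs on both sides through $\tau_2$) yields $\xvec_{k+1}-\uvec_{k+1}=a_{k+1}\bigl(\zvec_{k+1}-\tau^{-1}(\tau_1\zvec_k+\tau_2\uvec_{k+1}+\tau_3\xvec_k)\bigr)$, and (ii) expanding the equation $\sigma(1-a)(c_k+\lambda_2 a)+\sigma\lambda_1 a=La^2$ defining $a_{k+1}$ gives $\sigma\tau=\sigma(\tau_1+\tau_2+\tau_3)=La_{k+1}^2$. Substituting both, $\frac{\sigma}{2}\tau\nbr{\zvec_{k+1}-\tau^{-1}(\cdots)}^2=\frac{L}{2}\nbr{\xvec_{k+1}-\uvec_{k+1}}^2$, so $\psi_{k+1}(\zvec_{k+1})\ge f(\uvec_{k+1})+\inner{\grad f(\uvec_{k+1})}{\xvec_{k+1}-\uvec_{k+1}}+\frac{L}{2}\nbr{\xvec_{k+1}-\uvec_{k+1}}^2+\Psi(\xvec_{k+1})\ge f(\xvec_{k+1})+\Psi(\xvec_{k+1})=J(\xvec_{k+1})$, the last step by $L$-\lcg\ of $f$; this closes the induction. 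The main obstacle is exactly this last algebraic bookkeeping — verifying that the coefficients in Steps 6--8 make the quadratic inequality tight. Along the way one should record that $a_{k+1}\in(0,1)$, so that $1-a_{k+1}>0$, $\tau_1,\tau_3\ge0$, and $c_{k+1}>0$; this holds because the quadratic in $a$ is negative at $a=0$ (constant term $-\sigma c_k<0$) and positive at $a=1$ (value $L-\sigma\lambda_1>0$ under the standing assumption $L>\sigma\lambda_1$).
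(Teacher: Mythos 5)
Your proof is correct and follows essentially the same route as the paper's: induction with the same chain of inequalities (strong convexity of $q_k$ at its minimizer $\zvec_k$, the induction hypothesis, linearization of $f$ at $\uvec_{k+1}$, two-point strong convexity of $\Psi$, merging the three squared norms, and the algebraic identities built into Steps 6--8 of the algorithm), and your verification of those identities and of $a_{k+1}\in(0,1)$ checks out. Your base case is in fact slightly more careful than the paper's, which asserts $q_0(\zvec_0)=J(\xvec_0)$ where only $q_0(\zvec_0)\ge J(\xvec_0)$ holds in general; your pointwise bound $q_0(\xvec)\ge J(\xvec)$ via the $L$-\lcg{} property is the right justification.
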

\begin{proof}
  We prove by induction.  First, when $k=0$ $q_0(\zvec_0) = J(\xvec_0)$.  Now suppose \eqref{eq:one_mem_fleq} holds for certain $k \ge 0$.  Then
  {
  \allowdisplaybreaks
  \begin{align*}
    &q_{k+1}(\zvec_{k+1}) = \psi_{k+1}(\zvec_{k+1}) \\
    &= (1 \! - \! a_{k+1})q_k(\zvec_{k+1}) \! + \! a_{k+1} [\ell_f(\zvec_{k+1}; \uvec_{k+1}, \lambda_1) \! + \! \Psi(\zvec_{k+1})] \\
    &\overset{(a)}{\ge} (1-a_{k+1}) [q_k(\zvec_k) + c_k \Delta(\zvec_{k+1}, \zvec_k)] \\
    &\qquad \qquad + a_{k+1} [\ell_f(\zvec_{k+1}; \uvec_{k+1}, \lambda_1) + \Psi(\zvec_{k+1})] \\
    &\overset{(b)}{\ge} (1-a_{k+1}) [f(\xvec_k) + \Psi(\xvec_k) + c_k \Delta(\zvec_{k+1}, \zvec_k)] \\
    &\qquad \qquad + a_{k+1} \ell_f(\zvec_{k+1}; \uvec_{k+1}, \lambda_1) + a_{k+1} \Psi(\zvec_{k+1}) \\
    &\overset{(c)}{\ge} (1-a_{k+1}) [f(\uvec_{k+1}) + \inner{\grad f(\uvec_{k+1})}{\xvec_k - \uvec_{k+1}} \\
    &\quad + \frac{c_k \sigma}{2} \nbr{\zvec_{k+1} - \zvec_{k}}^2] + a_{k+1} [f(\uvec_{k+1})\\
    &\quad +\inner{\grad f(\uvec_{k+1})}{\zvec_{k+1} - \uvec_{k+1}} + \frac{\lambda_1 \sigma}{2} \nbr{\zvec_{k+1} - \uvec_{k+1}}^2] \\
    &\quad + (1-a_{k+1}) \Psi(\xvec_k) + a_{k+1} \Psi(\zvec_{k+1}) \\
    &\overset{(d)}{\ge} \Psi(\xvec_{k+1}) + f(\uvec_{k+1}) \\
    &\quad + \inner{\grad f(\uvec_{k+1})}{(1-a_{k+1}) \xvec_k + a_{k+1} \zvec_{k+1} - \uvec_{k+1}} \\
    &\quad + \frac{\sigma}{2} c_k (1 - a_{k+1}) \nbr{\zvec_{k+1} - \zvec_{k}}^2  \\
    &\quad + \frac{\sigma}{2} \lambda_1 a_{k+1} \nbr{\zvec_{k+1} - \uvec_{k+1}}^2 \\
    &\quad + \frac{\sigma}{2} \lambda_2 a_{k+1} (1-a_{k+1}) \nbr{\zvec_{k+1} - \xvec_k}^2 \\
    &\overset{(e)}{\ge} \Psi(\xvec_{k+1}) + f(\uvec_{k+1}) \\
    & \quad +\inner{\grad f(\uvec_{k+1})}{(1-a_{k+1}) \xvec_k + a_{k+1} \zvec_{k+1} - \uvec_{k+1}} \\
    & \quad + \frac{\sigma}{2} (\tau_1 + \tau_2 + \tau_3) \nbr{\zvec_{k+1} - \frac{\tau_1 \zvec_k + \tau_2 \uvec_{k+1} + \tau_3 \xvec_k}{\tau_1 + \tau_2 + \tau_3}}^2 \\
    &\overset{(f)}{=} \Psi(\xvec_{k+1}) + f(\uvec_{k+1}) \\
    &\quad + \inner{\grad f(\uvec_{k+1})}{\xvec_{k+1} - \uvec_{k+1}} + \frac{L}{2} \nbr{\xvec_{k+1} - \uvec_{k+1}}^2 \\
    &\overset{(g)}{\ge} \Psi(\xvec_{k+1}) + f(\xvec_{k+1}) = J(\xvec_{k+1}),
  \end{align*}
  }

  where (a) is because $\zvec_k$ minimizes $q_k$ and $q_k$ is $c_k$-sc.  (b) is by the induction assumption.  (c) is by the convexity of $f$ and $\sigma$-sc of $d$.  (d) is by the $\lambda_2$-sc of $\Psi$ and Property \ref{prpty:two_point_sc}.  (e) is by the convexity of norm.  (f) is by the definition of $\xvec_{k+1}$ and $\uvec_{k+1}$, and the choice of $a_{k+1}$.  (g) is by the $L$-\lcg\ of $f$.
\end{proof}

Noting that $c_0 \ge \lambda$ by definition, we can bound $\prod_{i=1}^k (1-a_i)$ by invoking Lemma 2.2.4 of \citep{Nesterov03a} with the strong convexity constant being $\lambda$ and the Lipschitz constant of the gradient being
\[
L' := \frac{L}{\sigma} + \lambda_2.
\]
It is easy to verify that the condition number $L' / \lambda$ is monotonically decreasing in $\lambda_2$.
\begin{lemma}
  \label{lemma:one_mem_ck}
  For all $k \ge 1$, we have
  \[
  \prod_{i=1}^k (1-a_i) \le \min \cbr{\rbr{1-\sqrt{\frac{\lambda}{L'}}}^k, \frac{4L'}{(2\sqrt{L'} + k\sqrt{c_0})^2}}.
  \]
\end{lemma}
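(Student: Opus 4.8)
The plan is to recognize the pair of recursions hidden in Algorithm \ref{algo:one_mem_nest} — the rule for $a_{k+1}$ in Step 6 together with the update $c_{k+1} = (1-a_{k+1})c_k + \lambda a_{k+1}$ — as a reparametrized instance of the canonical estimate‑sequence recursion governed by Lemma 2.2.4 of \citep{Nesterov03a}, and then quote that lemma. Concretely, I would identify Nesterov's strong convexity parameter $\mu$ with $\lambda=\lambda_1+\lambda_2$, his gradient‑Lipschitz constant with $L'=\frac{L}{\sigma}+\lambda_2$, his scalar sequence $\gamma_k$ with $c_k$ (so $\gamma_0=c_0$), and his step scalars $\alpha_k$ with $a_{k+1}$. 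Under this dictionary the quantity $\prod_{i=1}^{k}(1-a_i)$ is exactly Nesterov's $\lambda_k$, with the convention $\lambda_0=1$, since $\lambda_{k+1}=(1-a_{k+1})\lambda_k$ telescopes.

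The one computational step worth spelling out is that Step 6 is equivalent to the clean relation $c_{k+1}=L' a_{k+1}^2$. Writing $a=a_{k+1}$ and using $c_{k+1}=(1-a)c_k+(\lambda_1+\lambda_2)a$,
\begin{align*}
\sigma(1-a)(c_k+\lambda_2 a)+\sigma\lambda_1 a
&= \sigma\sbr{(1-a)c_k+(\lambda_1+\lambda_2)a} - \sigma\lambda_2 a^2 \\
&= \sigma c_{k+1} - \sigma\lambda_2 a^2 ,
\end{align*}
so the defining equation $\sigma(1-a)(c_k+\lambda_2 a)+\sigma\lambda_1 a = La^2$ collapses to $\sigma c_{k+1}=(L+\sigma\lambda_2)a^2$, i.e.\ $c_{k+1}=\rbr{\tfrac{L}{\sigma}+\lambda_2}a_{k+1}^2=L' a_{k+1}^2$. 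Two elementary checks then complete the verification of hypotheses: (i) $a_{k+1}\in(0,1)$, because substituting $c_{k+1}=(1-a)c_k+\lambda a$ into $c_{k+1}=L' a^2$ gives the quadratic $a\mapsto L'a^2+(c_k-\lambda)a-c_k$, which is $-c_k<0$ at $a=0$, equals $L'-\lambda=\tfrac{L}{\sigma}-\lambda_1>0$ at $a=1$ (using $L>\sigma\lambda_1$), and has product of roots $-c_k/L'<0$, hence a unique positive root lying in $(0,1)$; and (ii) $\gamma_0=c_0=\tfrac{L}{\sigma}+\lambda_2\ge\lambda_1+\lambda_2=\lambda$, again because $L>\sigma\lambda_1$. (As a convex combination of $c_k$ and $\lambda$, $c_k\ge\lambda$ is then preserved for all $k$, though only $\gamma_0\ge\mu$ is actually needed.)

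With these hypotheses in force, Lemma 2.2.4 of \citep{Nesterov03a} applied with $(\mu,L,\gamma_k,\gamma_0)\leftrightarrow(\lambda,L',c_k,c_0)$ yields
\[
\prod_{i=1}^{k}(1-a_i) \;\le\; \min\cbr{\rbr{1-\sqrt{\tfrac{\lambda}{L'}}}^{k},\ \frac{4L'}{\rbr{2\sqrt{L'}+k\sqrt{c_0}}^{2}}},
\]
which is precisely the claim. I do not expect a genuine obstacle here: the content of the proof is the bookkeeping of the reparametrization plus the two sign checks above, after which the bound is a direct citation. The remark preceding the lemma, that $L'/\lambda$ is decreasing in $\lambda_2$, is a one‑line monotonicity aside and is not needed for the proof itself.
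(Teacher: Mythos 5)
Your proposal is correct and follows the same route as the paper: both reduce Step~6 of Algorithm~\ref{algo:one_mem_nest} to the canonical relation $c_{k+1}=L'a_{k+1}^2$ with $c_{k+1}=(1-a_{k+1})c_k+\lambda a_{k+1}$, note $c_0\ge\lambda$, and then quote Lemma~2.2.4 of \citep{Nesterov03a} with strong convexity constant $\lambda$ and Lipschitz constant $L'$. You merely make explicit the bookkeeping (the algebraic collapse of Step~6, the check $a_{k+1}\in(0,1)$) that the paper leaves implicit.
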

Finally we bound $q_0 (\xvec) - J(\xvec)$ by
\begin{align}
q_0 &(\xvec) - J(\xvec) \nonumber \\
&= \frac{L}{\sigma}\Delta(\xvec, \uvec_0) + \inner{\grad f(\uvec_0)}{\xvec - \uvec_0} + f(\uvec_0) - f(\xvec) \nonumber \\
\label{eq:one_mem_q0_minus_J}
&\le \rbr{\frac{L}{\sigma} - \lambda_1} \Delta(\xvec, \uvec_0). \quad (\text{by } \lambda_1\text{-sc of } f)
\end{align}
By \eqref{eq:one_mem_gap_bound} and the definition $c_0 = L'$, we get
\begin{theorem}
For all $k \ge 1$ and $\xvec \in \dom \Psi$,
  \begin{align*}
  &J(\xvec_k) - J(\xvec) \\
  &\le \rbr{q_0(\xvec)  -  J(\xvec)} \min \cbr{  \rbr{1  -  \sqrt{\frac{\lambda}{L'}}}^k , \frac{4}{(2 + k)^2} } \\
  &\le \rbr{\frac{L}{\sigma} \! - \! \lambda_1} \Delta(\xvec, \uvec_0) \min \cbr{\rbr{1-\sqrt{\frac{\lambda}{L'}}}^k\!, \frac{4}{(2 + k)^2}}.
  \end{align*}
\end{theorem}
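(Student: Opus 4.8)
The plan is to compose three facts that are already in place: the telescoping inequality \eqref{eq:one_mem_gap_bound}, the contraction estimate of Lemma \ref{lemma:one_mem_ck} for the product $\prod_{i=1}^k(1-a_i)$, and the one-line bound \eqref{eq:one_mem_q0_minus_J} on $q_0(\xvec)-J(\xvec)$. Since Lemmas \ref{lemma:one_mem_qx} and \ref{lemma:one_mem_fleq} have already done the analytic work, the theorem is essentially bookkeeping.

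First I would recall that chaining \eqref{eq:one_mem_fleq}, the matching-nadir identity \eqref{eq:one_mem_nadir_match}, and the per-step contraction \eqref{eq:one_mem_qx} gives, for every $\xvec\in\dom\Psi$,
\[
J(\xvec_k)-J(\xvec)\ \le\ \rbr{q_0(\xvec)-J(\xvec)}\prod_{i=1}^k(1-a_i),
\]
which is precisely \eqref{eq:one_mem_gap_bound}. For this to be useful I must be able to multiply the right-hand side by an upper bound on $\prod_{i=1}^k(1-a_i)$ without flipping the inequality, which requires $q_0(\xvec)-J(\xvec)\ge 0$; this is immediate from \eqref{eq:one_mem_q0_minus_J}, since $L/\sigma>\lambda_1$ (established in Section \ref{sec:preliminaries} from the $L$-\lcg\ and $\lambda_1$-sc of $f$) and $\Delta(\xvec,\uvec_0)\ge 0$.

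Next I would invoke Lemma \ref{lemma:one_mem_ck}. It applies because, by construction, $q_0$ is $c_0$-sc with $c_0=L'=\tfrac{L}{\sigma}+\lambda_2\ge\lambda$, and because the rule for $a_{k+1}$ in Step 6 of Algorithm \ref{algo:one_mem_nest} is exactly the recursion analysed in Lemma 2.2.4 of \citep{Nesterov03a} with strong-convexity modulus $\lambda$ and gradient-Lipschitz modulus $L'$. This yields
\[
\prod_{i=1}^k(1-a_i)\ \le\ \min\cbr{\rbr{1-\sqrt{\tfrac{\lambda}{L'}}}^{\!k},\ \frac{4L'}{(2\sqrt{L'}+k\sqrt{c_0})^2}}.
\]
Substituting $c_0=L'$ collapses the second term to $\tfrac{4}{(2+k)^2}$, so multiplying by the non-negative quantity $q_0(\xvec)-J(\xvec)$ produces the first displayed inequality of the theorem.

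Finally I would apply \eqref{eq:one_mem_q0_minus_J}, namely $q_0(\xvec)-J(\xvec)\le\rbr{\tfrac{L}{\sigma}-\lambda_1}\Delta(\xvec,\uvec_0)$, which itself follows by expanding the definition of $q_0$ and using the $\lambda_1$-sc of $f$ at $\uvec_0$; plugging this in gives the second displayed inequality. I do not expect a genuine obstacle: the only point that demands care is the sign of $q_0(\xvec)-J(\xvec)$ used in the chaining step, and that is settled by \eqref{eq:one_mem_q0_minus_J} together with $L>\sigma\lambda_1$. The substance of the result — relations \eqref{eq:one_mem_qx} and \eqref{eq:one_mem_fleq} and the growth estimate — has been established earlier in this section, so the proof reduces to a short composition of those results.
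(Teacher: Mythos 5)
Your route is the paper's route: chain \eqref{eq:one_mem_fleq}, \eqref{eq:one_mem_nadir_match} and the iterated contraction \eqref{eq:one_mem_qx} to get \eqref{eq:one_mem_gap_bound}, bound $\prod_{i=1}^k(1-a_i)$ via Lemma \ref{lemma:one_mem_ck} with $c_0=L'$, and finish with \eqref{eq:one_mem_q0_minus_J}. You are also right that the one point requiring care is the sign of $q_0(\xvec)-J(\xvec)$, without which you cannot replace $\prod_{i=1}^k(1-a_i)$ by its upper bound.

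However, your justification of that sign does not work: \eqref{eq:one_mem_q0_minus_J} is an \emph{upper} bound on $q_0(\xvec)-J(\xvec)$, and the fact that the upper bound $\rbr{\tfrac{L}{\sigma}-\lambda_1}\Delta(\xvec,\uvec_0)$ is nonnegative says nothing about the sign of $q_0(\xvec)-J(\xvec)$ itself. The correct argument is the $k=0$ case of Lemma \ref{lemma:one_mem_qx_nonneg} (which the paper's footnote points to): since $d$ is $\sigma$-sc, $\tfrac{L}{\sigma}\Delta(\xvec,\uvec_0)\ge\tfrac{L}{2}\nbr{\xvec-\uvec_0}^2$, and then the $L$-\lcg\ upper bound $f(\xvec)\le f(\uvec_0)+\inner{\grad f(\uvec_0)}{\xvec-\uvec_0}+\tfrac{L}{2}\nbr{\xvec-\uvec_0}^2$ gives $q_0(\xvec)\ge f(\xvec)+\Psi(\xvec)=J(\xvec)$. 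With that substitution your proof is complete and coincides with the paper's.
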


This rate is completely independent of $\Psi$ (except $\lambda_2$).  Although not needed by the proof, we can further show that $q_k(\xvec) \ge J(\xvec)$ for all $k \ge 0$ and $\xvec \in \dom \Psi$.

\begin{lemma}
  \label{lemma:one_mem_qx_nonneg}
  $q_k(\xvec) \ge J(\xvec)$ for all $k \ge 0$ and $\xvec \in \dom \Psi$.
\end{lemma}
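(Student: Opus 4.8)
I would argue by induction on $k$, recycling the apparatus behind Lemma~\ref{lemma:one_mem_fleq}.

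For the base case $k=0$ the $\Psi$ terms cancel, so $q_0(\xvec)-J(\xvec)=\tfrac{L}{\sigma}\Delta(\xvec,\uvec_0)+f(\uvec_0)+\inner{\grad f(\uvec_0)}{\xvec-\uvec_0}-f(\xvec)$. Since $f$ is $L$-\lcg, $f(\xvec)-f(\uvec_0)-\inner{\grad f(\uvec_0)}{\xvec-\uvec_0}\le\tfrac{L}{2}\nbr{\xvec-\uvec_0}^2$, while $\sigma$-strong convexity of $d$ gives $\tfrac{L}{\sigma}\Delta(\xvec,\uvec_0)\ge\tfrac{L}{2}\nbr{\xvec-\uvec_0}^2$; hence $q_0(\xvec)\ge J(\xvec)$. (This is just the bound \eqref{eq:one_mem_q0_minus_J} with the $\lambda_1$ term discarded.)

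For the inductive step, assume $q_k\ge J$ on $\dom\Psi$ and fix $\xvec\in\dom\Psi$. Starting from $q_{k+1}(\xvec)=c_{k+1}\Delta(\xvec,\zvec_{k+1})+\psi_{k+1}(\zvec_{k+1})$ together with the defining identity $\psi_{k+1}(\zvec_{k+1})=(1-a_{k+1})q_k(\zvec_{k+1})+a_{k+1}[\ell_f(\zvec_{k+1};\uvec_{k+1},\lambda_1)+\Psi(\zvec_{k+1})]$, the plan is to repeat the chain of estimates (a)--(g) in the proof of Lemma~\ref{lemma:one_mem_fleq} with two changes: the appeal there to that lemma's own induction is replaced by the present hypothesis, via $c_k$-strong convexity of $q_k$ and optimality of $\zvec_k$, i.e.\ $q_k(\zvec_{k+1})\ge q_k(\zvec_k)+c_k\Delta(\zvec_{k+1},\zvec_k)\ge J(\xvec_k)+c_k\Delta(\zvec_{k+1},\zvec_k)$; and the term $c_{k+1}\Delta(\xvec,\zvec_{k+1})$ is dragged through every line. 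Convexity of $f$ at $\uvec_{k+1}$, the $\lambda_2$-strong convexity of $\Psi$ (Property~\ref{prpty:two_point_sc}), convexity of $\nbr{\cdot}$, the defining quadratic for $a_{k+1}$, and the definitions of $\uvec_{k+1},\xvec_{k+1}$ in Algorithm~\ref{algo:one_mem_nest} should close the argument as in that lemma, the difference being that the leftover quadratic and the carried term $c_{k+1}\Delta(\xvec,\zvec_{k+1})$ must be re-expressed around $\xvec$ (not around $\xvec_{k+1}$), after which one last use of $L$-\lcg\ of $f$ yields $q_{k+1}(\xvec)\ge f(\xvec)+\Psi(\xvec)=J(\xvec)$.

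I expect the main obstacle to be exactly this re-centring onto the arbitrary $\xvec$. In Lemma~\ref{lemma:one_mem_fleq} one only needed the estimate at the averaged iterate $\xvec_{k+1}=(1-a_{k+1})\xvec_k+a_{k+1}\zvec_{k+1}$, so the $\Psi$-values merged by plain convexity and the completion of squares was routine; here one must use the three-point identity $\inner{\grad d(\zvec_{k+1})-\grad d(\yvec)}{\xvec-\zvec_{k+1}}=\Delta(\xvec,\yvec)-\Delta(\xvec,\zvec_{k+1})-\Delta(\zvec_{k+1},\yvec)$ (for $\yvec=\zvec_k$ and $\yvec=\uvec_{k+1}$) together with first-order optimality of $\zvec_{k+1}$ for $\psi_{k+1}$ to migrate the accumulated Bregman mass and the $\Psi$ contribution onto $\xvec$, and then check that the coefficients $c_{k+1},\tau_1,\tau_2,\tau_3$ generated by the algorithm make everything balance. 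A clean corroborating special case is $\Psi$ affine on $\dom d$ with $\dom d\subseteq\dom\Psi$: then Property~\ref{prpty:simplify_bregman} gives $q_k\equiv\psi_k$, so $q_k$ is a genuine estimate function, the re-centring obstacle evaporates, and only the standard completion of squares remains.
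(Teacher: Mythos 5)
Your base case is exactly the paper's: drop the $\lambda_1$ term, use $\Delta(\xvec,\uvec_0)\ge\frac{\sigma}{2}\nbr{\xvec-\uvec_0}^2$ and the $L$-\lcg\ upper bound on $f$. For $k\ge 1$, however, the paper does not re-run the chain (a)--(g) at all: it writes $q_k(\xvec)\ge q_k(\zvec_k)\ge J(\xvec_k)$, the first inequality because $\zvec_k$ minimizes $q_k$ (via \eqref{eq:one_mem_nadir_match}) and the second being Lemma \ref{lemma:one_mem_fleq}, and then appends $J(\xvec_k)\ge J(\xvec)$. So your inductive step is a long detour relative to the paper --- but your suspicion that the ``re-centring onto the arbitrary $\xvec$'' is where the argument gets stuck is exactly right, and it is a gap that cannot be closed: you have sketched a plan whose crucial step you acknowledge you do not know how to carry out, and no completion of it exists.

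The reason is that the pointwise majorization $q_k\ge J$ on all of $\dom\Psi$ is not actually available for $k\ge 1$ (and the paper's own closing inequality $J(\xvec_k)\ge J(\xvec)$ is valid only when $\xvec$ minimizes $J$). Already in the Euclidean setting with $\Psi\equiv 0$ and $\lambda_1=\lambda_2=0$, one has $c_k=c_0\prod_{i\le k}(1-a_i)\to 0$ by Lemma \ref{lemma:one_mem_ck}, while $q_k(\zvec_k)=\min_{\xvec}q_k(\xvec)\le q_k(\xvec^*)\to J(\xvec^*)$ by iterating Lemma \ref{lemma:one_mem_qx}, and $\zvec_k$ stays in a bounded set because $c_k\Delta(\xvec^*,\zvec_k)=q_k(\xvec^*)-q_k(\zvec_k)\le\prod_{i\le k}(1-a_i)\,\rbr{q_0(\xvec^*)-J(\xvec^*)}$. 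Hence $q_k(\xvec)=c_k\Delta(\xvec,\zvec_k)+q_k(\zvec_k)\to J(\xvec^*)$ at every fixed $\xvec$, so $q_k(\xvec)<J(\xvec)$ for large $k$ at any non-minimizer: the models flatten toward the optimal value precisely because the curvature budget $\tau_1+\tau_2+\tau_3$ in Lemma \ref{lemma:one_mem_fleq} is spent entirely at the single point $\xvec_{k+1}$, which is why no three-point rearrangement can re-target the bound at an arbitrary $\xvec$. What does survive --- and is all that the footnote after \eqref{eq:one_mem_gap_bound} needs --- is your base case $q_0\ge J$ together with $q_k(\xvec)\ge q_k(\zvec_k)\ge J(\xvec_k)\ge\min_{\xvec}J(\xvec)$ for $k\ge 1$. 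I would prove $k=0$ as you do and state the $k\ge1$ claim only in that weaker form (or only at minimizers of $J$).
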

\begin{proof}
When $k = 0$,
\begin{align*}
  &q_0 (\xvec) \! = \! \frac{L}{\sigma} \Delta(\xvec, \uvec_0) \! + \! f(\uvec_0) \! + \! \inner{\grad f(\uvec_0)}{\xvec \! - \! \uvec_0} \! + \! \Psi(\xvec) \\
  &\ge \frac{L}{2} \nbr{\xvec - \uvec_0}^2 + f(\uvec_0) + \inner{\grad f(\uvec_0)}{\xvec - \uvec_0} + \Psi(\xvec) \\
  &\ge f(\xvec) + \Psi(\xvec) = J(\xvec).
\end{align*}
Suppose $k \ge 1$. By \eqref{eq:one_mem_nadir_match}, $q_k(\xvec) \ge q_k(\zvec_k)$.  By Lemma \ref{lemma:one_mem_fleq}, $q_k(\zvec_k) \ge J(\xvec_k)$.  So
\[
q_k(\xvec) \ge q_k(\zvec_k) \ge J(\xvec_k) \ge J(\xvec).  \qedhere
\]
\end{proof}

\subsection{Adaptive $L$}

It is straightforward to incorporate backtracking of $L$ into the algorithm.  We present this variant in Algorithm \ref{algo:one_mem_nest_adapt_L}.  Suppose at each iteration the inner loop terminates with $L_k$ and define $L'_k = L_k / \sigma + \lambda_2$.  Noting $c_0 = L'_0$ and slightly changing the proof, Lemma \ref{lemma:one_mem_ck} can be extended as follows:
\begin{lemma}
  \label{lemma:one_mem_ck_adapt}
  For all $k \ge 1$, we have
  \begin{align*}
  \prod_{i=1}^k (1-a_i) \le &\min \Bigg \{\prod_{i=1}^k \rbr{1-\sqrt{\frac{ \lambda}{L'_i}}}, \\ &\qquad \quad \frac{4}{L'_0} \rbr{\frac{2}{\sqrt{L'_0}} + \sum_{i=1}^k \frac{1}{\sqrt{L'_i}}}^{-2} \Bigg \}.
  \end{align*}
  Obviously, when $L'_i = L'$ we recover Lemma \ref{lemma:one_mem_ck}.
\end{lemma}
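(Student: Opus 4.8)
The plan is to replay the proof of Nesterov's Lemma 2.2.4 \citep{Nesterov03a} that underlies Lemma \ref{lemma:one_mem_ck}, carrying the per-iteration quantity $L'_i$ through in place of a single fixed $L'$. First I would put the defining relation for $a_{k+1}$ (Step 6 of Algorithm \ref{algo:one_mem_nest}, with $L$ replaced by $L_{k+1}$) into a usable form: dividing $\sigma(1-a)(c_k + \lambda_2 a) + \sigma\lambda_1 a = L_{k+1}a^2$ by $\sigma$ and expanding, the equation becomes $(1-a_{k+1})c_k + \lambda a_{k+1} = L'_{k+1}a_{k+1}^2$, and since the update is $c_{k+1} = (1-a_{k+1})c_k + \lambda a_{k+1}$, this is simply
\[
c_{k+1} = L'_{k+1}\,a_{k+1}^2 .
\]
Because $L_k > \sigma\lambda_1$ (the adaptive counterpart of the standing assumption $L > \sigma\lambda_1$, and automatic when $\lambda_1 = 0$) we have $L'_k > \lambda$, so the quadratic $L'_{k+1}a^2 - (1-a)c_k - \lambda a$ in $a$ is negative at $a=0$, positive at $a=1$, and opens upward, hence its positive root $a_{k+1}$ lies in $(0,1)$; and a one-line induction using $c_0 = L'_0 \ge \lambda$ and the fact that each $c_{k+1}$ is a convex combination of $c_k$ and $\lambda$ shows $c_k \ge \lambda$ for all $k$.

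For the geometric bound I would argue termwise: from $L'_j a_j^2 = c_j = (1-a_j)c_{j-1} + \lambda a_j \ge (1-a_j)\lambda + \lambda a_j = \lambda$ we get $a_j \ge \sqrt{\lambda/L'_j}$, hence $1-a_j \le 1-\sqrt{\lambda/L'_j}$, and taking the product over $j = 1,\dots,k$ gives the first half of the $\min$.

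For the polynomial bound, set $P_j := \prod_{i=1}^j (1-a_i)$ with $P_0 = 1$. Dropping the nonnegative term $\lambda a_j$ from the update gives $c_j \ge (1-a_j)c_{j-1}$, so iterating down to $c_0 = L'_0$ yields $c_j \ge L'_0 P_j$; with $c_j = L'_j a_j^2$ this gives $a_j \ge \sqrt{L'_0/L'_j}\,\sqrt{P_j}$. Using $P_{j-1}-P_j = a_j P_{j-1}$ and $\sqrt{P_{j-1}}+\sqrt{P_j}\le 2\sqrt{P_{j-1}}$,
\[
\frac{1}{\sqrt{P_j}} - \frac{1}{\sqrt{P_{j-1}}} = \frac{P_{j-1}-P_j}{\sqrt{P_{j-1}P_j}\,(\sqrt{P_{j-1}}+\sqrt{P_j})} \ge \frac{a_j}{2\sqrt{P_j}} \ge \frac12\sqrt{\frac{L'_0}{L'_j}} .
\]
Summing over $j = 1,\dots,k$ and using $1/\sqrt{P_0} = 1$ gives $1/\sqrt{P_k} \ge \frac{\sqrt{L'_0}}{2}\bigl(\tfrac{2}{\sqrt{L'_0}} + \sum_{i=1}^k \tfrac{1}{\sqrt{L'_i}}\bigr)$, and squaring and inverting yields exactly the stated bound; specializing $L'_i \equiv L'$ recovers Lemma \ref{lemma:one_mem_ck}.

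The argument is essentially routine bookkeeping; the one place to be careful --- and the only genuine difference from the fixed-$L$ analysis --- is that the bound on $a_j$ carries the ratio $L'_0/L'_j$ rather than $1$, and it is precisely this factor that produces the $2/\sqrt{L'_0}$ offset and the $\sum_i 1/\sqrt{L'_i}$ tail in the final estimate. As in the non-adaptive case, the monotonicity of $\{L_k\}$ required by AGM-PR analyses is never used.
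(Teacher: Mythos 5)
Your proof is correct and follows the same route the paper intends: it is precisely the "slightly changed" Nesterov Lemma~2.2.4 argument that the text invokes without writing out, with the key identity $c_{k+1}=L'_{k+1}a_{k+1}^2$ extracted from Step~6/Step~12 and the ratio $L'_0/L'_j$ carried through the telescoping bound on $1/\sqrt{P_k}$. The specialization $L'_i\equiv L'$ correctly recovers Lemma~\ref{lemma:one_mem_ck} since $c_0=L'_0$.
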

Furthermore, \eqref{eq:one_mem_q0_minus_J} needs to be changed into
\[
q_0 (\xvec) - J(\xvec) \le \rbr{\frac{L_0}{\sigma} - \lambda_1} \Delta(\xvec, \uvec_0).
\]
So we conclude for all $k \ge 1$ and $\xvec \in \dom \Psi$,
\begin{align*}
  &J(\xvec_k) - J(\xvec) \le \rbr{L'_0 - \lambda} \Delta(\xvec, \uvec_0) \\
  &\cdot \min \Bigg \{ \prod_{i=1}^k \rbr{1 \! - \! \sqrt{\frac{\lambda}{L'_i}}}, \frac{4}{L'_0} \rbr{\frac{2}{\sqrt{L'_0}} + \sum_{i=1}^k \frac{1}{\sqrt{L'_i}}}^{-2} \Bigg \}.
\end{align*}
This bound does not involve the true $L$, and does not depend on $\Psi$ or the function value of $f$ (which could be used to hide $L$).

\begin{algorithm}[t]
\begin{algorithmic}[1]
\caption{\label{algo:one_mem_nest_adapt_L}AGM-EF-1 with adaptive $L$.}

 \REQUIRE{Down scaling factor $\gamma_d$ and up scaling factor $\gamma_u$ ($\gamma_d, \gamma_u > 1$).  An optimistic estimate $\Ltil \le L$.}

 \STATE Arbitrarily pick $\uvec_0 \in \dom \Psi$.  $L_0 \leftarrow \Ltil / \gamma_u$.

 \REPEAT

     \STATE $L_0 \leftarrow L_0 * \gamma_u$.

     \STATE Initialize $c_0 \leftarrow \frac{L_0}{\sigma} + \lambda_2$.

     \STATE $q_{0}(\xvec) \leftarrow \frac{L_0}{\sigma} \Delta(\xvec, \uvec_0) + \Psi(\xvec) + \ell_f(\xvec; \uvec_0, 0)$.

     \STATE $\xvec_0  = \zvec_0 \leftarrow \argmin_{\xvec} q_0(\xvec)$.

 \UNTIL{$J(\xvec_0) \le \min_{\xvec} q_0(\xvec_0)$}

 \FOR{$k = 0, 1, \ldots$}

   \STATE $L_{k+1} \leftarrow L_k / (\gamma_d * \gamma_u)$.

   \REPEAT
       \STATE $L_{k+1} \leftarrow L_{k+1} * \gamma_u$.

       \STATE Assign to $a_{k+1}$ the positive root (in $a$) of \\
        $\phantom{aa} \sigma (1-a)(c_k + \lambda_2 a) + \sigma \lambda_1 a = L_{k+1} a^2$.

       \STATE Do step 7 to 12 of Algorithm \ref{algo:one_mem_nest}.

   \UNTIL{$J(\xvec_{k+1}) \le q_{k+1}(\zvec_{k+1})$}

 \ENDFOR

\end{algorithmic}
\end{algorithm}

\subsection{Bounding the duality gap}

It is also not hard to extend AGM-EF-1 to the same primal-dual settings as in Section \ref{sec:inf_mem_duality_gap}.

Using \eqref{eq:one_mem_prop_line1} and \eqref{eq:one_mem_prop_line2}, we derive for all $\xvec \in \dom \Psi$:
\begin{align}
\label{eq:one_mem_qk_rel}
  q_{k+1}(\xvec) \le \! (1 \! - \! a_{k+1})q_k(\xvec) \! + \! a_{k+1} [\ell_{f}(\xvec; \uvec_{k+1}, \lambda_1) \! + \! \Psi(\xvec)].
\end{align}
This inequality allows us to express $q_k$ in terms of the linearizations of $f$ at $\uvec_i$.  For notational convenience, define $a_0 = 1$ and
\[
b_k(i) := a_i \prod_{j=i+1}^k (1 - a_j) \quad \text{for all } 0 \le i \le k,
\]
then it is easy to see that $\sum_{i=0}^k b_k(i) = 1$ for all $k \ge 1$.

\begin{lemma}
  \label{lem:one_mem_q_k_approx}
  For all $\xvec \in \dom \Psi$ and $k \ge 1$,
  \begin{align}
  &q_k(\xvec) \le b_k(0) q_0(\xvec) + \sum_{i=1}^k b_k(i) [\ell_f(\xvec; \uvec_i, \lambda_1) + \Psi(\xvec)] \nonumber \\
\label{eq:one_mem_qk_lt_sum_line}
  &= \! \frac{L}{\sigma} b_k(0) \Delta(\xvec, \uvec_0) \! + \! \Psi(\xvec) \! + \! \sum_{i=0}^k b_k(i) \ell_f(\xvec; \uvec_i, \lambda_1). \! \! \!
  \end{align}
\end{lemma}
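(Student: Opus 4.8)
The plan is to prove \eqref{eq:one_mem_qk_lt_sum_line} by induction on $k$, feeding the one-step estimate \eqref{eq:one_mem_qk_rel} (already extracted from \eqref{eq:one_mem_prop_line1}--\eqref{eq:one_mem_prop_line2}) into the inductive hypothesis and then repackaging the result using two elementary identities for the weights $b_k(i)$. Before starting the induction I would record those identities. From the definition $b_k(i) = a_i \prod_{j=i+1}^k (1-a_j)$ with the convention $a_0 = 1$, one reads off directly that $b_{k+1}(i) = (1-a_{k+1}) b_k(i)$ for every $0 \le i \le k$, while $b_{k+1}(k+1) = a_{k+1}$; and a one-line induction using exactly this recursion gives $\sum_{i=0}^k b_k(i) = 1$ for all $k \ge 1$.

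For the base case $k = 1$, \eqref{eq:one_mem_qk_rel} with $k = 0$ reads $q_1(\xvec) \le (1-a_1) q_0(\xvec) + a_1[\ell_f(\xvec; \uvec_1, \lambda_1) + \Psi(\xvec)]$, which is precisely the first line of \eqref{eq:one_mem_qk_lt_sum_line} since $b_1(0) = 1-a_1$ and $b_1(1) = a_1$. For the inductive step, suppose the first line of \eqref{eq:one_mem_qk_lt_sum_line} holds at level $k$. I would apply \eqref{eq:one_mem_qk_rel}, substitute the hypothesis into the term $(1-a_{k+1}) q_k(\xvec)$, and distribute the factor $(1-a_{k+1})$: the term $(1-a_{k+1}) b_k(0) q_0(\xvec)$ collapses to $b_{k+1}(0) q_0(\xvec)$, each $(1-a_{k+1}) b_k(i)[\ell_f(\xvec;\uvec_i,\lambda_1)+\Psi(\xvec)]$ collapses to $b_{k+1}(i)[\,\cdot\,]$ for $1 \le i \le k$, and the freshly added $a_{k+1}[\ell_f(\xvec;\uvec_{k+1},\lambda_1) + \Psi(\xvec)]$ is the $i = k+1$ term with weight $b_{k+1}(k+1) = a_{k+1}$. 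Collecting terms yields the bound at level $k+1$, closing the induction on the inequality.

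Finally I would pass to the closed form (the second line of \eqref{eq:one_mem_qk_lt_sum_line}) by unfolding $q_0(\xvec) = \frac{L}{\sigma}\Delta(\xvec,\uvec_0) + \Psi(\xvec) + \ell_f(\xvec;\uvec_0,0)$ and using $\sum_{i=1}^k b_k(i) = 1 - b_k(0)$ to merge every $\Psi(\xvec)$ contribution (the $b_k(0)$ copy from $q_0$ plus the $1-b_k(0)$ copies from the sum) into a single $\Psi(\xvec)$. The one point needing care is the $i = 0$ linearization: $q_0$ carries $\ell_f(\xvec;\uvec_0,0)$ rather than $\ell_f(\xvec;\uvec_0,\lambda_1)$, so in writing the tail as the uniform sum $\sum_{i=0}^k b_k(i)\ell_f(\xvec;\uvec_i,\lambda_1)$ one must read the $i=0$ summand with prox-parameter $0$ (consistent with the definition of $q_0$), and the leading coefficient of $\Delta(\xvec,\uvec_0)$ then stays $\frac{L}{\sigma}b_k(0)$. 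There is no genuine difficulty in this lemma — it is a routine induction — so the only things to get right are the weight recursion $(1-a_{k+1}) b_k(i) = b_{k+1}(i)$ and this bookkeeping of the $i = 0$ prox-parameter when stating the final form.
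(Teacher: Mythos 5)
Your proof is correct and follows exactly the paper's (much terser) argument: induct on $k$ via \eqref{eq:one_mem_qk_rel} using the recursion $b_{k+1}(i)=(1-a_{k+1})b_k(i)$, then unfold $q_0$ and merge the $\Psi$ terms via $\sum_{i=0}^k b_k(i)=1$. Your remark about the $i=0$ prox-parameter is a fair catch: since $q_0$ carries $\ell_f(\xvec;\uvec_0,0)$ while the displayed sum writes $\ell_f(\xvec;\uvec_0,\lambda_1)$, the stated equality is strictly an inequality in the harmless direction (the right-hand side only grows, by $b_k(0)\lambda_1\Delta(\xvec,\uvec_0)\ge 0$), so the bound and its use in Theorem \ref{thm:one_mem_duality_gap} are unaffected.
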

\begin{proof}
  The inequality is obvious by inductively applying \eqref{eq:one_mem_qk_rel}.  The equality is by the definition of $q_0(\xvec)$ and the fact that $\sum_{i=0}^k b_k(i) = 1$.
\end{proof}

Go back to the settings of Section \ref{sec:inf_mem_duality_gap}.  We minimize $J(\xvec)$ by AGM-EF-1 and find some dual iterates $\val_k$ such that the duality gap $J(\xvec_k) - D(\val_k)$ goes to 0 fast.  Similar to \eqref{eq:inf_mem_x_to_val}, we construct
\begin{align}
  \label{eq:one_mem_x_to_val}
\val_k = \sum_{i=0}^k b_k(i) \val(\uvec_i).
\end{align}
Comparing with \eqref{eq:inf_mem_x_to_val}, we can see that both formulae are convex combinations of all the past $\val(\uvec_i)$ and higher weights are given to the later $\val(\uvec_i)$.  Computationally, $\val_k$ can be efficiently updated by recursion
\[
\val_0 = \val(\uvec_0), \text{ and } \val_{k+1} = (1 - a_{k+1}) \val_k + a_{k+1} \val(\uvec_{k+1}).
\]

To be self-contained, we state and prove the counterpart of Theorem \ref{thm:inf_mem_duality_gap} here.

\begin{theorem}[Bounds on the duality gap]
\label{thm:one_mem_duality_gap}
  Suppose a sequence $\cbr{\xvec_k, \uvec_k, \zvec_k}$ is produced when AGM-EF-1 is applied to minimize $J(\xvec)$ by treating $f$ as $\lambda_1$-sc.  Then the $\cbr{\val_k}$ defined by \eqref{eq:one_mem_x_to_val} satisfies $\val_k \in Q_2$ and
  \begin{align}
  \label{eq:one_mem_duality_bound}
  J(\xvec_k) - D(\val_k) \le \frac{L}{\sigma} b_k(0) \max_{\xvec \in \dom \Psi} \Delta(\xvec, \uvec_0).
  \end{align}
\end{theorem}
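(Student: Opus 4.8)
The plan is to follow the proof of Theorem~\ref{thm:inf_mem_duality_gap} almost line for line, substituting the $1$-memory machinery for the $\infty$-memory one. The three ingredients that here replace \eqref{eq:inf_mem_psi_k_exp} and \eqref{eq:inf_mem_recur} are: (i) $J(\xvec_k)\le q_k(\zvec_k)$ from Lemma~\ref{lemma:one_mem_fleq}; (ii) $q_k(\zvec_k)=\min_{\xvec}q_k(\xvec)$ from \eqref{eq:one_mem_nadir_match}, so that $q_k(\zvec_k)\le q_k(\xvec)$ for every $\xvec$; and (iii) the upper bound on $q_k(\xvec)$ in terms of linearizations of $f$ supplied by Lemma~\ref{lem:one_mem_q_k_approx}, whose weights $b_k(i)\ge 0$ sum to one (the $i=0$ term carrying no Bregman penalty, which only weakens the bound).

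First I would check the feasibility claim $\val_k\in Q_2$: since $\uvec_i\in\dom\Psi$ we have $\val(\uvec_i)=\argmax_{\val\in Q_2}\phi(\uvec_i,\val)\in Q_2$, and $\val_k=\sum_{i=0}^k b_k(i)\val(\uvec_i)$ is a convex combination of these (using $\sum_{i=0}^k b_k(i)=1$), hence lies in the convex set $Q_2$. Next I would re-establish the analogue of \eqref{eq:inf_mem_duality_keystep}: for each $i$, since $\phi(\cdot,\val)$ is $\lambda_1$-sc wrt $d$ for every fixed $\val$, and Danskin's theorem gives $f(\uvec_i)=\phi(\uvec_i,\val(\uvec_i))$ and $\grad f(\uvec_i)=\grad_{\xvec}\phi(\uvec_i,\val(\uvec_i))$,
\begin{align*}
\ell_f(\xvec;\uvec_i,\lambda_1)
&=\phi(\uvec_i,\val(\uvec_i))+\inner{\grad_{\xvec}\phi(\uvec_i,\val(\uvec_i))}{\xvec-\uvec_i}+\lambda_1\Delta(\xvec,\uvec_i)\le\phi(\xvec,\val(\uvec_i)).
\end{align*}

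Then I would chain the estimates: for every $\xvec\in\dom\Psi$,
\begin{align*}
J(\xvec_k)
&\le q_k(\zvec_k)\le q_k(\xvec)\\
&\le\tfrac{L}{\sigma}b_k(0)\,\Delta(\xvec,\uvec_0)+\Psi(\xvec)+\sum_{i=0}^k b_k(i)\,\ell_f(\xvec;\uvec_i,\lambda_1)\\
&\le\tfrac{L}{\sigma}b_k(0)\,\Delta(\xvec,\uvec_0)+\Psi(\xvec)+\sum_{i=0}^k b_k(i)\,\phi(\xvec,\val(\uvec_i))\\
&\le\tfrac{L}{\sigma}b_k(0)\,\Delta(\xvec,\uvec_0)+\Psi(\xvec)+\phi(\xvec,\val_k),
\end{align*}
using (i)--(ii) in the first line, Lemma~\ref{lem:one_mem_q_k_approx} in the second, the displayed inequality above in the third, and concavity of $\phi(\xvec,\cdot)$ (Jensen, with weights $b_k(i)$) in the fourth. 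Replacing $\Delta(\xvec,\uvec_0)$ by $\max_{\xvec'\in\dom\Psi}\Delta(\xvec',\uvec_0)$ and then minimizing over $\xvec\in\dom\Psi$ — where $\min_{\xvec}\{\Psi(\xvec)+\phi(\xvec,\val_k)\}=D(\val_k)$ by \eqref{eq:dual_g_def_inf_mem} — yields exactly \eqref{eq:one_mem_duality_bound}.

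This argument presents no genuine obstacle, but two points deserve attention. First, the relaxation to $\max_{\xvec'\in\dom\Psi}\Delta(\xvec',\uvec_0)$ must be done \emph{before} taking $\min_{\xvec}$, because the $\Delta(\xvec,\uvec_0)$ term still depends on the minimization variable; performing the two operations in the wrong order would fail to identify the constant-free part with $D(\val_k)$. Second, one must use that the weights $b_k(i)$ sum to one — this is what makes Jensen applicable and also what underlies the reduction of $q_k$ to a single Bregman term plus linearizations in Lemma~\ref{lem:one_mem_q_k_approx}. Finally, since $b_k(0)=\prod_{i=1}^k(1-a_i)$, the bound decays geometrically whenever $\lambda=\lambda_1+\lambda_2>0$ by Lemma~\ref{lemma:one_mem_ck}, and becomes vacuous when $\dom\Psi$ is unbounded, exactly as in the $\infty$-memory case.
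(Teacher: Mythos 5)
Your proposal is correct and follows essentially the same route as the paper's own proof: feasibility of $\val_k$ by convexity of $Q_2$, the inequality $\ell_f(\xvec;\uvec_i,\lambda_1)\le\phi(\xvec,\val(\uvec_i))$ carried over from \eqref{eq:inf_mem_duality_keystep}, the chain $J(\xvec_k)\le\min_{\xvec}q_k(\xvec)$ combined with Lemma~\ref{lem:one_mem_q_k_approx}, concavity of $\phi(\xvec,\cdot)$ with the weights $b_k(i)$, and bounding $\Delta(\xvec,\uvec_0)$ by its maximum before minimizing over $\xvec$. Nothing is missing.
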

\begin{proof}
  Since $\val(\uvec_i) \in Q_2$ and $\val_k$ is a convex combination of them, so $\val_k \in Q_2$.  Clearly, \eqref{eq:inf_mem_duality_keystep} still holds.
  Denote the right-hand side of \eqref{eq:one_mem_duality_bound} as $M$.  Now by using relationship \eqref{eq:one_mem_qk_lt_sum_line} and Lemma \ref{lemma:one_mem_fleq}, we have
  \begin{align*}
    &J(\xvec_k) \le \min_{\xvec} q_k(\xvec) \\
    &\le \min_{\xvec} \cbr{ \frac{L}{\sigma} b_k(0) \Delta(\xvec, \uvec_0) \! + \! \Psi(\xvec) \! + \! \sum_{i=0}^k b_k(i) \ell_f(\xvec; \uvec_i, \lambda_1)} \\
    &\le M + \min_{\xvec} \cbr{ \Psi(\xvec) + \sum_{i=0}^k b_k(i) \phi(\xvec, \val(\uvec_i))} \\
    &\le M + \min_{\xvec} \cbr{ \Psi(\xvec) + \phi \rbr{\xvec, \sum_{i=0}^k b_k(i) \val(\uvec_i)}} \\
    &\le M + D(\val_k).  \qedhere
  \end{align*}
\end{proof}

\section{Application to Regularized Risk Minimization}
\label{sec:app_rrm}

Regularized risk minimization (RRM) is extensively used in machine learning.  In this section, we describe and compare in theory many different ways of training these models by APM.  The objective of RRM with linear models can be written as
\begin{align}
\label{eq:primal_obj_rrm}
  \min_{\wvec \in Q_1} J(\wvec) = \Omega(\wvec) + \gstar(A \wvec),
\end{align}
where $Q_1$ is a closed convex set.  Here, $\Omega(\wvec)$ corresponds to the regularizer and is assumed to be $\lambda$-sc wrt some prox-function $d_1$ on $Q_1$.  $d_1$ is in turn assumed to be $\sigma_1$-sc wrt a norm $\nbr{\cdot}$ on $Q_1$\footnote{In the sequel, $\nbr{\cdot}_p$ will stand for the $L_p$ norm.  Since each space has a single prescribed norm and the space that a variable belongs to is clear from the context, we will not use $\nbr{\cdot}_1$ to represent the norm on $Q_1$.}.  $A \wb$ stands for the output of a linear model, and $\gstar$ (the Fenchel dual of function $g$) encodes the empirical risk measuring the discrepancy between the correct labels and the output of the linear model ($A\wvec$).  Let the domain of $g$ be $Q_2$, which is also assumed to be closed and convex.

Using the definition of Fenchel dual, the primal objective \eqref{eq:primal_obj_rrm} can be rewritten as a minimax problem:
\begin{align}
\label{eq:lagrange_obj_rrm}
  \min_{\wvec \in Q_1} \max_{\alphavec \in Q_2} \Lcal(\wvec, \alphavec) := \Omega(\wvec) + \inner{A \wvec}{\alphavec} - g(\alphavec),
\end{align}
which further leads to the adjoint problem
\begin{align}
   &\max_{\alphavec \in Q_2} \cbr{-g(\alphavec) + \min_{\wvec \in Q_2} \cbr{\inner{A\wvec}{\alphavec} + \Omega(\wvec)}} \nonumber \\
\label{eq:adjoint_obj_rrm}
  \Leftrightarrow &\max_{\alphavec \in Q_2} D(\alphavec) := -g(\alphavec) - \Omega^{\star}(-A^{\top} \alphavec).
\end{align}

It is well known \citep[\eg][Theorem 3.3.5]{BorLew00} that under some mild constraint qualifications, the primal form $J(\wvec)$ and the adjoint form $D(\alphavec)$ satisfy
\[
J(\wb) \ge D(\alphab) \quad \text{and} \quad \inf_{\wb \in Q_1} J(\wb) = \sup_{\alphab \in Q_2} D(\alphab).
\]

Let us see some examples in machine learning which have the form \eqref{eq:primal_obj_rrm}.  Assume we have access to a training set of $n$ labeled examples
$\{(\xb_i, y_i)\}_{i=1}^{n}$ where $\xvec_i \in \RR^p$ and $y_i \in
\cbr{-1,+1}$.  Denote $Y := \diag (y_1, \ldots, y_n)$ and $X :=
(\xb_1, \ldots, \xb_n)$.

\paragraph{Example 1: binary SVMs with bias.}
The primal form of the binary linear SVM with bias is:
\begin{align*}
  J(\wvec) = \frac{\lambda}{2} \nbr{\wvec}^2 + \min_{b \in \RR} \frac{1}{n} \sum_{i=1}^n \sbr{1 - y_i (\inner{\xvec_i}{\wvec} + b)}_+.
\end{align*}
This can be posed in our framework by setting $Q_1 := \RR^p$, $A := -Y X^{\top}$, $\Omega(\wvec) = \frac{\lambda}{2} \nbr{\wvec}^2$, $\gstar(\uvec) = \min_{b \in \RR} \frac{1}{n} \sum_{i=1}^n \sbr{1 + u_i - y_i b}_+$.  This $\gstar$ corresponds to
\begin{align}
  g(\val) =
  \begin{cases}
    -\sum_{i} \alpha_i & \text{if }
    \val \in Q_2 \\
    +\infty & \text{otherwise},
  \end{cases}
\end{align}
where $Q_2$, the domain of $g$, is
\[
  Q_2 = \Big\{\val \in [0, n^{-1}]^n : \sum_i y_i \alpha_i = 0 \Big\}.
\]
Then the adjoint form turns out to be the well known SVM dual objective:
\begin{align}
  \label{eq:adjoint_linear_SVM}
  D(\alphab) &= \sum_i \alpha_i \! - \! \frac{1}{2 \lambda} \alphab^{\top} Y
  X^{\top} X Y \alphab, \ s.t. \ \val\in Q_2
\end{align}

\paragraph{Example 2: $L_1$ regularized SVM.}
The primal form of the $L_1$ regularized SVM ($L_1$-SVM, \citep{BenMan92}) is:
\begin{align*}
  J(\wvec) = \lambda \nbr{\wvec}_1 + \min_{b \in \RR} \frac{1}{n} \sum_{i=1}^n \sbr{1 - y_i (\inner{\xvec_i}{\wvec} + b)}_+.
\end{align*}
This can be posed in our framework by using exactly the same configurations as above, except that now $\Omega(\wvec) = \lambda \nbr{\wvec}_1$.  One can show that $\Omega^{\star}(\vvec) = 0$ if $\nbr{\vvec}_{\infty} \le \lambda$, and $\infty$ otherwise.
The adjoint form is:
\begin{align}
  \label{eq:adjoint_L1_SVM}
  D(\alphab) =  \begin{cases}
    \sum_i \alpha_i & \text{ if } \nbr{X Y \val}_{\infty}  \le  \lambda \\
    -\infty & \text{ otherwise}.
  \end{cases} \ s.t. \ \val\in Q_2.
\end{align}

\paragraph{Example 3: multivariate scores.}  \citet{Joachims05} proposed a max-margin model which directly optimizes the $F_1$ score.  Assume there are $n_+$ positive examples and $n_-$ negative examples.  $F_1$-score is defined by using the contingency table: $\Delta(\yvec', \yvec) := \frac{2a}{2a+b+c}$.
\begin{figure}[h]
\begin{minipage}[b]{0.21\textwidth}
\setlength{\tabcolsep}{2pt}
\begin{center}
Contingency table.
\end{center}
\begin{tabular}{c|c|c}
  \hline
   & $y \! = \! 1$ & $y \! = \! -1$ \\
   \hline
  $y' \! = \! 1$ & $a$ & $c$ \\
  $y' \! = \! -1$ & $b$ & $d$ \\
  \hline
\end{tabular}
\vspace{1em}
\begin{center}
$b$: false negative \\
$c$: false positive
\end{center}
\end{minipage}
\begin{minipage}[b]{0.26\textwidth}
\begin{align*}
  b &= \sum\nolimits_{i=1}^n \delta(y_i = 1, y'_i = -1) \\
  c &= \sum\nolimits_{i=1}^n \delta(y_i = -1, y'_i = 1) \\
  a &= n_+ - b \\
  d &= n_- - c. \quad (n = n_+ + n_-)
\end{align*}
$\delta(x)=1$ if $x$ is true.  Else 0.
\end{minipage}
\end{figure}

The primal objective proposed by \citet{Joachims05} is
\begin{align}
\label{eq:primal_multivar}
  J(\wvec) &= \frac{\lambda}{2} \nbr{\wvec}^2 \\
  &+ \max_{\yvec' \in \cbr{-1,1}^n} \sbr{\Delta(\yvec', \yvec) + \frac{1}{n} \sum_{i=1}^n \inner{\wvec}{\xvec_i} (y'_i - y_i)}. \nonumber
\end{align}
This can be recovered by setting $Q_1 = \RR^p$, $\Omega(\wvec) = \frac{\lambda}{2} \nbr{\wvec}^2$, and letting $A$ be a $2^n$-by-$p$ matrix where the $\yvec'$-th row is $\sum_{i=1}^n \xvec_i^{\top} (y'_i - y_i)$ for each $\yvec' \in \cbr{-1,+1}^n$.  Then $\gstar(\uvec) = \max_{\yvec'} \sbr{\Delta(\yvec', \yvec) + \frac{1}{n} u_{\yvec'}}$ which is induced by
\begin{align}
\label{eq:gval_multivariate}
  g(\val) =
  \begin{cases}
    -n \sum_{\yvec'} \Delta(\yvec', \yvec) \alpha_{\yvec'} & \text{if }
    \val \in Q_2 \\
    +\infty & \text{otherwise}
  \end{cases}.
\end{align}
Here $Q_2$, the domain of $g$, is
\[
Q_2 = \cbr{\alphab \in [0, n^{-1}]^{2^n} : \sum_{\yvec'} \alpha_{\yvec'} = \frac{1}{n} }.
\]
So we get the adjoint form
\begin{align*}
  D(\alphab) = -\frac{1}{2\lambda} \alphavec^{\top} A A^{\top} \alphavec + n \sum_{\yvec'} \Delta(\yvec', \yvec) \alpha_{\yvec'}, \ \alphavec \in Q_2.
\end{align*}

\paragraph{Example 4: Max-margin Markov Networks.}

The conditional random fields (CRFs) \citep{LafMcCPer01} and max-margin Markov network (\mcn s), \citep{TasGueKol04} are also instances of RRM.  First, they both minimize a regularized risk
with a square norm regularizer.  Second, they assume that there is a
joint feature map $\phivec$ which maps $(\xb, \yb)$ to a feature vector
in $\RR^{p}$.
Third, they assume a label loss $\ell(\yb, \yb^{i}; \xb^{i})$ which
quantifies the loss of predicting label $\yb$ when the correct label of
input $\xb^{i}$ is $\yb^{i}$. Finally, they assume that the space of
labels $\Ycal$ is endowed with a graphical model structure and that
$\phivec(\xb, \yb)$ and $\ell(\yb, \yb^{i}; \xb^{i})$ factorize
according to the cliques of this graphical model. The main difference is
in the loss function employed. CRFs minimize the $L_{2}$-regularized
logistic loss:
\begin{align}
  \label{eq:crf-objective}
  J(\wb) = \frac{\lambda}{2} \nbr{\wb}^2 + \frac{1}{n} &\sum_{i=1}^{n}
  \log \sum_{\yb \in \Ycal} \exp (\ell(\yb, \yb^{i}; \xb^{i})
      \\
  &-\inner{\wb}{\phivec(\xb^{i}, \yb^{i}) - \phivec(\xb^{i}, \yb)}), \nonumber
\end{align}
while the \mcn s minimize the $L_{2}$-regularized hinge loss
\begin{align}
  \label{eq:m3n-objective}
  J(\wb) = \frac{\lambda}{2} \nbr{\wb}^2 + \frac{1}{n} &\sum_{i=1}^{n}
  \max_{\yb \in \Ycal} \{ \ell(\yb, \yb^{i}; \xb^{i}) \\
  & - \inner{\wb}{\phivec(\xb^{i}, \yb^{i}) - \phivec(\xb^{i}, \yb)} \}. \nonumber
\end{align}
Clearly, both cases employ $Q_1 = \RR^p$ and $\Omega(\wvec) = \frac{\lambda}{2} \nbr{\wvec}^2$.  With shorthand $\psivec^{i}_{\yb} := \phivec(\xb^{i},
\yb^{i}) - \phivec(\xb^{i}, \yb)$ and $\ell^{i}_{\yb} := \ell(\yb,
\yb^{i}; \xb^{i})$, they both use an $(n \abr{\Ycal})$-by-$p$ matrix $A$ whose $(i, \yb)$-th row is $(-\psivec^{i}_{\yb})^{\top}$.  For \mcn s, $\gstar(\ub) = \frac{1}{n} \sum_i \max_{\yb} \cbr{\ell^{i}_{\yb} + u^i_{\yb}}$ and it can be verified that the corresponding $g$ is
\begin{align}
  \label{eq:m3n-gdef}
  g(\val) = 
  \begin{cases}
    -\sum_i \sum_{\yb} \ell^{i}_{\yb} \alpha^{i}_{\yb} & \text{if }
    \val \in Q_2 \\
    +\infty & \text{otherwise},
  \end{cases}
\end{align}
where $Q_2$, the domain of $g$, is
\[
Q_2 = \Scal^n := \cbr{\val \in [0, 1]^{n \abr{\Ycal}} : \sum_{\yb} \alpha^{i}_{\yb} = \frac{1}{n}, \ \forall \ i}.
\]
Clearly, $Q_2$ is convex and compact.  Now the adjoint form can be written as
\begin{align}
  \label{eq:dval-m3n}
  D(\val) \! = \! -\frac{1}{2\lambda}
  \val^{\top} A A^{\top} \val \! + \! \sum_i \sum_{\yb} \ell^{i}_{\yb}
  \alpha^{i}_{\yb},\  \val \in \Scal^{n}.
\end{align}

For CRFs, $\gstar(\uvec) = \frac{1}{n} \sum_i \log \sum_{\yvec \in \Ycal} \exp (\ell^i_{\yvec} + u^i_{\yvec})$, and the corresponding $g$ is
\begin{align}
  \label{eq:crf-gdef}
  g(\val) =
  \begin{cases}
    \sum\limits_{i=1}^{n} \sum\limits_{\yb} \alpha^{i}_{\yb} (\log \alpha^{i}_{\yb} - \ell^{i}_{\yb}) + \log n & \text{if }
    \val \in Q_2 \\
    +\infty & \text{otherwise},
  \end{cases}
\end{align}
The domain of $g$ is also $Q_2 = \Scal^n$.  Then the adjoint form is
\begin{align}
  \label{eq:dval-crf}
  D(\val) \! = \! -\frac{1}{2\lambda}
  \val^{\top} A A^{\top} \val \! &+ \! \sum\limits_{i=1}^{n} \sum\limits_{\yb} \alpha^{i}_{\yb} (\log \alpha^{i}_{\yb} \! - \! \ell^{i}_{\yb}) \\
  &+ \log n, \quad \val \in \Scal^{n}. \nonumber
\end{align}

\paragraph{Example 5: Entropy regularized LPBoost}
In \citep{WarGloVis08}, the entropy regularized LPBoost needs to minimize
\begin{align}
\label{eq:primal_ent_lpboost}
  J(\wvec) &= \lambda \Delta(\wvec, \wvec^0) + \max_{i \in [t]} \inner{\uvec_i}{\wvec}, \\
  s.t. \ \wvec &\in Q_1 := \cbr{\wvec \in [0, \nu]^n : \sum_{i=1}^n w_i = 1}. \nonumber
\end{align}
Here $\nu$ is a constant in $[0, 1]$, $\wvec^0 \in Q_1$ is the uniform distribution, and $\Delta$ is the Bregman divergence induced by the entropy (\ie\ $\Delta$ is the relative entropy).  $\uvec_i \in \RR^n$ is the so called edge vector.  This objective corresponds to $\Omega(\wvec) = \lambda \Delta(\wvec, \wvec^0)$, $A = (\uvec_1, \ldots, \uvec_t)^{\top}$, $\gstar(\svec) = \max_i s_i$ which is induced by $g(\alphavec) = 0$ if $\alphavec \in Q_2 := \Scal_t$, and $\infty$ otherwise.  Since
\[
\Omega^{\star}(\svec) = - \min_{\beta_i \ge 0} \cbr{\lambda \log \sum_{i=1}^n w^0_i \exp \rbr{\frac{s_i - \beta_i}{\lambda}} + \nu \sum_{i=1}^n \beta_i},
\]
so the adjoint form can be written as
\begin{align*}
  D(\val) = - \!\min_{\beta_i \ge 0} \cbr{\lambda \log \sum_{i=1}^n w^0_i \exp \rbr{\! \frac{A_{:i}^{\top} \val + \beta_i}{-\lambda} \!} \! + \! \nu \sum_{i=1}^n \beta_i}
\end{align*}
subject to $\val \in Q_2 = \Scal_t$.  Here $A_{:i}$ denotes the $i$-th column of $A$.  Although this form of $D(\val)$ is obscure, the strong convexity of $\Omega$ implies that $D(\val)$ is \lcg. The $\nu$ is introduced by \citep{WarGloVis08} to cap the density, and this cap is removed if $\nu = \infty$.  In that case, $\beta_i$ in the definition of $D(\val)$ will all be optimized to $0$ and we recover the well known log-sum-exp formula of $D(\val)$.

\paragraph{Example 6: Elastic net}
Using square loss as an example of the empirical risk, the primal objective of elastic net regularization is
\begin{eqnarray}
\label{eq:obj_elastic_net}
  J(\wvec) = \lambda \rbr{\gamma \nbr{\wvec}_1 + \frac{1}{2}\nbr{\wvec}_2^2} + \frac{1}{n} \sum_{i=1}^n (y_i - \xvec_i^{\top} \wvec)^2.
\end{eqnarray}
Here the $L_1$ normalizer $\nbr{\wvec}_1$ is introduced to promote the sparsity of the solution.  In this case, $\Omega(\wvec) = \lambda \rbr{\gamma \nbr{\wvec}_1 + \frac{1}{2}\nbr{\wvec}_2^2}$ and it dual is left as an exercise for the reader.  An equivalent formulation of \eqref{eq:obj_elastic_net} is by moving the regularizer into the constraint:
\begin{align*}
  \min_{\wvec} \quad \bar{J} (\wvec) &= \frac{1}{n} \sum_{i=1}^n (y_i - \xvec_i^{\top} \wvec)^2 \\
  s.t. \qquad \gamma \nbr{\wvec}_1 & + \frac{1}{2}\nbr{\wvec}_2^2 \le r.
\end{align*}
It can be shown that for any $\lambda>0$ there exists an $r>0$ such that $\argmin \bar{J} = \argmin J$ and vice versa.

There are also many regularized risk minimization problems which optimize over the space of positive semi-definite matrices, \eg\ \citep{JiYe09,Lu09,dAsBanElG08}.

\paragraph{Summary}

From these examples, we can see the following properties of $\Omega$ and $g$ which will also be assumed for our general treatment of the objective \eqref{eq:primal_obj_rrm} and \eqref{eq:adjoint_obj_rrm}.  Firstly, the function $\Omega(\wvec)$ which serves as a regularizer is strongly convex.  In Example 1, 3, 4, 6, $\Omega(\wvec)$ is $\lambda$-sc wrt the Euclidean norm.  In Example 5, $f(\wvec)$ is $\lambda$-sc wrt the $L_1$ norm.  As a result, $\Omega^{\star}$ must be $\frac{1}{\lambda}$-\lcg\ on $\RR^p$.  Secondly, the \lcg\ constant of $\Omega^{\star}(-A^{\top} \val)$ in $\val$ also depends on the matrix norm of $A$, which in turn depends on the choice of norm on $Q_1$ and $Q_2$.  Thirdly, the $\gstar$ is not necessarily differentiable (\eg, hinge loss), but $g$ is always \lcg\ on $Q_2$.  Finally, $Q_2$ is bounded and its diameter can be well controlled.  This is important for translating dual solutions into the primal.

Our goal is to minimize $J(\wvec)$ over $Q_1$, and we do not really care about solving the dual $D(\val)$ over $Q_2$.  However, since $D(\val)$ has favorable smooth properties, we also often work in the dual as a proxy.  To solve $J(\wvec)$ (and $D(\val)$), there are three main approaches.

\paragraph{Smoothing $\gstar$ to a fixed level.}
To handle the nonsmoothness of $\gstar$, we can smooth it by using the technique introduced by \citet{Nesterov05}.  Then the composite form, $\Omega(\wvec)$ plus the smoothed variant of $\gstar(A\wvec)$, fits the form of AGM-EF and can be solved in $\wvec$ (primal), $\val$ (dual) or primal-dual.  Given a prescribed accuracy $\epsilon$, $\gstar$ only needs to be smoothed to a fixed extent.

\paragraph{Smoothing $\gstar$ with decreasing smoothness.}
\citep{Nesterov05a} introduced a primal-dual method where $\gstar$ is smoothed with decreased smoothness (\ie\ increased closeness to $\gstar$).  As a result, it tends to the optimal solution of $D(\val)$ and $J(\wvec)$, instead of just attaining a prescribed accuracy $\epsilon$.

\paragraph{No smoothing.}
Given the smoothness of the dual problem $D(\val)$, AGM can be applied to maximize it and then convert $\val_k$ into $\wvec_k$ by \eqref{eq:inf_mem_x_to_val} and \eqref{eq:one_mem_x_to_val}.  No smoothing of $\gstar$ is needed in this case.

The next three subsections will describe these schemes in detail, with focus on the rates of convergence and how each iteration can be performed efficiently.  Moreover, we provide intuitions on which scheme is more suitable.  For brevity, we will only use AGM-EF-$\infty$ with fixed $L$ as an example, while similar results can be straightforwardly derived for AGM-EF-1 and adaptive $L$.  In this version of the paper, we illustrate all these ideas on Example 1 (SVM with bias).

\subsection{Smoothing $\gstar$ to a fixed level}
\label{sec:ml_sol_primal_smooth}

A key technique introduced by \citet{Nesterov05} was to tightly approximate the nonsmooth part $\gstar(A\wvec)$ by a smooth surrogate.  The idea of the approach originates from the Theorem \ref{theorem:SC_LCG} in Appendix \ref{sec:app:convex_ana} which connects the strong convexity of a function and \lcg\ of its Fenchel dual.  $\gstar$ is not \lcg\ because $g$ is not strongly convex, therefore to make $\gstar$ smooth a natural idea is to add to $g$ a strongly convex function $d_2$ on $Q_2$ and then dualize it:
\begin{align}
  \gstar_{\mu} (\ub) &:= (g + \mu d_2)^{\star}(\ub) \nonumber \\
  \label{eq:gstar-with-d2}
  &= \sup_{\val \in Q_2}
  \cbr{\inner{\val}{\ub} - g(\val) - \mu \, d_2(\val)}.
\end{align}
Here $\mu \ge 0$ and $d_2$ is assumed to be $\sigma_2$-sc wrt a norm on $Q_2$.\footnote{We can also use the more general form of strong convex as in Definition \ref{def:gen_sc}.  Here we use the conventional definition for simplicity.}  By proper centering, $d_2$ can be assumed to satisfy
\[
\min_{\val \in Q_2} d_2(\val) = 0.
\]
Let us further define
\[
\val_0 = \argmin_{\val \in Q_2} d_2(\val), \quad D:= \max_{\val \in Q_2} d_2(\val).
\]
The main restriction of this approach is that $D$ must be well bounded.  Using the definition in \eqref{eq:gstar-with-d2} we can easily characterize the \emph{uniform} tightness of the approximation: for all $\uvec \in Q_2$
\begin{align}
  \label{eq:bounds-on-gstar}
  \gstar(\ub) - \mu D \leq \gstar_{\mu} (\ub) \leq \gstar(\ub).
\end{align}

Furthermore, the \lcg\ constant of $\gstar_{\mu}(A\wvec)$ in $\wvec$ wrt the norm on $Q_1$ can be estimated as follows.  By Theorem \ref{theorem:SC_LCG}, $\gstar_{\mu}$ is $(\mu \sigma_2)^{-1}$-\lcg\ wrt the dual norm on $Q_2$.  So we can apply the chain rule:
\begin{align*}
  &\nbr{\frac{\partial}{\partial \wvec} \gstar_{\mu}(A \wvec_1) - \frac{\partial}{\partial \wvec} \gstar_{\mu}(A \wvec_2)}^* \\
  &= \nbr{A (\grad \gstar_{\mu}(A \wvec_1) - \grad \gstar_{\mu}(A \wvec_2)}^* \\
  & \le \nbr{A} \frac{1}{\mu \sigma_2} \nbr{A \wvec_1 - A \wvec_2}^* \le \frac{\nbr{A}^2}{\mu \sigma_2} \nbr{\wvec_1 - \wvec_2}.
\end{align*}
That is, $\gstar_{\mu}(A\wvec)$ is \lcg\ in $\wvec$ with constant
\begin{align}
\label{eq:Lgmu}
  L_g(\mu) \le \frac{\nbr{A}^2}{\mu \sigma_2}.
\end{align}

\paragraph{Example 1: smoothing the hinge loss.}
The hinge loss $[1-w]_+$ is the dual of $g(\alpha) = \alpha$ for $\alpha \in [-1,0]$ and $\infty$ elsewhere.  Adding $\frac{\mu}{2} \alpha^2$ to $g$ and dualize it, we get
\begin{align*}
  \gstar_{\mu}(w) = \begin{cases}
    0 & \text{if } w \ge 1 \\
    \frac{(1-w)^2}{2 \mu} & \text{if } w \in [1 - \mu, 1] \\
    1 - w - \frac{\mu}{2} & \text{if } w \le 1 - \mu
  \end{cases}.
\end{align*}
Some smoothed hinge loss $\gstar_{\mu}(w)$ with various $\mu$ are plotted in Figure \ref{fig:smooth_hinge}.

\paragraph{Example 2: smoothing max into soft max.}
In the entropy regularized LPBoost, $\gstar(\svec) = \max_i s_i$ and $g(\uvec) = 0$ if $\Scal_t$ and $\infty$ otherwise..  Then adding prox-function $\sum_i s_i \ln s_i$ to $g$ and dualizing it, we get
\[
\gstar_{\mu}(\svec) = \mu \ln \sum_i \exp \rbr{\frac{s_i}{\mu}}.
\]
When $\mu \to 0$, this soft max recovers max.

With the smoothed $\gstar_{\mu}$ in place, we now discuss how to find an $\epsilon$ accurate solution to $J(\wvec)$ by three different schemes: primal ($\wvec$), dual ($\val$), and primal-dual.

{
\begin{figure}[t]
  \includegraphics[width=0.4\textwidth]{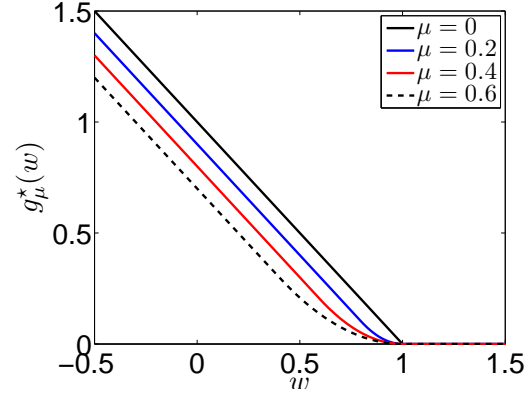}\\
  \caption{Smoothing hinge loss with different $\mu$.}
  \label{fig:smooth_hinge}
\end{figure}
}

\subsubsection{Solving in the primal $\wvec$.}

We will use $\gstar_{\mu}$ to define a new objective function
\begin{align}
\label{eq:reg_primal}
  J_{\mu}(\wb) &:= \Omega(\wb) + \gstar_{\mu}(A\wb) \\
   &= \Omega(\wb) + \max_{\val \in Q_2} \cbr{\inner{A \wb}{\val}-g(\val) -
    \mu \, d_2(\val)}. \nonumber
\end{align}

Since $J_{\mu}(\wvec) \le J(\wvec)$ for all $\wvec$, to make sure that an $\epsilon$ accurate solution to $J_{\mu}$ is a $2 \epsilon$ accurate solution to $J$, a sufficient condition is that their deviation be upper bounded \emph{everywhere} by $\epsilon$, \ie\ $\max_{\wvec} J(\wvec) - J_{\mu}(\wvec) < \epsilon$.  By \eqref{eq:bounds-on-gstar}, this is guaranteed if $\mu$ is small enough
\begin{align}
\label{eq:choice_smooth_mu}
\mu \le \frac{\epsilon}{D}.
\end{align}

Plugging \eqref{eq:choice_smooth_mu} into \eqref{eq:Lgmu}, we obtain that the \lcg\ constant of $\gstar_{\mu}(A\wvec)$ is at most $\frac{\nbr{A}^2 D}{\sigma_2 \epsilon}$.  Let $\wvec^* = \argmin_{\wvec} J(\wvec)$.  Bearing in mind that $\Omega$ is $\lambda$-sc, AGM-EF-$\infty$ is readily applicable to $J_{\mu}(\wvec)$ and the following rate of convergence can be inferred from Theorem \ref{thm:inf_mem_rate_conv}:
\begin{align*}
  J_{\mu}(\wvec_k) - J_{\mu}(\wvec^*) &\le \Delta(\wvec^*, \uvec_0) \min \Bigg \{\frac{4 D \nbr{A}^2}{\sigma_1 \sigma_2 \epsilon (k+1)^2}, \\
  &\frac{4 D \nbr{A}^2}{\sigma_1 \sigma_2 \epsilon} \rbr{1 + \sqrt{\frac{\sigma_1 \sigma_2 \lambda \epsilon}{4 D \nbr{A}^2}}}^{-2k+2} \Bigg \}.
\end{align*}
Once $J_{\mu}(\wvec_k) - J_{\mu}(\wvec^*) \le \epsilon$, we must have
\begin{align*}
  J(\wvec_k) - J(\wvec^*) \le J_{\mu}(\wvec_k) + \mu D - J_{\mu}(\wvec^*) \le 2\epsilon.
\end{align*}
Therefore, we obtain the following theorem.
\begin{theorem}
\label{thm:smooth_primal}
  For any given $\epsilon > 0$, setting $\mu$ by the equality in \eqref{eq:choice_smooth_mu} and applying AGM-EF-$\infty$ to $J_{\mu}(\wvec)$, we can guarantee that $\wvec_k$ is a $2\epsilon$ accurate solution of $J(\wvec)$ as long as
  \begin{align}
  \label{eq:step_smooth_primal}
  &k \ge \min \Bigg \{ \frac{1}{\epsilon} \sqrt{\frac{4 D \nbr{A}^2}{\sigma_1 \sigma_2} \Delta(\wvec^*, \uvec_0)}, \\
  & 1 \! + \! \frac{1}{2} \ln \rbr{\frac{4 D \nbr{A}^2}{\sigma_1 \sigma_2 \epsilon^2} \Delta(\wvec^*, \uvec_0)} \! \! \Bigg / \!  \ln \rbr{1 \! + \! \sqrt{\frac{\sigma_1 \sigma_2 \lambda \epsilon}{4 D \nbr{A}^2}}} \! \Bigg \}. \nonumber
  \end{align}
\end{theorem}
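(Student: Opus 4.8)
The plan is to assemble three already-established ingredients — the uniform approximation bound \eqref{eq:bounds-on-gstar}, the Lipschitz estimate \eqref{eq:Lgmu}, and the convergence rate of Theorem \ref{thm:inf_mem_rate_conv} — and then convert an $\epsilon$-accurate solution of $J_\mu$ into a $2\epsilon$-accurate solution of $J$.

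First I would cast $J_\mu$ into the composite template of Section \ref{sec:inf_mem_nest}: take the smooth part to be $f(\wvec) = \gstar_\mu(A\wvec)$ and the (possibly nonsmooth) strongly convex part to be $\Psi(\wvec) = \Omega(\wvec)$, with prox-function $d = d_1$ (so $\sigma = \sigma_1$), $\lambda_1 = 0$ (the composition $\gstar_\mu \circ A$ is smooth but not strongly convex in general), and $\lambda_2 = \lambda$, whence $\lambda_1 + \lambda_2 = \lambda$. Setting $\mu = \epsilon/D$ as in \eqref{eq:choice_smooth_mu} and invoking \eqref{eq:Lgmu}, the Lipschitz constant of $\grad f$ is at most $L := \nbr{A}^2 D / (\sigma_2 \epsilon)$. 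Feeding these values into Theorem \ref{thm:inf_mem_rate_conv} evaluated at the single point $\xvec = \wvec^* = \argmin_\wvec J(\wvec)$ — the theorem holds for every $\xvec \in \dom \Psi$ — yields the two-branch estimate for $J_\mu(\wvec_k) - J_\mu(\wvec^*)$ displayed just before the theorem, after crudely replacing $L/\sigma$ by $4L/\sigma$ in the geometric branch so that both branches share the prefactor $4D\nbr{A}^2/(\sigma_1\sigma_2\epsilon)$.

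Next I would solve the requirement that each branch of the rate bound be at most $\epsilon$, separately for the two branches. For the $(k+1)^{-2}$ branch this is a one-line rearrangement giving $k+1 \ge \epsilon^{-1}\sqrt{4D\nbr{A}^2 \Delta(\wvec^*,\uvec_0)/(\sigma_1\sigma_2)}$, which is implied by the first argument of the $\min$ in \eqref{eq:step_smooth_primal}; for the geometric branch, taking logarithms produces the second argument. Hence whenever $k$ exceeds the $\min$ in \eqref{eq:step_smooth_primal} we have $J_\mu(\wvec_k) - J_\mu(\wvec^*) \le \epsilon$.

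Finally I would close the loop using \eqref{eq:bounds-on-gstar} at $\ub = A\wvec_k$ and at $\ub = A\wvec^*$: the left inequality gives $J(\wvec_k) \le J_\mu(\wvec_k) + \mu D$, the right gives $J_\mu(\wvec^*) \le J(\wvec^*)$, and since $\mu D = \epsilon$ we obtain $J(\wvec_k) - J(\wvec^*) \le \bigl(J_\mu(\wvec_k) - J_\mu(\wvec^*)\bigr) + \mu D \le 2\epsilon$, so $\wvec_k$ is $2\epsilon$-accurate for $J$. I do not expect a genuine obstacle here: the whole argument is a routine composition of results proved earlier. The only points that need care are the bookkeeping of the constants $\sigma_1,\sigma_2,D,\nbr{A}$ when specializing Theorem \ref{thm:inf_mem_rate_conv}, and the observation that this theorem must be instantiated at the true minimizer $\wvec^*$ of $J$ (not of $J_\mu$), so that the final $2\epsilon$ bound is relative to $\min_\wvec J(\wvec)$.
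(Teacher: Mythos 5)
Your proposal is correct and follows essentially the same route as the paper: smooth with $\mu=\epsilon/D$, treat $\Omega$ as the $\lambda$-sc composite part $\Psi$ and $\gstar_\mu(A\cdot)$ as the smooth part with $L=\nbr{A}^2D/(\sigma_2\epsilon)$, apply Theorem \ref{thm:inf_mem_rate_conv} at $\wvec^*=\argmin_{\wvec}J(\wvec)$, solve each branch for $k$, and convert back via \eqref{eq:bounds-on-gstar}. The bookkeeping details you flag (the shared $4D\nbr{A}^2/(\sigma_1\sigma_2\epsilon)$ prefactor and evaluating at the minimizer of $J$ rather than of $J_\mu$) are exactly how the paper handles them.
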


Note $\ln (1 + \epsilon) \approx \epsilon$ when $\epsilon$ is close to 0, so the denominator in the second term becomes $O(\sqrt{\epsilon})$ and overall the second term is approximately $O\rbr{\frac{1}{\sqrt{\epsilon}}\ln \frac{1}{\epsilon}}$.  The first term does not depend on $\lambda$.  Note also that this bound does not explicitly depend on the diameter of $Q_1$ which is infinity in many cases.  A closer look shows that $\Delta(\wvec^*, \uvec_0)$ hides the dependence on $\lambda$.  With a small regularization parameter $\lambda$, $\Delta(\wvec^*, \uvec_0)$ may be large and could approach infinity when $\lambda$ tends to 0.

Unfortunately, the bound on the duality gap in \eqref{eq:inf_mem_duality_bound} does use the diameter of $Q_1$, and it cannot be replaced by $\Delta(\wvec^*, \uvec_0)$ as in Theorem \ref{thm:smooth_primal}.  Therefore, we do lose a termination criteria. Fortunately, this problem in duality gap can be avoided if we optimize in $\val$.  Before describing it in detail, let us illustrate the above procedure on training the SVM with bias.

Here, choose $d_1$ and $d_2$ as the Euclidean norm square and the norms on $Q_1$ and $Q_2$ are both Euclidean norm.  Then $\nbr{A}^2 = \lambda_{\max}(A^{\top} A) = \lambda_{\max}(XX^{\top})$ where $\lambda_{\max}$ stands for the maximum eigenvalue. $\sigma_1 = \sigma_2 = 1$.  The diameter of $Q_2$ is $D \le n \frac{1}{n^2} = \frac{1}{n}$.  For a given $\epsilon$, set $\mu = n \epsilon$ by \eqref{eq:choice_smooth_mu}. Suppose all $\xvec_i$ lie in the ball with Euclidean radius $R$.  Then $\lambda_{\max} (XX^{\top}) \le n R^2$ and the second term in \eqref{eq:step_smooth_primal} is essentially
\[
O\rbr{\ln \frac{1}{\epsilon} \sqrt{\frac{4 \lambda_{\max} (XX^{\top})}{\lambda n \epsilon}}} \le O\rbr{\frac{2 R}{\sqrt{\lambda \epsilon}} \ln \frac{1}{\epsilon} }.
\]

Solving in the primal is also advantageous in terms of the condition number.  When $\gstar$ is smoothed by small $\mu$ or when the regularization parameter $\lambda$ is small, the condition number $c := L_g(\mu) / \lambda$ becomes very large.  According to Theorem \ref{thm:inf_mem_rate_conv}, the number of iterations to find an $\epsilon$ accurate solution is the min of
\[
O\rbr{\frac{\log \frac{1}{\epsilon}}{\log \rbr{1+\sqrt{\frac{\lambda}{L_g(\mu)}}}} } \quad \text{and} \quad O\rbr{\frac{\sqrt{L_g(\mu)}}{\sqrt{\epsilon}}}.
\]
So the linear convergence rate depends on $c$ by $O(\sqrt{c})$, as opposed to $O(c)$ in most linearly converging algorithms, \eg\ gradient descent.  Second, the $\min$ in Theorem \ref{thm:inf_mem_rate_conv} implies that when $\lambda$ is very small and the objective is very poorly conditioned, the linear convergence will be automatically superseded by the $1/\sqrt{\epsilon}$ rate which has better ``constant".  Some class of algorithms require manual rewiring in such a case, \eg\ \citep{TeoVisSmoLe10} and \citep{DoLeFoo09}.

Finally, it is noteworthy that this method does not require $g$ be \lcg.

\subsubsection{Solving in the dual $\val$.}

Similar to $J_{\mu}$ in \eqref{eq:reg_primal}, we can also define a smoothed version of $D(\val)$:
\begin{align}
  D_{\mu}(\val) &:= -\mu d_2(\val) - g(\val) + \min_{\wvec} \cbr{\Omega(\wvec) + \inner{A \wvec}{\val}} \nonumber \\
\label{eq:D_mu}
  &= -\mu d_2(\val) - g(\val) - \Omega^{\star}(-A^{\top} \alphavec)
\end{align}
which is to be maximized over $\val \in Q_2$.  So we can pose $-D_{\mu}(\val)$ in the composite form,
\[
f(\val) = g(\val) + \Omega^{\star}(-A^{\top} \alphavec), \text{ and } \Psi(\val) = \mu d_2(\val),
\]
to which AGM-EF-$\infty$ and AGM-EF-1 can be applied.  Since $\Omega^{\star}$ is $1/\lambda$-\lcg, $f(\val)$ must be \lcg\ with constant
\begin{align}
\label{eq:lcg_const_dual}
L_f = \frac{\nbr{A}^2}{\lambda} + L_g,
\end{align}
where $L_g$ is the \lcg\ constant of $g$.  $\Psi$ is $\mu$-sc.  Applying the primal-dual scheme in Section \ref{sec:inf_mem_duality_gap} with $-D_{\mu}$ and $-J_{\mu}$ playing the role of $J$ and $D$ therein respectively, we get
\begin{align*}
J_{\mu}(\wvec_k) - D_{\mu}(\val_k) &\le \max_{\val \in Q_2} \Delta(\val, \uvec_0) \cdot \min \Bigg \{ \frac{4 L_f}{\sigma_2 (k+1)^2}, \\
&\qquad \qquad \frac{L_f}{\sigma_2} \rbr{1 + \sqrt{\frac{\sigma_2 \mu}{4 L_f}}}^{-2k+2} \Bigg \}.
\end{align*}
Once $J_{\mu}(\wvec_k) - D_{\mu}(\val_k) \le \epsilon$, it is ensured that
\begin{align*}
  J(\wvec_k) - \min_{\wvec} J(\wvec) &\le J_{\mu}(\wvec_k) + \mu D - \max_{\val \in Q_2} D(\val) \\
  &\le J_{\mu}(\wvec_k) + \epsilon - D(\val_k) \\
  &\le J_{\mu}(\wvec_k) + \epsilon - D_{\mu}(\val_k) \le 2 \epsilon.
\end{align*}
So we conclude the following theorem.
\begin{theorem}
\label{thm:smooth_dual}
  For any given $\epsilon > 0$, setting $\mu$ by the equality in \eqref{eq:choice_smooth_mu} and applying the primal-dual scheme in Section \ref{sec:inf_mem_duality_gap} to $-D_{\mu}$ and $-J_{\mu}$, we can guarantee that $\wvec_k$ is a $2\epsilon$ accurate solution of $J(\wvec)$ as long as
  \begin{align}
  \label{eq:step_smooth_dual}
  &k \ge \min \Bigg \{ \frac{1}{\sqrt{\epsilon}} \sqrt{\frac{4 M (\nbr{A}^2 + \lambda L_g)}{\lambda \sigma_2}} - 1, \\
  & \qquad \qquad \qquad 1 + \frac{1}{2} \frac{\ln \rbr{\frac{4 M (\nbr{A}^2 + \lambda L_g)}{\lambda \sigma_2 \epsilon}} }{\ln \rbr{1 + \sqrt{\frac{\lambda \epsilon \sigma_2}{4 D ( \nbr{A}^2 + \lambda L_g)}}}} \Bigg \}. \nonumber
  \end{align}
  where $M := \max_{\val \in Q_2} \Delta(\val, \uvec_0)$.
\end{theorem}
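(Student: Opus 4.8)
The plan is to treat the two-branch inequality displayed immediately before the theorem as the heavy lifting, and to deduce the theorem from it by the sandwich argument already sketched in the excerpt together with a purely algebraic inversion of the rate in $k$. First I would recall why that inequality holds, so the ingredients are on the table: write $-D_\mu(\val) = f(\val) + \Psi(\val)$ with $f(\val) = g(\val) + \Omega^\star(-A^\top\val)$ and $\Psi(\val) = \mu d_2(\val)$. This fits the minimax template of Section \ref{sec:inf_mem_duality_gap} via $\phi(\val,\wvec) := g(\val) - \inner{A\wvec}{\val} - \Omega(\wvec)$, which is $0$-sc in $\val$ for fixed $\wvec$, strictly concave in $\wvec$ for fixed $\val$ (because $\Omega$ is strongly convex), has partial maximum over $\wvec$ equal to $f$, and whose ``dual'' object $\min_\val\{\phi(\val,\wvec) + \Psi(\val)\}$ equals $-\Omega(\wvec) - \gstar_\mu(A\wvec) = -J_\mu(\wvec)$, so the roles really do match. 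By Danskin's theorem $f$ is convex and differentiable, and we invoke the standing assumption that it is $L_f$-\lcg\ with $L_f = \nbr{A}^2/\lambda + L_g$ from \eqref{eq:lcg_const_dual}; moreover $\Psi = \mu d_2$ is $\mu$-sc wrt $d_2$ by Property \ref{prpty:breg_div_sc}, so here $\lambda_1 = 0$ and $\lambda_2 = \lambda = \mu$. Applying Theorem \ref{thm:inf_mem_duality_gap} to $-D_\mu$ (treating $f$ as $0$-sc) produces dual iterates $\cbr{\wvec_k}$ via \eqref{eq:inf_mem_x_to_val} with $J_\mu(\wvec_k) - D_\mu(\val_k) \le A_k^{-1} M$, where $M := \max_{\val\in Q_2}\Delta(\val,\uvec_0)$ is finite because $Q_2$ is compact; combining this with Lemma \ref{lem:inf_mem_Ak} specialized to $L\leftarrow L_f$, $\sigma\leftarrow\sigma_2$, $\lambda\leftarrow\mu$, $\lambda_1 = 0$ yields exactly the displayed two-branch estimate.

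The second step is the translation of a small smoothed gap into a small true gap, which is the chain already written in the excerpt: by \eqref{eq:bounds-on-gstar} we have $J(\wvec_k) \le J_\mu(\wvec_k) + \mu D$, while $d_2 \ge 0$ (after centering so that $\min d_2 = 0$) gives $D_\mu(\val_k) \le D(\val_k) \le \max_{\val\in Q_2} D(\val) = \min_\wvec J(\wvec)$; with the choice $\mu D = \epsilon$ from the equality in \eqref{eq:choice_smooth_mu} these combine into
\[
J(\wvec_k) - \min_\wvec J(\wvec) \le \bigl(J_\mu(\wvec_k) - D_\mu(\val_k)\bigr) + \epsilon,
\]
so $J_\mu(\wvec_k) - D_\mu(\val_k) \le \epsilon$ certifies that $\wvec_k$ is $2\epsilon$-accurate for $J$.

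Finally I would invert each branch of the rate bound for the smallest $k$ making it at most $\epsilon$. The $O(1/k^2)$ branch $M\cdot 4L_f/(\sigma_2(k+1)^2) \le \epsilon$ gives $k \ge \sqrt{4ML_f/(\sigma_2\epsilon)} - 1$, which after substituting $L_f = (\nbr{A}^2 + \lambda L_g)/\lambda$ is the first term of \eqref{eq:step_smooth_dual}. The linear branch $M\cdot(L_f/\sigma_2)(1+\sqrt{\sigma_2\mu/(4L_f)})^{-2k+2} \le \epsilon$ gives $k \ge 1 + \frac{1}{2}\ln\bigl(ML_f/(\sigma_2\epsilon)\bigr)/\ln\bigl(1+\sqrt{\sigma_2\mu/(4L_f)}\bigr)$; substituting $\mu = \epsilon/D$ in the denominator and the same expression for $L_f$ throughout (and enlarging $ML_f/(\sigma_2\epsilon)$ to $4ML_f/(\sigma_2\epsilon)$ inside the logarithm so the constant matches the first branch) produces the second term. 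Taking the minimum over the two branches, and noting that $k$ past either threshold makes the smoothed gap $\le \epsilon$, completes the argument. I do not expect a genuine obstacle: the only points needing care are keeping the primal/dual labels straight (the ``primal'' variable of the generic scheme of Section \ref{sec:inf_mem_duality_gap} is the dual variable $\val$ here), checking that the minimax reformulation legitimately triggers that scheme, and the routine algebra of the inversion.
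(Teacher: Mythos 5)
Your proposal is correct and follows essentially the same route as the paper: the paper's argument is precisely the discussion preceding the theorem, namely casting $-D_\mu$ in the composite/minimax form with $f(\val)=g(\val)+\Omega^{\star}(-A^{\top}\val)$ ($L_f$-\lcg, $0$-sc) and $\Psi(\val)=\mu d_2(\val)$ ($\mu$-sc), invoking Theorem \ref{thm:inf_mem_duality_gap} together with Lemma \ref{lem:inf_mem_Ak}, passing from the smoothed gap to the true gap via \eqref{eq:bounds-on-gstar} and $\mu D=\epsilon$, and inverting the two-branch rate in $k$. You also correctly account for the paper's enlargement of the constant inside the logarithm and the appearance of $D$ (rather than $M$) in the denominator after substituting $\mu=\epsilon/D$.
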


It is important to note that this scheme requires $g$ be \lcg, while solving in the primal does not make such a requirement.

Let us apply the scheme to SVM with bias, and use the same choice of norm and prox-function as before.  Now $L_g = 0$ and $M = 1/n$.  Using the approximation $\ln (1+x) \approx x$ when $\abr{x} \ll 1$, \eqref{eq:step_smooth_dual} becomes
\begin{align*}
  k \ge \min \cbr{\frac{2R}{\sqrt{\lambda \epsilon}}-1, 1 + \frac{R}{\sqrt{\lambda \epsilon}} \ln \rbr{\frac{4R^2}{\lambda \epsilon}}}.
\end{align*}

As a final note, the way we smooth the empirical risk is different from \citep{Chapelle06} which changes hinge loss into square hinge loss or higher order.  Our method has a smoothing parameter which trades smoothness for the tightness of the approximation.  In contrast, the square hinge loss is just a heuristic approximation and no bound is available in optimization for its solution.

\subsection{Smoothing $\gstar$ with decreasing smoothness}
A typical primal-dual solver for the objectives in \eqref{eq:primal_obj_rrm} and \eqref{eq:lagrange_obj_rrm} is the excessive gap technique \citep[EGT,][]{Nesterov05a}.  One concrete application is \citep{ZhaSahVis10b} where EGT is used to solve the above Example 4 (\mcn\ and CRF).  Unfortunately, EGT forces a fixed way to initialize $\wvec_0$ and $\val_0$.  This is very inconvenient for homology and other warm-start techniques which utilize the closeness of solutions under small perturbations of the problem parameter (\eg\ $\lambda$).

\subsection{No smoothing of $\gstar$}

Since we assume $g$ is \lcg\ and $\Omega$ is $\lambda$-sc, so the dual \eqref{eq:adjoint_obj_rrm} is \lcg\ and AGM-EF-$\infty$ is applicable.  Since our ultimate goal is to minimize $J(\wvec)$ we adopt the primal-dual scheme in Section \ref{sec:inf_mem_duality_gap}. The \lcg\ constant of $D$ is exactly the $L_f$ in \eqref{eq:lcg_const_dual}.  Treating $-D$ and $-J$ as the $J$ and $D$ therein respectively, we get
\begin{align*}
  J(\wvec_k) - D(\val_k) \le \frac{4 M (\nbr{A}^2 + \lambda L_g)}{\lambda \sigma_2 (k+1)^2}.
\end{align*}
When applied to SVM with bias where $M=1/n$ and $L_g = 0$, we get that $J(\wvec_k) - D(\val_k) < \epsilon$ for all
\[
k \ge \frac{2 R}{\sqrt{\lambda \epsilon}} - 1.
\]

When comparing the rates, it is important to bear in mind that machine learning problems usually do not need a high accuracy solution and so $\epsilon = 10^{-2}$ or $10^{-3}$ might suffice.  In many cases, $\lambda$ will be set to very small such as $10^{-6}$.  Therefore $\frac{1}{\epsilon}$ can be much smaller than $\frac{1}{\lambda}$.  Also, we are currently bounding $\nbr{A}^2$ by $n R^2$ which can be very loose in practice.  The dependence of $\Delta(\wvec^*, \uvec_0)$ on $\lambda$ is not clear either.  Finally in practice when solving in the dual, the box constraints in SVM can cause considerable waste of gradient computation.  Therefore the rates above just provide limited guidance and the most appropriate optimization strategy has to be picked empirically.

\subsection{Efficient computation of the gradient}
\label{sec:computation_trick}

So far, we have ignored the computational complexity per iteration which is dominated by two operations: computing the gradient and minimizing the model $\psi_k$ in AGM-EF-$\infty$ (or $q_k$ in AGM-EF-1).  We first show in this subsection that the gradient in all the above examples can be computed efficiently.  Indeed, the gradients needed are $\frac{\partial}{\partial \wvec} \gstar_{\mu}(A \wvec)$ and $\frac{\partial}{\partial \val} \Omega^{\star}(-A^{\top} \val)$, with the former always being more challenging.  So we focus on calculating $\frac{\partial}{\partial \wvec} \gstar_{\mu}(A \wvec)$.

By chain rule, $\frac{\partial}{\partial \wvec} \gstar_{\mu}(A \wvec) = A^{\top} \grad \gstar_{\mu}(A \wvec)$.  Using \citep[][Theorem X.1.4.4]{HirLem93}, $\grad \gstar_{\mu}(\uvec)$ can be computed by
\begin{align}
\label{eq:grad_gstar_mu}
\grad \gstar_{\mu}(\uvec) = \argmax_{\val \in Q_2} \inner{\uvec}{\val} - g(\val) - \mu d_2(\val).
\end{align}
In the case of multivariate score \eqref{eq:primal_multivar} and \eqref{eq:gval_multivariate}, the dimension of the domain of $g$ is exponentially high in the number of training examples, and therefore it will be intractable to first compute $\grad \gstar_{\mu}(A \wvec)$ and then pre-multiply it with $A^{\top}$ ($A$ has exponentially many rows).  Similar tractability issues appear in learning with structured outputs as in \mcn.  Below we present a dynamic programming based algorithm, which costs  $O(n^2)$ time and space complexity to calculate $A^{\top} \grad \gstar_{\mu}(A \wvec)$ for
\[
d_2(\val) = \sum_i \alpha_i \ln \alpha_i.
\]

In this case, the optimization problem in \eqref{eq:grad_gstar_mu} is
\[
\min_{\val \in Q_2} \mu \sum_{\yvec'} \alpha_{\yvec'} \ln \alpha_{\yvec'} - n \sum_{\yvec'} \Delta(\yvec', \yvec) \alpha_{\yvec'} - \sum_{\yvec'} u_{\yvec'} \alpha_{\yvec'}.
\]
Noting that the $\yvec'$-th row of $A$ is $\varphi^{\top}_{\yvec'} := \sum_i (y'_i - y_i) \xvec_i^{\top}$, we get $u_{\yvec'} = \varphi^{\top}_{\yvec'} \wvec = \sum_i (y'_i - y_i) \xvec_i^{\top} \wvec$.  Following the standard procedures (\eg\ \citep[][Lemma 8]{ZhaSahVis10b}), the optimal solution can be written as
\begin{align*}
\alpha^*_{\yvec'} := \frac{1}{n Z} \exp \rbr{\frac{1}{\mu} \sum_i y'_i \xvec_i^{\top} \wvec + \frac{n}{\mu} \Delta(\yvec', \yvec)}, \\
\where Z := \sum_{\yvec'} \exp \rbr{\frac{1}{\mu} \sum_i y'_i \xvec_i^{\top} \wvec + \frac{n}{\mu} \Delta(\yvec', \yvec)}.
\end{align*}
So $\alpha^*_{\yvec'}$ can be interpreted as a distribution over $\yvec'$ (normalized to $\frac{1}{n}$ rather than 1).  Then
\begin{align*}
\frac{\partial}{\partial \wvec} \gstar_{\mu}(A\wvec) &= \sum_{\yvec'} \alpha^*_{\yvec'} \varphi_{\yvec'} = \sum_{\yvec'} \alpha^*_{\yvec'} \sum_i (y'_i - y_i) \xvec_i \\
&= -2 \sum_i y_i \xvec_i \sum_{\yvec' \sim -y_i} \alpha^*_{\yvec'} \\
&= -2 \sum_i p(y'_i = -y_i) y_i \xvec_i,
\end{align*}
where $\yvec' \sim -y_i$ means summing up all $\yvec'$ whose $i$-th element $y'_i$ equals $-y_i$.  So $\sum_{\yvec' \sim -y_i} \alpha^*_{\yvec'}$ is exactly the marginal probability $p(y'_i = -y_i)$ under the joint distribution $\alpha^*_{\yvec'}$.  Now we show how to compute the marginal distributions efficiently.

\begin{figure}[t]
  \centering
    \begin{tikzpicture}[domain=0:5,samples=100,scale=0.9]

      \draw[color=blue,thick] (0, 0) -- (7, 7);
      \draw[color=blue,thick] (1, 0) -- (7, 6);
      \draw[color=blue,thick] (2, 0) -- (7, 5);
      \draw[color=blue,thick] (3, 0) -- (7, 4);
      \draw[color=blue,thick] (4, 0) -- (7, 3);
      \draw[color=blue,thick] (5, 0) -- (7, 2);
      \draw[color=blue,thick] (6, 0) -- (7, 1);
      \draw[color=blue,thick] (0, 0) -- (7, 0);
      \draw[color=blue,thick] (1, 1) -- (7, 1);
      \draw[color=blue,thick] (2, 2) -- (7, 2);
      \draw[color=blue,thick] (3, 3) -- (7, 3);
      \draw[color=blue,thick] (4, 4) -- (7, 4);
      \draw[color=blue,thick] (5, 5) -- (7, 5);
      \draw[color=blue,thick] (6, 6) -- (7, 6);

      \node[fill,draw,circle, inner sep=2pt,color=black,thick] at (0, 0) {} ;
      \node[fill,draw,circle, inner sep=2pt,color=black,thick] at (1, 0) {} ;
      \node[fill,draw,circle, inner sep=2pt,color=black,thick] at (2, 0) {} ;
      \node[fill,draw,circle, inner sep=2pt,color=black,thick] at (3, 0) {} ;
      \node[fill,draw,circle, inner sep=2pt,color=black,thick] at (4, 0) {} ;
      \node[fill,draw,circle, inner sep=2pt,color=black,thick] at (5, 0) {} ;
      \node[fill,draw,circle, inner sep=2pt,color=black,thick] at (6, 0) {} ;
      \node[fill,draw,circle, inner sep=2pt,color=black,thick] at (7, 0) {} ;
      \node[fill,draw,circle, inner sep=2pt,color=black,thick] at (1, 1) {} ;
      \node[fill,draw,circle, inner sep=2pt,color=black,thick] at (2, 1) {} ;
      \node[fill,draw,circle, inner sep=2pt,color=black,thick] at (3, 1) {} ;
      \node[fill,draw,circle, inner sep=2pt,color=black,thick] at (4, 1) {} ;
      \node[fill,draw,circle, inner sep=2pt,color=black,thick] at (5, 1) {} ;
      \node[fill,draw,circle, inner sep=2pt,color=black,thick] at (6, 1) {} ;
      \node[fill,draw,circle, inner sep=2pt,color=black,thick] at (7, 1) {} ;
      \node[fill,draw,circle, inner sep=2pt,color=black,thick] at (2, 2) {} ;
      \node[fill,draw,circle, inner sep=2pt,color=black,thick] at (3, 2) {} ;
      \node[fill,draw,circle, inner sep=2pt,color=black,thick] at (4, 2) {} ;
      \node[fill,draw,circle, inner sep=2pt,color=black,thick] at (5, 2) {} ;
      \node[fill,draw,circle, inner sep=2pt,color=black,thick] at (6, 2) {} ;
      \node[fill,draw,circle, inner sep=2pt,color=black,thick] at (7, 2) {} ;
      \node[fill,draw,circle, inner sep=2pt,color=black,thick] at (3, 3) {} ;
      \node[fill,draw,circle, inner sep=2pt,color=black,thick] at (4, 3) {} ;
      \node[fill,draw,circle, inner sep=2pt,color=black,thick] at (5, 3) {} ;
      \node[fill,draw,circle, inner sep=2pt,color=black,thick] at (6, 3) {} ;
      \node[fill,draw,circle, inner sep=2pt,color=black,thick] at (7, 3) {} ;
      \node[fill,draw,circle, inner sep=2pt,color=black,thick] at (4, 4) {} ;
      \node[fill,draw,circle, inner sep=2pt,color=black,thick] at (5, 4) {} ;
      \node[fill,draw,circle, inner sep=2pt,color=black,thick] at (6, 4) {} ;
      \node[fill,draw,circle, inner sep=2pt,color=black,thick] at (7, 4) {} ;
      \node[fill,draw,circle, inner sep=2pt,color=black,thick] at (5, 5) {} ;
      \node[fill,draw,circle, inner sep=2pt,color=black,thick] at (6, 5) {} ;
      \node[fill,draw,circle, inner sep=2pt,color=black,thick] at (7, 5) {} ;
      \node[fill,draw,circle, inner sep=2pt,color=black,thick] at (6, 6) {} ;
      \node[fill,draw,circle, inner sep=2pt,color=black,thick] at (7, 6) {} ;
      \node[fill,draw,circle, inner sep=2pt,color=black,thick] at (7, 7) {} ;

      \draw[->, thick] (0, 1.1) -- (0, 3.2);
      \node[] at (1, 3.5) {\# false negative};
      \node[] at (0.1, 0.6) {$(0, 0)$};

      \node[] at (0.5, -1) {$y'_1$};
      \node[] at (2, -1) {$\ldots$};
      \node[] at (4.5, -1) {$y'_{k+1}$};
      \node[] at (3.5, -1) {$y'_{k}$};
      \node[] at (6.5, -1) {$y'_{n_+}$};
      \node[] at (5.5, -1) {$\ldots$};

      \node[] at (1, -0.4) {$1$};
      \node[] at (2, -0.4) {$\ldots$};
      \node[] at (3, -0.4) {$k-1$};
      \node[] at (4, -0.4) {$k$};
      \node[] at (5, -0.4) {$k+1$};
      \node[] at (6, -0.4) {$\ldots$};
      \node[] at (7, -0.4) {$n_+$};

      \node[right] at (7.2, 0) {$\text{fn} = 0$};
      \node[right] at (7.2, 1) {$\text{fn} = 1$};
      \node[right] at (7.2, 2) {$\text{fn} = 2$};
      \node[right] at (7.2, 4) {$\vdots$};
      \node[right] at (7.2, 6) {$\text{fn} = n_+ \! - \! 1$};
      \node[right] at (7.2, 7) {$\text{fn} = n_+$};

      \draw[->,color=blue,thick] (1, 5) -- (2, 6);
      \draw[->,color=blue,thick] (1, 5) -- (2, 5);
      \node[] at (1.5, 4.8) {$c_k$};
      \node[] at (1.3, 5.8) {$\frac{1}{c_k}$};
      \node[right] at (2, 6.2) {$y'_k = -1$};
      \node[right] at (2, 5.1) {$y'_k = 1$};
      \node[] at (2, 4.5) {$k$};
      \node[] at (1, 4.5) {$k \! - \! 1$};

    \end{tikzpicture}
  \caption{Path weight interpretation of the normalizer $Z$ and the marginal distributions $p(y'_k)$.}
    \label{fig:f1score_marginal}
\end{figure}
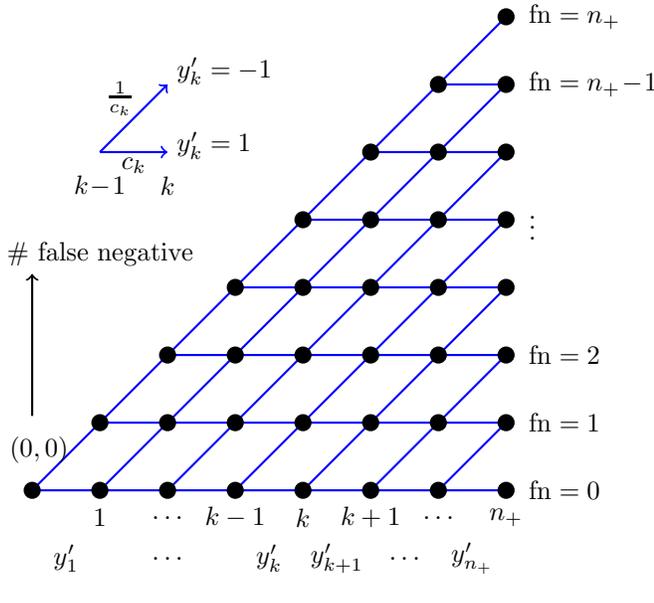

Unlike the inference in graphical models, there is no clique factorization in $\yvec'$.  Fortunately, $\cbr{y'_i}$ are coupled only through the loss $\Delta(\yvec', \yvec)$ which in turn depends only on two ``sufficient statistics" of $\yvec'$: false negative $b$ and false positive $c$.  For simplicity, we sometimes also write $\Delta(\yvec', \yvec)$ as $\Delta(b, c)$.  Without loss of generality, assume the positive training examples are the first $n_+$ ones ($y_1 = \ldots = y_{n_+} = 1$), and the negative examples are the last $n - n_+$ ones ($y_{n_+ + 1} = \ldots = y_n = -1$).  Denote $\yvec'_+ := (y'_1, \ldots, y'_{n_+})^{\top}$ and $\yvec'_- := (y'_{n_+ + 1}, \ldots, y'_{n})^{\top}$.  $\yvec'_+ \sim b$ represents that $\yvec'_+$ commits $b$ false negatives, \ie\ $\sum_{i=1}^{n_+} \delta(y'_i = -1) = b$.  $\yvec'_- \sim c$ represents that $\yvec'_-$ commits $c$ false negatives, \ie\ $\sum_{i=n_+ + 1}^n \delta(y'_i = 1) = c$.  For simplicity, denote
\[
c_k := \exp \rbr{\frac{1}{\mu} \xvec_k^{\top} \wvec}.
\]

Let us first compute the normalizer $Z$ as follows.
{
\allowdisplaybreaks
\begin{align*}
  Z &= \sum_{b=0}^{n_+} \sum_{c = 0}^{n_-} \sum_{\yvec'_+ \sim b} \sum_{\yvec'_- \sim c} \rbr{\frac{1}{\mu} \sum_{i=1}^n y'_i \xvec_i^{\top} \wvec + \frac{n}{\mu} \Delta(\yvec', \yvec)} \\
  &= \sum_{b=0}^{n_+} \sum_{c = 0}^{n_-} \exp\rbr{\frac{n}{\mu}\Delta(b, c)} \underbrace{\sum_{\yvec'_+ \sim b} \exp\rbr{\frac{1}{\mu} \sum_{i=1}^{n_+} y'_i \xvec_i^{\top} \wvec}}_{=: V_+(b)} \\
  & \qquad \qquad \cdot \underbrace{\sum_{\yvec'_- \sim c} \exp\rbr{\frac{1}{\mu} \sum_{i=n_+ + 1}^{n} y'_i \xvec_i^{\top} \wvec}}_{=: V_-(c)}
\end{align*}
}

Therefore, once we have $V_+(b)$ for all $b \in [n_+]$ and $V_-(c)$ for all $c \in [n_-]$, then $Z$ can be computed in $n_+ n_-$ steps.  For simplicity we only show to compute $V_+(b)$, and $V_-(c)$ can be computed in exactly the same way.

For each fixed $b$, $V_+(b)$ can be equivalently reformulated by Figure \ref{fig:f1score_marginal}.  Each node $(k,f)$ represents that $\yvec'$ has committed $f$ false negatives in the first $k$ examples: $\sum_{i=1}^k \delta(y'_i = -1) = f$.  Each node is connected to two nodes on its right: $(k+1, f+1)$ and $(k+1, f)$.  The former corresponds to $y'_{k+1} = -1$, \ie\ one more false negative is committed.  So we attach to the diagonal edge a weight $\exp\rbr{-\frac{1}{\mu} \xvec_{k+1}^{\top} \wvec} = c_{k+1}^{-1}$.  The latter means $y'_{k+1} = 1$ and the false negative is not incremented.  So the horizontal edge is attached with weight $c_{k+1}$.  A \emph{path} from $(k, f)$ to $(k', f')$ ($k \le k'$ and $f \le f'$) is a sequence of nodes moving from $(k, f)$ to $(k', f')$ along the edges of the graph: $(k, f_0) = (k, f) \to (k + 1, f_1) \to \ldots \to (k + s, f_s) = (k', f')$ where $s = k' - k$ and $f_{i+1} - f_i = 0$ or $1$.  The weight of a path is defined as the product of the weight of all edges on that path.

Clearly $V_+(b)$ is equal to the total weight of all paths from $(0, 0)$ to $(n_+, b)$.  To compute it, define $\alpha_k(v)$ as the total weight of all paths from $(0, 0)$ to $(k, v)$.  Then it is not hard to see the following recursion for all $k = 1, \ldots, n_+$ and $v = 0, 1, \ldots, k$:
\begin{align}
  \alpha_{k}(v) = \frac{1}{c_k} \alpha_{k-1}(v-1) + c_k \alpha_{k-1}(v),
\end{align}
where $\alpha_k(-1) := 0$ and $\alpha_k(k+1) := 0$ for all $k$.  Algorithm \ref{algo:forward_f1score} computes $V_+(b) = \alpha_{n_+}(b)$ for all $b \in [n_+]$.  Clearly the computational cost is $O(n^2_+)$.  If we only need $V_+(b)$ then the space complexity is $O(n_+)$.  But later we will need all $\alpha_k(v)$ so we keep $O(n_+^2)$ memory.  Taking into account the similar cost for $V_-(c)$, the total spatial and computational cost is both $O(n^2)$.

\begin{algorithm}[t]
\begin{algorithmic}[1]
\caption{\label{algo:forward_f1score}Forward propagation to compute all $\cbr{V_+(b): 0 \le b \le n_+}$.}

 \STATE Initialize $\alpha_0(0) = 1$.

 \FOR{$k = 1, \ldots, n_+$}

   \FOR{$v = 0, 1, \ldots, k$}

      \STATE $\alpha_k(v) = \frac{1}{c_k} \alpha_{k-1}(v-1) + c_k \alpha_{k-1}(v)$.
   \ENDFOR

 \ENDFOR

 \STATE \textbf{Return:} $V_+(b) = \alpha_{n_+}(b)$ for all $0 \le b \le n_+$.
\end{algorithmic}
\end{algorithm}

To compute the marginal distributions $p(y'_k)$ we need a backward propagation.  For example let us consider $p(y'_k = 1)$ for $k \in [n_+]$, and the case of $k > n_+$ (negative examples) can be dealt with similarly.  By the definition of $\alpha_{\yvec'}$, it suffices to compute
{
\allowdisplaybreaks
\begin{align*}
  Z_k &:= \sum_{\yvec': y'_k = 1} \exp \rbr{\frac{1}{\mu} \sum_i y'_i \xvec_i^{\top} \wvec + \frac{n}{\mu} \Delta(\yvec', \yvec)} \\
  &= \sum_{b=0}^{n_+} \sum_{c=0}^{n_-} \! \! \exp \rbr{\frac{n}{\mu}\Delta(b,c)} \! \! \! \sum_{\yvec'_+ \sim b, y'_k = 1} \! \! \! \! \! \! \! \! \exp\rbr{\frac{1}{\mu} \sum_{i=1}^{n_+} y'_i \xvec_i^{\top} \wvec} \\
  & \qquad \qquad \cdot \sum_{\yvec'_- \sim c} \exp\rbr{\frac{1}{\mu} \sum_{i=n_+ + 1}^{n} y'_i \xvec_i^{\top} \wvec} \\
  &= \sum_{b=0}^{n_+} \underbrace{\sum_{\yvec'_+ \sim b, y'_k = 1} \! \! \! \! \exp\rbr{\frac{1}{\mu} \sum_{i=1}^{n_+} y'_i \xvec_i^{\top} \wvec}}_{=: T_+^k(b)} \\
  &\qquad \qquad \qquad \cdot \underbrace{\sum_{c=0}^{n_-} \exp \rbr{\frac{n}{\mu}\Delta(b,c)} V_-(c)}_{=: \eta_-(b)}.
\end{align*}
}

\begin{algorithm}[t]
\begin{algorithmic}[1]
\caption{\label{algo:backward_f1score}Backward propagation to compute $p(y'_k)$ for all $k \in [n_+]$.}

 \STATE Initialize $\xi_{n_+}(v) = \eta_-(v)$ for all $v = 0, 1, \ldots, n_+$.

 \STATE $Z_{n_+} = c_{n_+} \sum_{v=0}^{n_+} \alpha_{n_+ - 1}(v) \xi_{n_+}(v)$.

 \FOR{$k = n_+ - 1, \ldots, 1$}

   \FOR{$v = 0, 1, \ldots, k$}

      \STATE $\xi_k(v) = c_{k+1} \xi_{k+1}(v) + \frac{1}{c_{k+1}} \xi_{k+1}(v+1)$.

   \ENDFOR

   \STATE $Z_{k} = c_{k} \sum_{v=0}^{k-1} \alpha_{k-1}(v) \xi_k(v)$.

 \ENDFOR

 \STATE \textbf{Return:} $p(y'_k = 1) = \frac{Z_i}{n Z}$ for all $k \in [n_+]$.
\end{algorithmic}
\end{algorithm}

Since $V_-(c)$ available from forward propagation, $\cbr{\eta_-(b)}$ can be computed in $O(n^2)$ time.  So the only problem left is to compute $T_+^k(b)$.  $T_+^k(b)$ has a very intuitive interpretation in Figure \ref{fig:f1score_marginal}: the total weight of all paths from $(0,0)$ to $(n_+, b)$ with the $k$-th step (\ie\ between the horizontal coordinate $k-1$ and $k$) going horizontal (not diagonal).  Let $\beta^b_k(v)$ denote the total weight of all paths from $(k, v)$ to $(n_+, b)$.  Then
\[
T_+^k(b) = \sum_{v = 0}^{k-1} \alpha_{k-1}(v) c_k \beta_{k}^b(v).
\]
So
\begin{align*}
  Z_k &=  \sum_{b=0}^{n_+} T_+^k(b) \eta_-(b) = \sum_{b=0}^{n_+} \sum_{v=0}^{k-1} \alpha_{k-1}(v) c_k \beta_k^b (v) \eta_-(b) \\
  &= c_k \sum_{v=0}^{k-1} \alpha_{k-1}(v) \underbrace{\sum_{b=0}^{n_+} \beta_k^b(v) \eta_-(b)}_{=:\xi_k(v)}.
\end{align*}
Therefore as long as $\xi_k(v)$ can be updated efficiently, so is $Z_k$.  Fortunately, $\beta_k^b(v)$ has a recursive form
\[
\beta_{k}^b(v) = c_{k+1} \beta_{k+1}^b(v) + \frac{1}{c_{k+1}} \beta_{k+1}^b(v+1),
\]
for all $0 \le k \le n_+ - 1$, $0 \le v \le k$ and $0 \le b \le n_+$.  This implies for all $0 \le k \le n_+ - 1$ and $0 \le v \le k$
\begin{align*}
  \xi_k(v) &= \sum_{b=0}^{n_+} \beta_k^b(v) \eta_-(b) \\
  &= \sum_{b=0}^{n_+} [c_{k+1} \beta_{k+1}^b(v) + \frac{1}{c_{k+1}} \beta_{k+1}^b(v+1)] \eta_-(b) \\
  &= c_{k+1} \xi_{k+1}(v) + \frac{1}{c_{k+1}} \xi_{k+1}(v+1).
\end{align*}
The final algorithm is summarized in Algorithm \ref{algo:backward_f1score}.  Its time and space cost is both $O(n^2)$.  The initialization of $\xi_k$ therein is based on initializing $\beta_{n_+}^b(v) = \delta(v=b)$ for all $b, v = 0, 1, \ldots, n_+$.

The gradient of $\gstar(A \wvec)$ for \mcn s can also be computed efficiently by dynamic programming, but the key structure it exploits is the clique decomposition in graphical models.  Details can be found in \citep{ZhaSahVis10b}.

\subsection{Minimizing the model efficiently}

In this section, we show that the model $\psi_k$ can be minimized efficiently.

\subsubsection{Diagonal quadratic constrained to a box and a hyperplane}
\label{sec:proj_simplex}

When AGM-EF is applied to solve the dual optimization problem $D(\alphab)$ for SVM in \eqref{eq:adjoint_linear_SVM}, each iteration needs to solve the model subject to $Q_2$.  This can be reduced to a box constrained diagonal QP with a single linear equality constraint:

\begin{align}
\label{eq:simple_qp_projection}
  \min \frac{1}{2} \sum_{i=1}^n &d_i^2 (\alpha_i - m_i)^2 \\
  s.t. \qquad l_i \le &\alpha_i \le u_i \quad \forall i \in [n];  \nonumber \\
  \sum_{i=1}^n & \sigma_i \alpha_i = z. \nonumber
\end{align}

Similarly, when solving in the primal with smoothing in \eqref{eq:reg_primal}, the gradient query also involves an optimization in this form.  In this section, we focus on the following the QP in \eqref{eq:simple_qp_projection}.  The algorithm we describe below stems from \cite{ParKov90} and finds the exact optimal solution in $O(n)$ time, faster than the $O(n \log n)$ complexity in \cite{DucShaSigCha08}.  \cite{DucShaSigCha08} also proposes a median finding based algorithm which has linear time complexity \emph{in expectation}.  In contrast, our method is deterministic and linear.  \citet{LiuYe09} tackle this problem too, but they use the mean bisection and apply Newton's method to find a solution up to an inexact prespecified accuracy $\delta$.  The resulting total cost is $O(n \log \frac{1}{\delta})$.

Without loss of generality, we assume $l_i < u_i$ and $d_i \neq 0$ for all $i$.
Also assume $\sigma_i \neq 0$ because otherwise $\alpha_i$ can be solved independently.
  To make the feasible region nonempty, we also assume
\begin{align*}
z &\ge \sum_i \sigma_i (\delta(\sigma_i > 0) l_i + \delta(\sigma_i < 0) u_i) \\
\text{and} \quad z &\le \sum_i \sigma_i (\delta(\sigma_i > 0) u_i + \delta(\sigma_i < 0) l_i).
\end{align*}

With a simple change of variable $\beta_i = \sigma_i (\alpha_i - m_i)$, the problem \eqref{eq:simple_qp_projection} is simplified as
  \begin{align*}
    \min \qquad \frac{1}{2} \sum_{i=1}^n & \dbar^{2}_i \beta_i^2 \\
    s.t. \qquad l'_i \le & \beta_i \le u'_i \quad \forall i \in [n];  \\
    \sum_{i=1}^n & \beta_i = z',
  \end{align*}
where $\dbar^2_i = \frac{d_i^2}{\sigma_i^2}$, $l'_i = \left \{ \begin{array}{ll}
   \sigma_i (l_i - m_i) & \text{if } \sigma_i > 0  \\
   \sigma_i (u_i - m_i) & \text{if } \sigma_i < 0  \\
\end{array} \right.$, $u'_i = \left\{ \begin{array}{ll}
   \sigma_i (u_i - m_i) & \text{if } \sigma_i > 0  \\
   \sigma_i (l_i - m_i) & \text{if } \sigma_i < 0  \\
\end{array} \right.$, and $z' = z - \sum_i \sigma_i m_i$.
Write out its partial Lagrangian:
\[
\min_{\beta_i \in [l'_i, u'_i]} \max_{\lambda \in \RR} \sum_{i=1}^n  \frac{1}{2} \dbar^{2}_i \beta_i^2 - \lambda \rbr{\sum_{i=1}^n  \beta_i - z'}.
\]

{
\begin{figure}[t]
\vspace{2em}
\centering
    \includegraphics[width=0.4\textwidth]{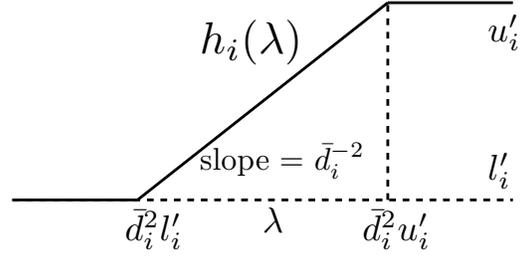}
\caption{$h_i(\lambda)$}
\label{fig:hi_lambda}
\end{figure}
}

Due to strong duality, we can swap the $\min$ and $\max$:
\begin{align}
&\max_{\lambda \in \RR} \min_{\beta_i \in [l'_i, u'_i]}  \sum_{i=1}^n \frac{1}{2} \dbar^{2}_i \beta_i^2 - \lambda \rbr{\sum_{i=1}^n  \beta_i - z'} \nonumber \\
&= \max_{\lambda \in \RR} \sum_i \min_{\beta_i \in [l'_i, u'_i]} \rbr{ \frac{1}{2} \dbar_i^2 \beta_i^2 - \lambda \beta_i} + \lambda z' \nonumber \\
\label{eq:minimax_dual}
& \Leftrightarrow \min_{\lambda \in \RR} \sum_i \underbrace{\max_{\beta_i \in [l'_i, u'_i]} \rbr{ -\frac{1}{2} \dbar_i^2 \beta_i^2 + \lambda \beta_i}}_{:=H_i(\lambda)} - \lambda z'
\end{align}
Clearly, the optimal $\beta_i^*(\lambda)$ in the definition of $H_i(\lambda)$ can be solved analytically, and this gives
\begin{align*}
  H_i(\lambda) &= \begin{cases}
    -\frac{1}{2} \dbar^{2}_i u'^{2}_i + \lambda u'_i & \text{if } \lambda > u'_i \dbar^{2}_i \\
    -\frac{1}{2} \dbar^{2}_i l'^{2}_i + \lambda l'_i & \text{if } \lambda < l'_i \dbar^{2}_i \\
    \frac{\lambda^2}{2 \dbar^{2}_i} & \text{if } \lambda \in [l'_i \dbar^{2}_i, u'_i \dbar^{2}_i]
  \end{cases} \\
  \text{with} \quad
  \beta_i^*(\lambda) &= \begin{cases}
    u'_i & \text{if } \lambda > u'_i \dbar^{2}_i \\
    l'_i & \text{if } \lambda < l'_i \dbar^{2}_i \\
    \frac{\lambda}{2 \dbar^{2}_i} & \text{if } \lambda \in [l'_i \dbar^{2}_i, u'_i \dbar^{2}_i]
  \end{cases}.
\end{align*}

{
\begin{figure*}[t]
\begin{centering}
\subfloat[$\min S < \dbar^{2}_i l'_i < \dbar^{2}_i u'_i < \max S$]{
    \includegraphics[width=5.4cm]{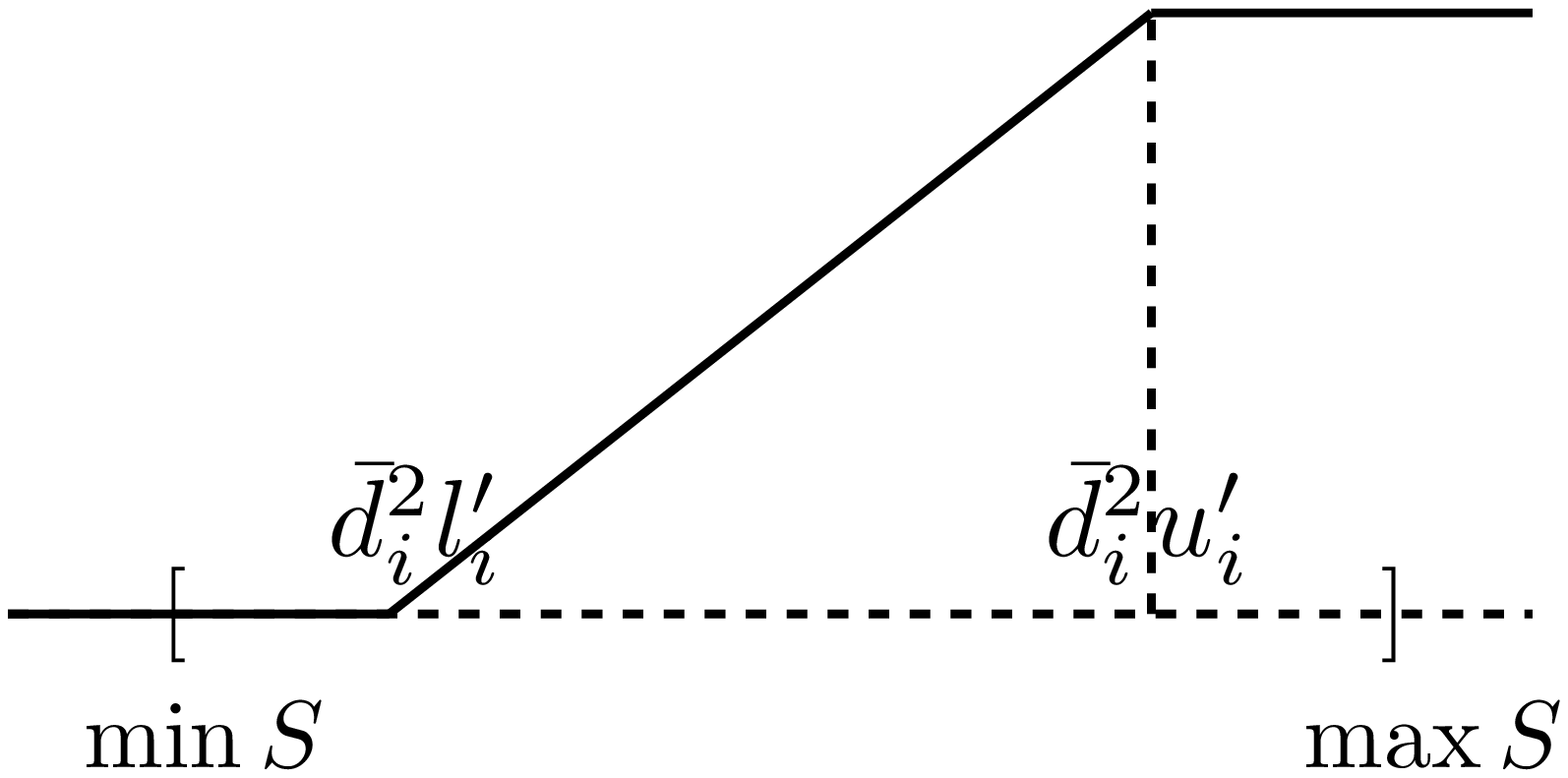}} ~~
\subfloat[$\min S < \dbar^{2}_i l'_i < \max S \le \dbar^{2}_i u'_i$]{
    \includegraphics[width=5.4cm]{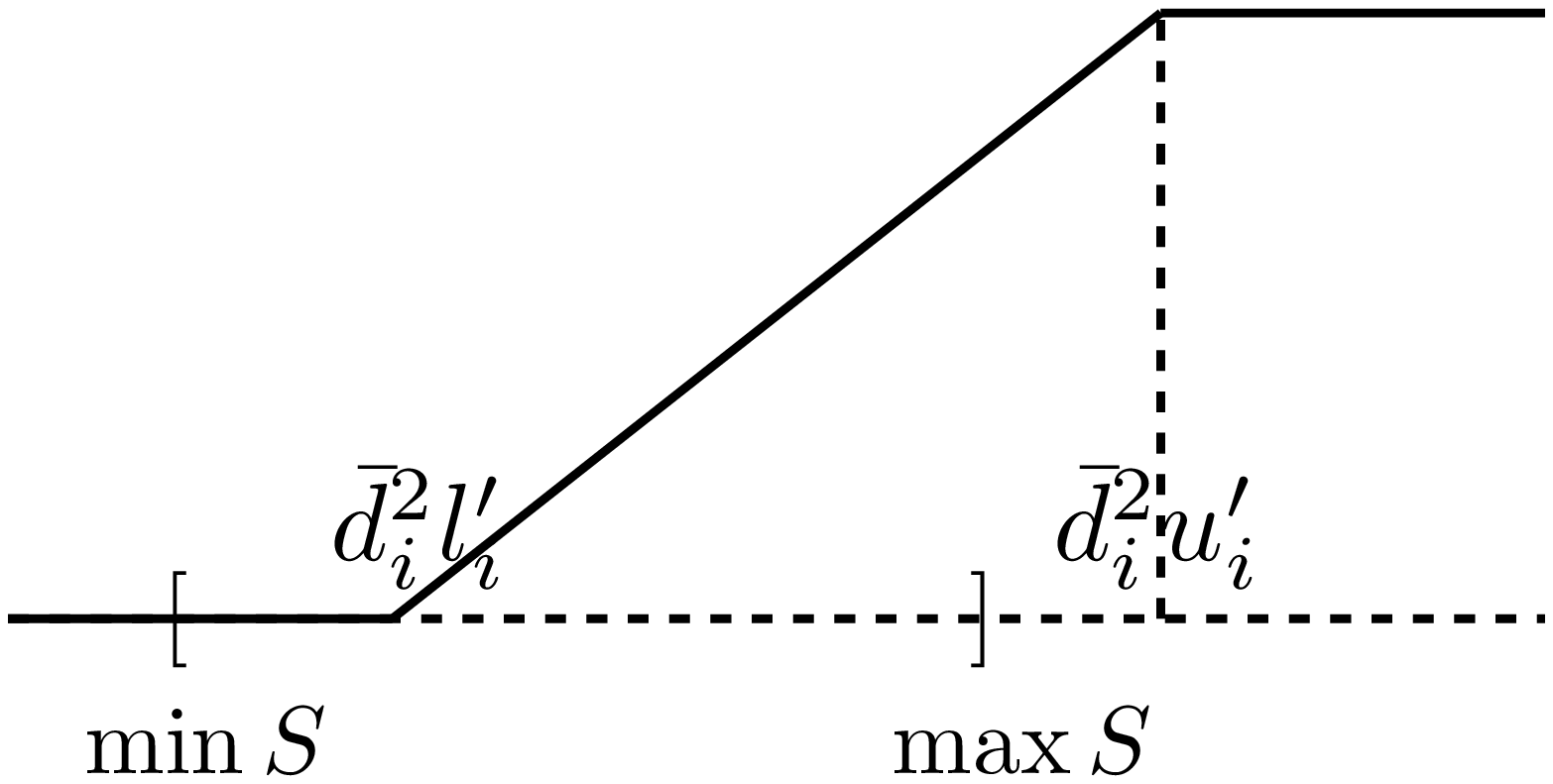}} ~~
\subfloat[$\dbar^{2}_i l'_i \le \min S < \dbar^{2}_i u'_i < \max S$]{
    \includegraphics[width=5.4cm]{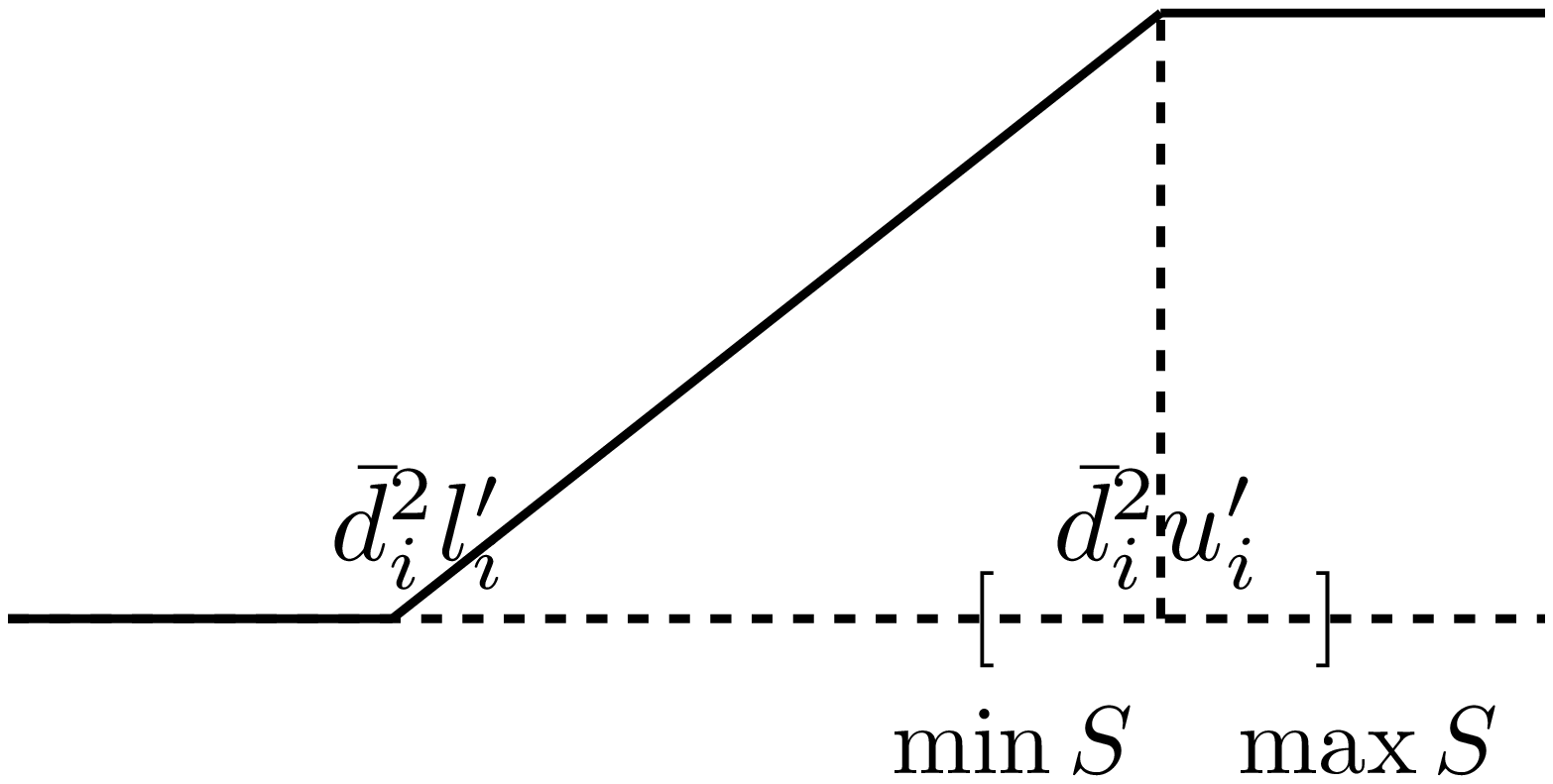}} ~~ \\
\subfloat[$\min S < \max S \le \dbar^{2}_i l'_i$]{
    \includegraphics[width=5.4cm]{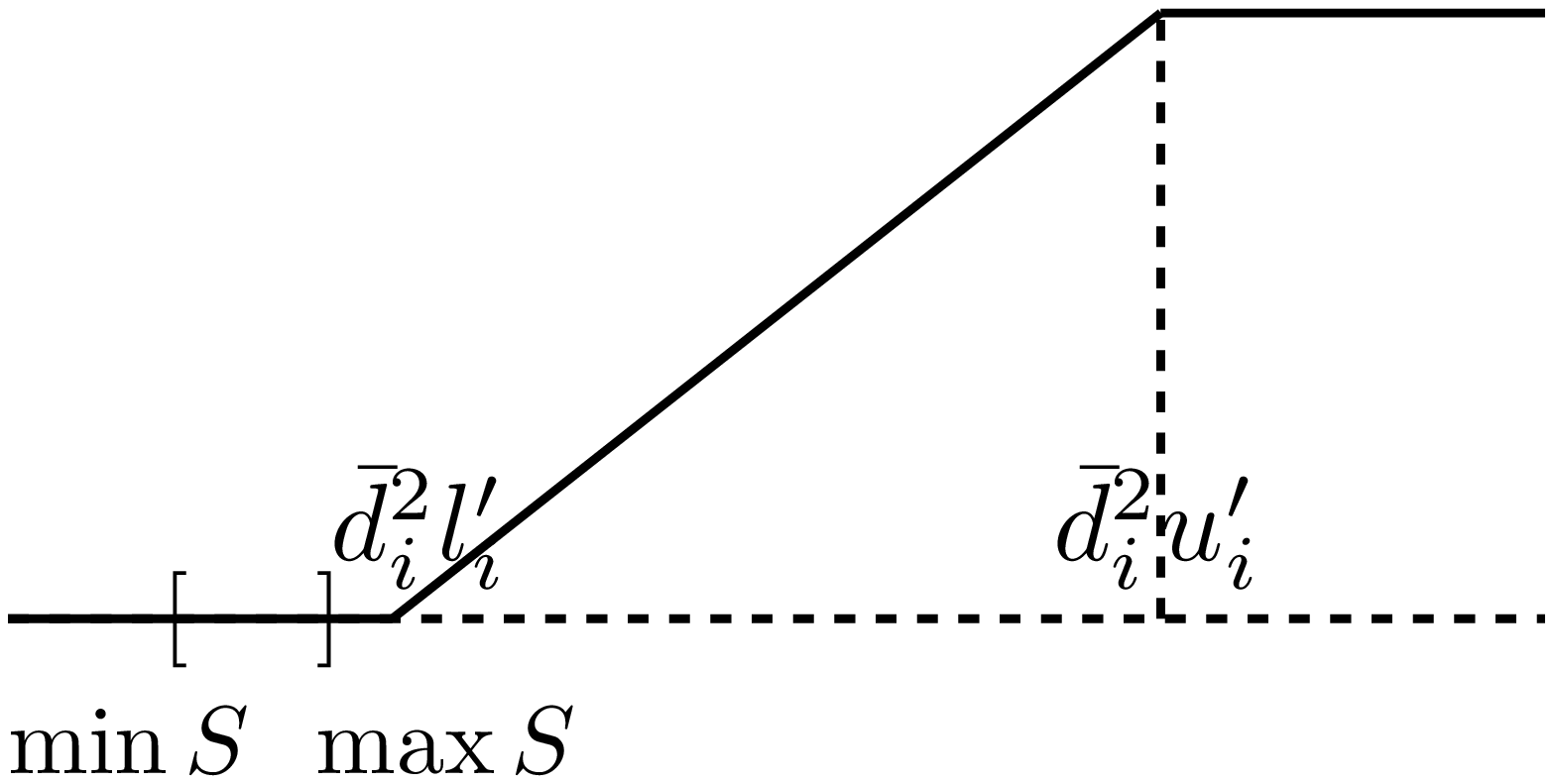}} ~~
\subfloat[$\dbar^{2}_i l'_i \le \min S < \max S \le \dbar^{2}_i u'_i $]{
    \includegraphics[width=5.4cm]{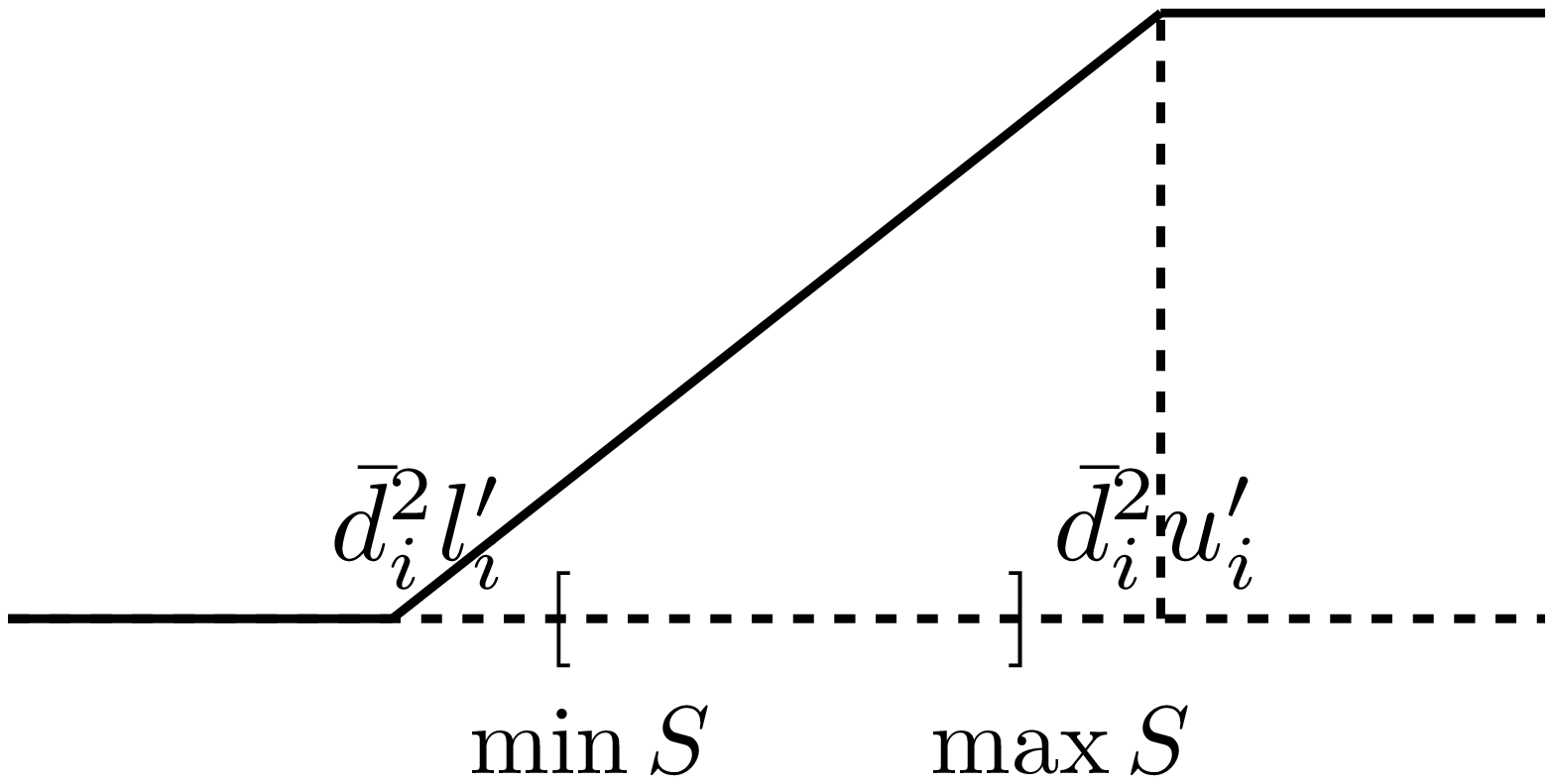}} ~~
\subfloat[$\dbar^{2}_i u'_i \le \min S < \max S$]{
    \includegraphics[width=5.4cm]{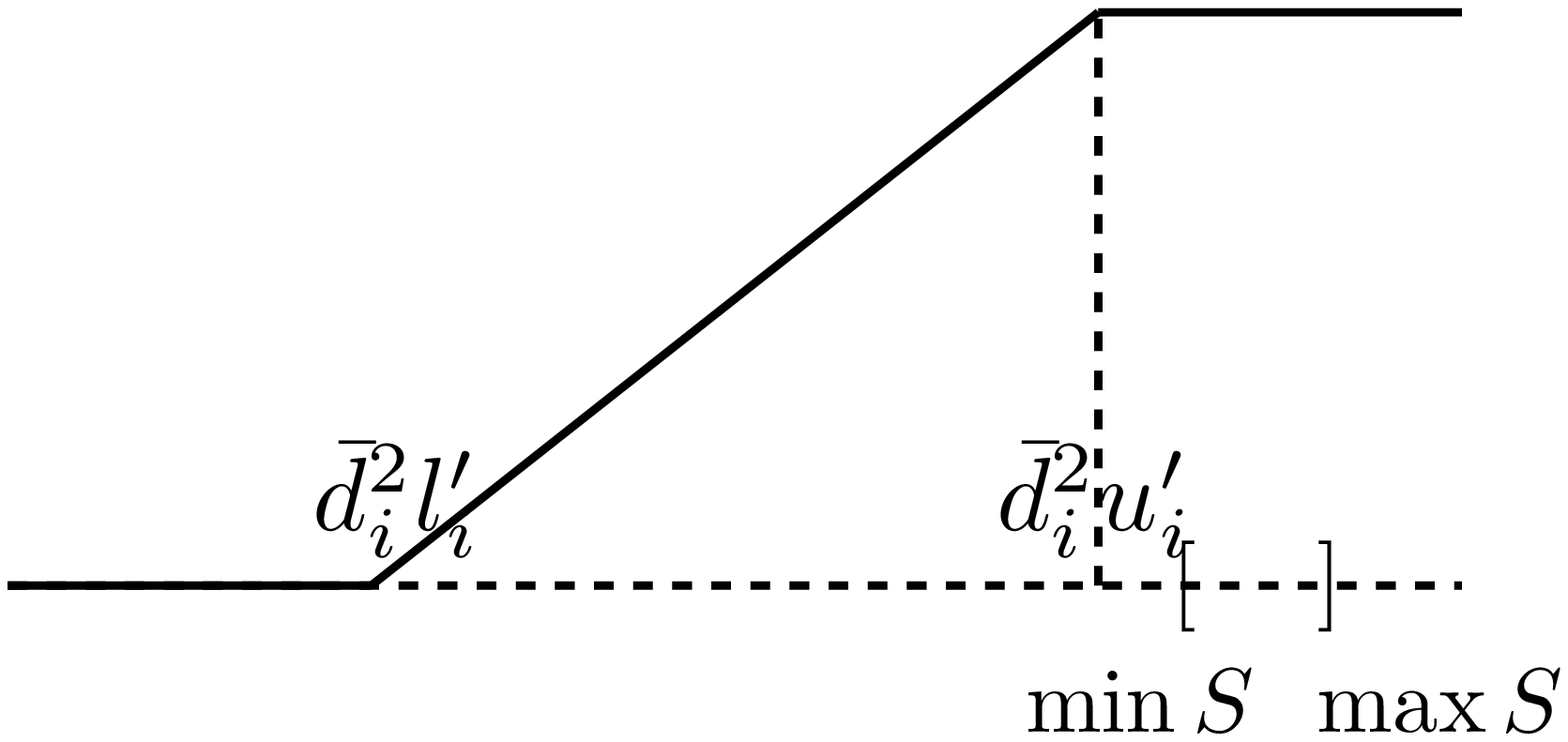}} ~~ \\
\caption{All possible locations of $\min S$ and $\max S$ on $h_i(\lambda)$.}
\label{fig:hi_interval}
\end{centering}
\end{figure*}
}

To minimize the objective in \eqref{eq:minimax_dual} as a function of $\lambda$, we notice that $H_i(\lambda)$ is convex and differentiable. Thus, the minimizer of \eqref{eq:minimax_dual} is exactly the root of its gradient.  Note the gradient of $H_i$:
\begin{align*}
  h_i(\lambda) = \begin{cases}
    u'_i & \text{if } \lambda > u'_i \dbar^{2}_i \\
    l'_i & \text{if } \lambda < l'_i \dbar^{2}_i  \\
    \frac{\lambda}{\dbar^{2}_i} & \text{if } \lambda \in [l'_i \dbar^{2}_i, u'_i \dbar^{2}_i]
  \end{cases}.
\end{align*}
See Figure \ref{fig:hi_lambda} for the plot of $h_i(\lambda)$.  So we need to find the root of the gradient of \eqref{eq:minimax_dual}:
\begin{align}
\label{eq:lambda_find_root}
  f(\lambda) := \sum_{i=1}^n h_i(\lambda) - z' = 0.
\end{align}

Note that $h_i(\lambda)$ is a monotonically increasing function of $\lambda$, so the
whole $f(\lambda)$ is monotonically increasing in $\lambda$.  Since $f(\infty) \ge 0$
by $z' \le \sum_i u'_i$ and $f(-\infty) \le 0$ by $z' \ge \sum_i l'_i$, the root must
exist.  Considering that $f$ has at most $2n$ kinks (nonsmooth points) and is linear
between two adjacent kinks, the simplest idea is to sort $\cbr{\dbar^{2}_i l'_i,
\dbar^{2}_i u'_i : i \in [n]}$ into $s^{(1)} \le \ldots \le s^{(2n)}$.  If $f(s^{(i)})$
and $f(s^{(i+1)})$ have different signs, then the root must lie between them and can be
easily found because $f$ is linear in $[s^{(i)}, s^{(i+1)}]$.  This algorithm takes at
least $O(n \log n)$ time because of sorting.


\begin{algorithm}[t]
\begin{algorithmic}[1]
   \caption{\label{algo:linear_simple_qp}$O(n)$ algorithm to find the root of $f(\lambda)$. Do not allow duplicate points in $S$.}

   \STATE Initialize kink set $S \leftarrow \cbr{\dbar_i^2 l'_i, \dbar_i^2
      u'_i: i \in [n]}$.  Remove duplicates if any.

   \WHILE{$\abr{S} > 2$}

       \STATE Find the median of $S$: $m \leftarrow \MED(S)$ \;

       \IF{$f(m) = 0$}
            \STATE \textbf{Return} $m$.
       \ELSIF{$f(m) > 0$}
            \STATE $S \leftarrow \cbr{x \in S : x \le m}$.
       \ELSE
            \STATE $S \leftarrow \cbr{x \in S : x \ge m}$.
       \ENDIF

   \ENDWHILE

   \STATE \textbf{Return} $\frac{l f(u) - u f(l)}{f(u) - f(l)}$ if $S = \cbr{l,u : f(l) \neq f(u)}$, or any value in $[l, u]$ if $S = \cbr{l < u : f(l) = f(u)}$.
\end{algorithmic}
\end{algorithm}

However, this cost can be reduced to $O(n)$ by making use of the fact that the
median of $n$ (unsorted) elements can be found in $O(n)$ time.  Notice that due to the
monotonicity of $f$, $f$ evaluated at the median of a set $S$ is exactly the median of function values, \ie, $f(\MED(S)) = \MED(\cbr{f(x):x \in S})$.  Algorithm \ref{algo:linear_simple_qp}
shows the binary search.  Let $\abr{S}$ denote the cardinality of $S$.  The while loop must terminate in order $\log_2 (2n)$ iterations because in each iteration the cardinality of set $S$ is reduced to at most $\frac{\abr{S}}{2} + 1$ (we will call it ``almost halves").  So if $f(m)$ can be evaluated in $O(\abr{S})$ time, then the time complexity of each iteration is linear in $|S|$, and the total complexity of Algorithm \ref{algo:linear_simple_qp} is $O(n)$.  Step 7 and 9 ensure that $\abr{S} = 2$ at step 12.

The evaluation of $f(m)$ potentially involves summing up $n$ terms as in
\eqref{eq:lambda_find_root}.  However by carefully aggregating the slope and offset, this can be reduced to $O(|S|)$ too.  In more detail, let us first consider all the possible locations of $\min S$ and $\max S$ on $h_i(\lambda)$ as illustrated in Figure \ref{fig:hi_interval}.  By halving the set $S$, the possible transfers of situation are shown in Figure \ref{fig:state_transfer}.  Once the set $S$ gets into the states $(d), (e), (f)$, its state will never change with the shrinking of $S$, and the contribution of $h_i(\lambda)$ to $f(\lambda)$ will be determined by: $l'_i$ for case $(d)$, $u'_i$ for case $(f)$ and $\frac{\lambda}{\dbar^{2}_i}$ for case $(e)$.  So we keep two buffers: $c_g \in \RR$ which aggregates the contribution by all the $h_i$ ending in state $(d)$ or $(f)$, and $s_g \in \RR$ which aggregates the slope $\frac{1}{\dbar^{2}_i}$ for all $h_i$ ending in state $(e)$.  In other words, to evaluate $f(m)$ we only need to visit those $h_i$ which are still in state $(a)$, $(b)$ and $(c)$ (called undetermined states).  But how many such $i$ can there be?  By Figure \ref{fig:hi_interval}, these $h_i$ all contribute at least one kink point in $S$ (state $(a)$ contributes two).  If $\cbr{\dbar_i^2 l'_i, \dbar_i^2 u'_i: i \in [n]}$ are distinct, then the points in $S$ has one-to-one correspondence to the kink points of $h_i$.  Therefore, the number of $h_i$ in undetermined states must be upper bounded by the size of $S$.  Since the size of $S$ almost halves in each iteration, so is number of $h_i$ in undetermined states.  As a result, the cost for computing $f(m)$ halves too.  Overall, running Algorithm \ref{algo:linear_simple_qp} to completion, the total time spent on evaluating $f(m)$ in step 4 is $O(n)$.

The analysis becomes a bit more complicated when $\cbr{\dbar_i^2 l'_i, \dbar_i^2 u'_i: i \in [n]}$ contains duplicate points.  In this case, one point in $S$ may correspond to kink points of \emph{multiple} $h_i$, and so the above argument can no longer be used to upper bound the number of $h_i$ in undetermined states.  The simplest patch is to add small perturbations to the duplicate points and make them different.  A more principled solution is given in Algorithm \ref{algo:linear_simple_qp_dup}.  The key idea is to allow duplicates in $S$, and replace $S \leftarrow \cbr{x \in S : x \le m}$ in step 7 of Algorithm \ref{algo:linear_simple_qp} by $S \leftarrow \cbr{x \in S : x < m}$ (and similarly step 9).  An additional level of if-then-else check is introduced so as not to miss out the solution.  Clearly, the size of $S$ still halves in Algorithm \ref{algo:linear_simple_qp_dup}.  More importantly, because we do allow the duplicates in $S$, so the size of $S$ is an upper bound of the number of $h_i$ which is in undetermined states.  Therefore, the cost for computing $f(m)$ and $f(y)$ halves through iterations, and the total time spent on evaluating $f(m)$ and $f(y)$ is $O(n)$.

Note that the duplication removal in Algorithm \ref{algo:linear_simple_qp_dup} actually cannot be done in $O(n)$ time, and is subject to numerical precision.  In our experiment, we used Algorithm \ref{algo:linear_simple_qp_dup} which does not remove duplicates.  The correctness is easy to prove, and in practice there is almost no duplicates and it works very well.

\begin{algorithm}[t]
\begin{algorithmic}[1]
   \caption{\label{algo:linear_simple_qp_dup}$O(n)$ algorithm to find the root of $f(\lambda)$. Allow duplicate kink points in $S$.}

   \STATE Initialize kink set $S \leftarrow \cbr{\dbar_i^2 l'_i, \dbar_i^2
      u'_i: i \in [n]}$.  Keep duplications and so $\abr{S} = 2n$.

   \WHILE{$\abr{S} > 2$}

       \STATE Find the median of $S$: $m \leftarrow \MED(S)$. \;

       \IF{$f(m) = 0$}
            \STATE \textbf{Return} $m$.

       \ELSIF{$f(m) > 0$}

            \STATE Find $y := \max \cbr{x \in S : x < m}$.

            // $\cbr{x \in S : x < m}$ must be nonempty.

            \IF{$f(y) > 0$}
                \STATE $S \leftarrow \cbr{x \in S : x < m}$.
            \ELSE
                \STATE $S \leftarrow \cbr{y, m}$.  // Root lies in $[y, m]$, so exit the while loop immediately.
            \ENDIF

       \ELSE

            \STATE Find $y := \min \cbr{x \in S : x > m}$.

            // $\cbr{x \in S : x > m}$ must be nonempty.

            \IF{$f(y) < 0$}

                \STATE $S \leftarrow \cbr{x \in S : x > m}$.

            \ELSE
                \STATE $S \leftarrow \cbr{m, y}$.  // Root lies in $[m, y]$, so exit the while loop immediately.
            \ENDIF
        \ENDIF
   \ENDWHILE

   \STATE \textbf{Return} $\frac{l f(u) - u f(l)}{f(u) - f(l)}$ if $S = \cbr{l,u : f(l) \neq f(u)}$, or any value in $[l, u]$ if $S = \cbr{l < u : f(l) = f(u)}$.
\end{algorithmic}
\end{algorithm}

\begin{figure}[t]
\begin{centering}
\includegraphics[width=5cm]{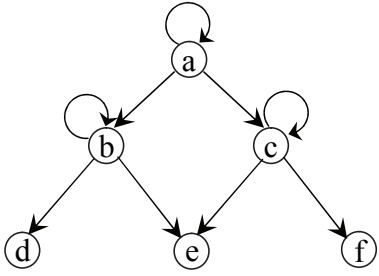}
\caption{All possible transitions of state.}
\label{fig:state_transfer}
\end{centering}
\end{figure}

\subsubsection{Elastic net}

For the first type of elastic net \eqref{eq:obj_elastic_net}, the composite optimization is easy thanks to the separability.  The second type which uses constraints is much more challenging, and we show in this section how to solve this constrained optimization in linear time.  Our approach is similar to the previous Section \ref{sec:proj_simplex}.

At each iteration of AGM-EF-$\infty$ or AGM-EF-1, we need to solve
\begin{align*}
  \min_{\wvec} \lambda \rbr{\gamma \nbr{\wvec}_1 + \frac{1}{2}\nbr{\wvec}_2^2} + \frac{L}{2} \nbr{\wvec - \gvec_i}^2.
\end{align*}
Since all dimensions of $\wvec$ are decoupled, each $w_i$ can be solved separately as a one dimensional optimization problem.  In fact, its solution enjoys a simple closed form \citep[][p. 384]{Passty79}:
\begin{align*}
  \wvec_i^* = \begin{cases}
    \frac{L g_i - \gamma \lambda}{\lambda+ L} & \text{if } \lambda < L g_i / \gamma \\
    0 & \text{if } \lambda \ge L g_i / \gamma
  \end{cases}.
\end{align*}

A more difficult version of elastic net is based on constraints, where in each iteration one needs to solve
\begin{align}
\label{eq:elas_net_per_iter}
  \min_{\wvec} \quad &\frac{1}{2} \nbr{\wvec - \gvec}^2  \\
  s.t. \quad &\gamma\nbr{\wvec}_1 + \frac{1}{2} \nbr{\wvec}_2^2 \le \lambda.
\end{align}

Clearly the optimal $w_i$ has the same sign as $g_i$, hence we can assume $g_i \ge 0$ without loss of generality.  Next we follow the same idea as in Section \ref{sec:proj_simplex} and reformulate \eqref{eq:elas_net_per_iter} into a one dimensional root finding problem.  First write out the Lagrangian:
\begin{align*}
  &\min_{\wvec} \max_{\lambda \ge 0} \frac{1}{2} \nbr{\wvec - \gvec}^2 + \lambda \rbr{\gamma \nbr{\wvec}_1 + \frac{1}{2} \nbr{\wvec}_2^2 - r} \\
  \Leftrightarrow \ \ & \max_{\lambda \ge 0} \min_{\wvec} \underbrace{\frac{1}{2} \nbr{\wvec - \gvec}^2 + \lambda \rbr{\gamma \nbr{\wvec}_1 + \frac{1}{2} \nbr{\wvec}_2^2 - r}}_{=:f_{\lambda}(\wvec)}
\end{align*}
where the equivalence is based on a simple check of Slater's condition.  For each fixed $\lambda$, the optimal $\wvec$ can be found by setting the subgradient to $\zero$.
\begin{align*}
  \frac{\partial}{\partial w_i} f_{\lambda}(\wvec) = w_i - g_i + \lambda w_i + \lambda \gamma \cdot \begin{cases}
    1 & \text{if } w_i > 0 \\
    -1 & \text{if } w_i < 0 \\
    [-1, 1] & \text{if } w_i = 0
  \end{cases}.
\end{align*}
Therefore, the optimal solution is
\begin{align}
\label{eq:elas_net_lambda_to_w}
w^*_i = \begin{cases}
  \frac{g_i - \lambda \gamma}{1 + \lambda} & \text{if } \lambda \le g_i / \gamma \\
  0 & \text{if } \lambda > g_i / \gamma
\end{cases}.
\end{align}
Plugging it back to $f_{\lambda}(\wvec)$ we get the one dimensional optimization problem in $\lambda$:
\begin{align*}
  H(\lambda) = -r\lambda \gamma + \sum_{i=1}^p \begin{cases}
    \frac{\lambda (-\lambda \gamma^2 + g_i^2 + 2 g_i \gamma)}{2(1 + \lambda)} &\text{if } \lambda \le g_i / \gamma  \\
    \frac{g_i^2}{2} & \text{if } \lambda > g_i / \gamma
  \end{cases}.
\end{align*}
It is easy to see that $H(\lambda)$ is concave in $[-1, \infty)$.  So its maximizer is $0$ or the root of its derivative.
\begin{align*}
  H'(\lambda) &= - r \gamma + \sum_{i=1}^p \begin{cases}
    \frac{-\gamma^2 \lambda^2 - 2 \gamma^2 \lambda + 2 g_i \lambda + g_i^2}{2(\lambda + 1)^2}  & \text{if } \lambda \le g_i / \gamma \\
    0 & \text{if } \lambda > g_i / \gamma
  \end{cases} \\
  &= \frac{1}{2(\lambda + 1)^2} h(\lambda)
\end{align*}
where
\begin{align*}
  h(\lambda) &= -2 \gamma r \rbr{\lambda + 1}^2 \\
  & \quad + \sum_{i=1}^p \begin{cases}
    -\gamma^2 \rbr{\lambda + 1}^2 + \rbr{\gamma + g_i}^2 & \text{if } \lambda \le g_i / \gamma \\
    0 & \text{if } \lambda > g_i / \gamma
  \end{cases}.
\end{align*}
Clearly $h(\lambda)$ is monotonically decreasing in $[0, \infty)$.  So $H(\lambda)$ is maximized at $0$ if $h(0) \le 0$, \ie
\[
r \ge -\frac{p}{2} \gamma + \frac{1}{2 \gamma} \sum_{i=1}^p (\gamma + g_i)^2.
\]
Otherwise, $h(\lambda)$ has a root in $[0, \infty)$.  Since it monotonically decreases, the binary search trick in Section \ref{sec:proj_simplex} can also be applied here.  Once it is determined that the optimal $\lambda$ is less than a set of $g_i$, these quadratics can be aggregated by summing up the $\gamma \rbr{g_i + \frac{1}{\gamma}}^2$.  Finally, $\wvec$ is recovered by \eqref{eq:elas_net_lambda_to_w}.

\subsection{Optimizing the Prox-function}

When smoothing $\gstar$, we have often used prox-function $d(\xvec) = \sum_i x^2_i$.  However, it is possible to improve the condition number by using an optimized prox-function.  This idea was used by \citep{Nesterov07} where the \lcg\ constant of a quadratic $\frac{1}{2} \xvec^{\top} H \xvec + \inner{\bvec}{\xvec}$ ($\xvec \in \RR^p$) is upper bounded by $p$ when the norm is chosen as $\nbr{\xvec}^2 = \sum_i H_{ii} x_i^2$, \ie\ rescaling all dimensions.

Using this idea, we show in this section that a data dependent optimization of the prox-function can improve the condition number of the smoothed variant of the primal objective as discussed in Section \ref{sec:ml_sol_primal_smooth}.

Let us consider the following simple but illustrative example.  Suppose $Q_2 = [0, c]^n$, $g(\uvec) = -\sum_{i=1}^n u_i$.  Denote $A = (\avec_1, \ldots, \avec_n)^{\top}$.  We adopt a prox-function
\[
d(\uvec) = \frac{1}{2} \sum_{i=1}^n b_i^2 u_i^2,
\]
and we can derive $\gstar_{\mu} = (g + \mu d)^{\star}$.  The diameter of $Q_2$ under $d$ is
\begin{align}
\label{eq:diameter_q2_opt_bi}
  D = \max_{\xvec \in Q_2} d(\uvec) = \frac{c^2}{2} \sum_i b_i^2.
\end{align}
For any prescribed accuracy $\epsilon > 0$, we first choose $\mu$ such that $\mu D \le \epsilon$, \ie\ $\mu \le \frac{\epsilon}{D}$.  Then our goal is to find the $b_i$ which minimizes the Lipschitz constant of the gradient of $\gstar_{\mu}(A \wvec)$ wrt $\wvec$.

First compute $\gstar_{\mu}(A \wvec)$:
\[
\gstar_{\mu}(A \wvec) = \sup_{\uvec \in Q_2} \inner{A \wvec}{\uvec} + \sum_i u_i - \frac{\mu}{2} \sum_i b^2_i u_i^2.
\]
It is easy to see that the optimal $\uvec^*$ is
\[
u^*_i = \text{MED}\rbr{0, c, \frac{\inner{\avec_i}{\wvec} + 1}{\mu b_i^2}}.
\]
where $\text{MED}$ stands for the median.  So the gradient of $\gstar_{\mu}(A \wvec)$ wrt $\wvec$ can be calculated by
\begin{align*}
\frac{\partial}{\partial \wvec} \gstar_{\mu}(A \wvec) \! = \! \! \sum_{i=1}^n \gvec_i, \ \text{where }
\gvec_i \! = \! \begin{cases}
    0 & \text{if } u^*_i = 0 \\
    \avec_i & \text{if } u^*_i = c \\
    \frac{\inner{\avec_i}{\wvec} + 1}{\mu b_i^2} \avec_i & \text{else}
  \end{cases}.
\end{align*}
So the Hessian of $\gstar_{\mu}(A \wvec)$ in $\wvec$ can only take value in
\[
H_{\delta} = \frac{1}{\mu} \sum_i  \frac{\delta_i}{b_i^2} \avec_i \avec_i^{\top}, \where \delta_i \in \cbr{0, 1}.
\]
Now for any $\wvec_1, \wvec_2 \in Q_1$, denote $l = \nbr{\wvec_1 - \wvec_2}$ and $\vvec = (\wvec_2 - \wvec_1) / l$ (so $\nbr{\vvec} = 1$).  Denote $\hvec(t) = \frac{\partial}{\partial \wvec} \gstar_{\mu} (A(\wvec_1 + t \vvec))$.  So
\begin{align*}
&\frac{\nbr{\frac{\partial}{\partial \wvec} \gstar_{\mu}(A \wvec_1) - \frac{\partial}{\partial \wvec} \gstar_{\mu}(A \wvec_2)}}{\nbr{\wvec_1 - \wvec_2}} = \frac{\nbr{\hvec(l) - \hvec(0)}}{l} \\
&\overset{(a)}{\le} \nbr{\grad \hvec(\xi)} \overset{(b)}{=} \nbr{H_{\delta} \vvec} \overset{(c)}{\le} \lambda_{\max} (H_{\delta}).
\end{align*}
Here, (a) is by the mean value theorem with $\xi \in [0, l]$. (b) is by the chain rule and the $\delta$ for $H_{\delta}$ is determined by $\xi$.  (c) is because for any real positive semi-definite matrix $H$, $\max_{\nbr{\vvec}=1} \nbr{H \vvec} = \lambda_{\max}(H)$.

Clearly $\lambda_{\max}(H_{\delta})$ is maximized when all $\delta_i=1$ and let us call it $H_{\one}$. In conjunction with \eqref{eq:diameter_q2_opt_bi} and \eqref{eq:choice_smooth_mu}, we minimize $\lambda_{\max} (H_{\one})$ wrt $b_i$:
\begin{align}
&\min_{b_i} \lambda_{\max} (H_{\one}) = \min_{b_i} \frac{D}{\epsilon} \lambda_{\max} \rbr{\sum_i \frac{1}{b_i^2} \avec_i \avec_i^{\top}} \nonumber \\
&= \frac{c^2}{2 \epsilon} \min_{b_i} \cbr{\rbr{\sum_i b_i^2} \max_{\vvec: \nbr{\vvec}=1} \vvec^{\top} \sum_i b_i^{-2} \avec_i \avec_i^{\top} \vvec} \nonumber \\
\label{eq:obj_fix_bi_one}
& = \frac{c^2}{2 \epsilon} \max_{\nbr{\vvec}=1} \min_{b_i} \rbr{\sum_i b_i^2} \sum_i b_i^{-2} (\avec_i^{\top} \vvec)^2 \\
\label{eq:obj_max_eig_opt}
&= \frac{c^2}{2 \epsilon} \max_{\nbr{\vvec}=1} \rbr{\sum_i \abr{\avec_i^{\top} \vvec}}^2 \text{(Cauchy-Schwartz)} \\
& \Leftrightarrow \max_{\nbr{\vvec}=1} \sum_i \abr{\avec_i^{\top} \vvec}. \nonumber
\end{align}
However, this last optimization problem is hard so we maximize an approximation of it
\begin{align*}
  \max_{\nbr{\vvec}=1} \sum_i \abr{\avec_i^{\top} \vvec}^2 = \max_{\nbr{\vvec}=1} \vvec^{\top} \rbr{\sum_i \avec_i \avec_i^{\top}} \vvec.
\end{align*}
The solution is the eigenvector $\vvec^*$ corresponding to the maximum eigenvalue of $\sum_i \avec_i \avec_i^{\top}$.  Then $b^2_i$ can be recovered by using the optimality condition of Cauchy-Schwartz in \eqref{eq:obj_max_eig_opt}:
\[
b_i^2 = \abr{\avec_i^{\top} \vvec^*}.
\]
Note \citep{ZhoTaoWu10} used the heuristic that $b_i^2 = \nbr{\avec_i}_{\infty}$.  We can also compare with the isotropic $d$, \ie\ $b_i = 1$.  Simply plug $b_i = 1$ into \eqref{eq:obj_fix_bi_one}, and we get
\[
\frac{c^2}{2 \epsilon} n \sum_i (\avec_i^{\top} \vvec)^2
\]
which must be greater than or equal to
\[
\frac{c^2}{2 \epsilon} \rbr{\sum_i \abr{\avec_i^{\top} \vvec}}^2
\]
in \eqref{eq:obj_max_eig_opt} for all $\vvec$.  Therefore with a fixed $\epsilon$, our approach does possibly reduce the \lcg\ constant of $\gstar(A\wvec)$ in $\wvec$.  The maximum eigenvector can be found very efficiently by using the power iteration, and usually 5 to 6 iterations is enough.

\section{Experimental Results}
\label{sec:experiment}

We will present the experimental result in a later version.

\section{Discussion}


A lot of efforts (\eg, \citep{HuKwoPan09,Xiao10}) have been devoted to making Nesterov's method online, \ie\ use a stochastic gradient oracle and preserve the $1/\sqrt{\epsilon}$ rate of convergence for the expected gap.  This however turns out hopeless as was shown by the lower bounds in \citep{Lan10,GhaLan10}.

\bibliographystyle{unsrtnat}
\bibliography{./bibfile}

\appendix

\section{Concepts from Convex Analysis}
\label{sec:app:convex_ana}

The following four concepts from convex analysis are used in the paper.
\begin{definition}
  Suppose a convex function $f: \RR^n \to \RRbar$ is finite at
  $\wb$. Then a vector $\gvec \in \RR^n$ is called a subgradient of
  $f$ at $\wb$ if, and only if,
  \begin{align*}
    f(\wb') \geq f(\wb) + \inner{\wb' - \wb}{\gvec} \qquad
    \text{for all } \ \wb'.
  \end{align*}
  The set of all such $\gvec$ vectors is called the subdifferential of
  $f$ at $\wb$, denoted by $\partial_{\wb} f(\wb)$. For any
  convex function $f$, $\partial_{\wb} f(\wb)$ must be nonempty.
  Furthermore if it is a singleton then $f$ is said to be
  \emph{differentiable} at $\wb$, and we use $\nabla f(\wb)$ to
  denote the gradient.
\end{definition}
\begin{definition}
  \label{def:strong-convex}
  A convex function $f:\RR^{n} \to \RRbar$ is strongly
  convex with respect to a norm $\nbr{\cdot}$ if there exists a constant
  $\sigma > 0$ such that $f - \frac{\sigma}{2} \|\cdot\|^{2}$ is convex.
  $\sigma$ is called the modulus of strong convexity of $f$, and for
  brevity we will call $f$ $\sigma$-strongly convex.
\end{definition}
\begin{definition}
  \label{def:lip-cont-grad}
  Suppose a function $f: \RR^n \to \RRbar$ is differentiable on $Q
  \subseteq \RR^n$.  Then $f$ is said to have Lipschitz continuous
  gradient (\lcg) with respect to a norm $\|\cdot\|$ if there exists a
  constant $L$ such that
  \begin{align*}
    \nbr{\nabla f(\wb) - \nabla f(\wb')}^* \leq L \nbr{ \wb - \wb'} \quad
    \forall\ \wb, \wb'\in Q.
  \end{align*}
For brevity, we will call $f$ $L$-\lcg.
\end{definition}
\begin{definition}
  \label{def:fenchel_dual}
  The Fenchel dual of a function $f: \RR^n \to \RRbar$, is a function $f^{\star}:
  \RR^n \to \RRbar$ defined by
  \begin{align*}
    f^{\star}(\wb^{\star}) = \sup_{\wb \in \RR^n}
    \cbr{\inner{\wb}{\wb^{\star}} - f(\wb)}
  \end{align*}
\end{definition}
Strong convexity and \lcg\ are related by Fenchel duality according to
the following lemma:
\begin{theorem}[{\citep[][Theorem 4.2.1 and 4.2.2]{HirLem93}}]
$\phantom{.}$
\label{theorem:SC_LCG}
\begin{enumerate}
\item If $f: \RR^n \to \RRbar$ is $\sigma$-strongly convex, then $f^{\star}$ is
  finite on $\RR^n$ and $f^{\star}$ is $\frac{1}{\sigma}$-\lcg.
\item If $f: \RR^n \to \RR$ is convex, differentiable on $\RR^n$, and $L$-\lcg,
  then $f^{\star}$ is $\frac{1}{L}$-strongly convex.
\end{enumerate}
\end{theorem}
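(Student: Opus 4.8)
The plan is to prove the two assertions via the Fenchel conjugate correspondence, following \citep[][Ch.~X]{HirLem93}; I phrase the argument for a general norm $\nbr{\cdot}$ on $\RR^n$ with dual norm $\nbr{\cdot}^*$, so that the form actually used in the body (strong convexity w.r.t.\ $\nbr{\cdot}$ $\leftrightarrow$ \lcg\ w.r.t.\ $\nbr{\cdot}^*$) is covered, the Euclidean case being a special case. For part~1, fix any $\xvec_0$ and $\gvec_0 \in \partial f(\xvec_0)$: strong convexity gives $f(\xvec) \ge f(\xvec_0) + \inner{\gvec_0}{\xvec - \xvec_0} + \frac{\sigma}{2}\nbr{\xvec - \xvec_0}^2$, so $f$ grows superlinearly and $f^{\star}(\svec) = \sup_{\xvec}\{\inner{\svec}{\xvec} - f(\xvec)\}$ is finite and attained for every $\svec$. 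Strict convexity of $f$ makes the maximizer unique, call it $\xvec(\svec)$, characterized by $\svec \in \partial f(\xvec(\svec))$; standard conjugate duality (the envelope argument already invoked in the paper) then yields differentiability of $f^{\star}$ with $\grad f^{\star}(\svec) = \xvec(\svec)$. For the Lipschitz bound, take $\svec_1, \svec_2$, put $\xvec_i = \grad f^{\star}(\svec_i)$ so $\svec_i \in \partial f(\xvec_i)$, add the two strong-convexity subgradient inequalities at $\xvec_1$ and $\xvec_2$ to obtain $\inner{\svec_1 - \svec_2}{\xvec_1 - \xvec_2} \ge \sigma \nbr{\xvec_1 - \xvec_2}^2$, and bound the left side by $\nbr{\svec_1 - \svec_2}^* \nbr{\xvec_1 - \xvec_2}$; dividing gives $\nbr{\grad f^{\star}(\svec_1) - \grad f^{\star}(\svec_2)} \le \frac{1}{\sigma}\nbr{\svec_1 - \svec_2}^*$.

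For part~2, the plan is to pass the quadratic upper bound of $f$ through the conjugate. First I would derive the ``descent lemma'': from $L$-\lcg\ and convexity, writing $f(\xvec) - f(\yvec) = \int_0^1 \inner{\grad f(\yvec + t(\xvec - \yvec))}{\xvec - \yvec}\,dt$ and estimating the integrand with the Lipschitz bound and the dual-norm inequality gives $f(\xvec) \le g_{\yvec}(\xvec) := f(\yvec) + \inner{\grad f(\yvec)}{\xvec - \yvec} + \frac{L}{2}\nbr{\xvec - \yvec}^2$ for all $\xvec, \yvec$, with equality at $\xvec = \yvec$. Conjugation reverses the inequality, $f^{\star} \ge g_{\yvec}^{\star}$, and a direct computation (complete the square, use that the conjugate of $\frac{L}{2}\nbr{\cdot}^2$ is $\frac{1}{2L}\nbr{\cdot}^{*2}$, and the Fenchel equality $f^{\star}(\grad f(\yvec)) = \inner{\grad f(\yvec)}{\yvec} - f(\yvec)$) yields $g_{\yvec}^{\star}(\svec) = f^{\star}(\grad f(\yvec)) + \inner{\yvec}{\svec - \grad f(\yvec)} + \frac{1}{2L}\nbr{\svec - \grad f(\yvec)}^{*2}$. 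Hence, with $\svec_{\yvec} := \grad f(\yvec)$ and noting $\yvec \in \partial f^{\star}(\svec_{\yvec})$,
\[
f^{\star}(\svec) \ \ge\ f^{\star}(\svec_{\yvec}) + \inner{\yvec}{\svec - \svec_{\yvec}} + \tfrac{1}{2L}\nbr{\svec - \svec_{\yvec}}^{*2} \qquad \forall\,\svec .
\]

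Since $f$ is differentiable everywhere, $\dom \partial f^{\star} = \{\svec_{\yvec} : \yvec \in \RR^n\}$, so the displayed inequality is exactly the defining subgradient inequality for $\frac{1}{L}$-strong convexity of $f^{\star}$ at every point where $f^{\star}$ is subdifferentiable; combined with lower semicontinuity of $f^{\star}$ and the density of $\dom \partial f^{\star}$ in $\ri(\dom f^{\star})$, this upgrades to $f^{\star}$ being $\frac{1}{L}$-strongly convex w.r.t.\ $\nbr{\cdot}^*$ (with $f^{\star} \equiv +\infty$ off its domain). The main obstacle is this part~2: establishing the descent lemma cleanly in a non-Euclidean norm, and --- more delicately --- the final upgrade, i.e.\ arguing that a quadratic-plus-affine minorant that is tight at every point of $\ran \grad f$ forces global strong convexity of $f^{\star}$ even on the portion of $\dom f^{\star}$ not met by the gradient map; this rests on the standard convex-analytic facts that subdifferentials are nonempty on relative interiors and that a closed convex function equals the supremum of its (here quadratic-plus-affine) minorants. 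Part~1 is routine once differentiability of $f^{\star}$ is secured.
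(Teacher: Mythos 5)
The paper offers no proof of this statement: it is imported verbatim from \citep[][Theorems 4.2.1 and 4.2.2]{HirLem93} as a known fact, so there is no in-paper argument to compare yours against. Your proposal is the standard proof of the cited result and is correct. Part~1 --- finiteness of $f^{\star}$ from the quadratic lower bound, uniqueness of the maximizer giving $\grad f^{\star}(\svec) = \xvec(\svec)$, and then adding the two strengthened subgradient inequalities and applying the dual-norm Cauchy--Schwarz inequality --- is exactly the textbook argument. Part~2 via conjugating the descent-lemma majorant $g_{\yvec}$ is also the standard route (it is essentially the Kakade--Shalev-Shwartz--Tewari proof of strong-convexity/strong-smoothness duality), and you correctly isolate the one delicate point: the strengthened subgradient inequality is obtained a priori only at points of the range of $\grad f$, which equals $\dom \partial f^{\star}$, and must be propagated to the rest of $\dom f^{\star}$. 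Your outline for that step is sound: since $\ri(\dom f^{\star}) \subseteq \dom \partial f^{\star}$, applying the strengthened subgradient inequality at the midpoint of a segment yields the two-point strong-convexity inequality on $\ri(\dom f^{\star})$, and closedness of $f^{\star}$ (continuity along segments approaching the relative boundary from the relative interior) extends it to all of $\dom f^{\star}$; stating this via ``continuity along segments'' is cleaner than via ``supremum of minorants''. One caveat for your general-norm formulation: what this argument delivers in part~2 is strong convexity in the subgradient/two-point sense (the style of Definition~\ref{def:gen_sc}), not in the sense of Definition~\ref{def:strong-convex} (that $f^{\star} - \frac{1}{2L}(\nbr{\cdot}^*)^2$ be convex); the two notions coincide for the Euclidean norm, which is the setting of the cited theorem, but differ for general norms such as $\ell_1$ or $\ell_\infty$.
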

Finally, the following lemma gives a useful characterization of the
minimizer of a convex function.
\begin{lemma}[{\citep[][Theorem 2.2.1]{HirLem93}}]
  A convex function $f$ is minimized at $\wb^*$ if, and only if, $\zero
  \in \partial f(\wb^*)$. Furthermore, if $f$ is strongly convex, then
  its minimizer is unique.
\end{lemma}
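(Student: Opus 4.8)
The plan is to treat the two assertions in turn; both reduce to unwinding definitions, and the only step that needs thought is the role of the norm in the uniqueness claim. For the equivalence, I would specialize the defining inequality of a subgradient (Appendix~\ref{sec:app:convex_ana}) to the vector $\zero$: if $\zero \in \partial f(\wb^*)$ then $f(\wb') \ge f(\wb^*) + \inner{\wb' - \wb^*}{\zero} = f(\wb^*)$ for every $\wb'$, so $\wb^*$ is a global minimizer; conversely, if $\wb^*$ minimizes $f$ then $f(\wb') \ge f(\wb^*) = f(\wb^*) + \inner{\wb' - \wb^*}{\zero}$ for all $\wb'$, which is exactly the statement that $\zero$ is a subgradient of $f$ at $\wb^*$. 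This uses nothing beyond the definition --- not even convexity of $f$ --- and each implication uses only one side of the subgradient inequality.

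For uniqueness of the minimizer under strong convexity I would build on the previous step. Assume $f$ is $\sigma$-strongly convex with respect to $\nbr{\cdot}$ in the sense of Definition~\ref{def:strong-convex}; with $d(\cdot) := \half\nbr{\cdot}^2$ this amounts to $\sigma$-strong convexity of $f$ with respect to $d$ in the sense of Definition~\ref{def:gen_sc} (modulo the caveat in the next paragraph). Suppose $\wb^*$ is a minimizer --- existence is not claimed, only uniqueness. Since $\zero \in \partial f(\wb^*)$ by the first part, substituting $\gvec = \zero$ and $\yvec = \wb^*$ into the strong-convexity inequality of Definition~\ref{def:gen_sc} yields, for every $\wb \neq \wb^*$,
\[
f(\wb) \;\ge\; f(\wb^*) + \inner{\zero}{\wb - \wb^*} + \sigma\,\Delta_d(\wb, \wb^*) \;=\; f(\wb^*) + \frac{\sigma}{2}\,\nbr{\wb - \wb^*}^2 \;>\; f(\wb^*),
\]
so $\wb$ cannot also minimize $f$; this is exactly Property~\ref{prpty:sc_min_quad} specialized to $d = \half\nbr{\cdot}^2$.

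The one point that requires care is the identification of ``$f - \frac{\sigma}{2}\nbr{\cdot}^2$ convex'' with the pointwise strong-convexity inequality carrying a strictly positive quadratic remainder. This is immediate for the Euclidean norm --- the setting of the cited result \citep[][Theorem~2.2.1]{HirLem93}, and the one in which $\half\nbr{\cdot}^2$ is strictly convex --- via the subdifferential sum rule applied to $f = (f - \frac{\sigma}{2}\nbr{\cdot}^2) + \frac{\sigma}{2}\nbr{\cdot}^2$, which is valid because the quadratic summand is finite and continuous (Moreau--Rockafellar). For a general norm in the sense of Definition~\ref{def:strong-convex} the step additionally needs strict convexity of $\nbr{\cdot}^2$. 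I expect this bookkeeping to be entirely routine, and I do not anticipate any substantive obstacle.
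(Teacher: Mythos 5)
The paper does not prove this lemma at all; it is quoted verbatim from \citep[][Theorem 2.2.1]{HirLem93} as background, so there is no in-paper argument to compare against. Your proof of the equivalence is the standard one and is airtight: both directions are literally the subgradient inequality specialized to $\gvec = \zero$, and you are right that convexity of $f$ is not even needed there.

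On the uniqueness claim, your argument is correct in the setting the citation actually covers (Euclidean norm, or more generally strong convexity in the subgradient-inequality sense of Definition~\ref{def:gen_sc} with $d = \half\nbr{\cdot}_2^2$, where Property~\ref{prpty:sc_min_quad} gives $f(\wb) \ge f(\wb^*) + \frac{\sigma}{2}\nbr{\wb-\wb^*}^2 > f(\wb^*)$ immediately). But the caveat you raise in your last paragraph and then dismiss as ``routine bookkeeping'' is in fact substantive: under the paper's Definition~\ref{def:strong-convex} with a general norm whose square is not strictly convex, the uniqueness assertion is simply false, not merely harder to prove. Concretely, take $f(\xvec) = \frac{\sigma}{2}\nbr{\xvec}_\infty^2 - c\,x_1$ on $\RR^2$ with $c>0$: then $f - \frac{\sigma}{2}\nbr{\cdot}_\infty^2$ is affine, hence convex, so $f$ is $\sigma$-strongly convex in the sense of Definition~\ref{def:strong-convex}; yet $f(\xvec) \ge \frac{\sigma}{2}x_1^2 - c x_1 \ge -c^2/(2\sigma)$ with equality exactly on the segment $\cbr{(c/\sigma, x_2) : \abr{x_2} \le c/\sigma}$, so the minimizer is a nondegenerate segment. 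So the correct reading of the lemma is either to restrict to the Euclidean norm (the setting of the cited theorem) or to interpret ``strongly convex'' via the pointwise inequality with quadratic remainder (equivalently, Property~\ref{prpty:two_point_sc}); in either of those readings your two-line uniqueness argument goes through verbatim. The implication from Definition~\ref{def:strong-convex} to that pointwise inequality is where the argument genuinely breaks for non-strictly-convex squared norms, and no amount of sum-rule bookkeeping repairs it.
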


\end{document}